%% file: main.tex
\documentclass[10pt,leqno]{article}

\usepackage[
    letterpaper,
    margin=1in
]{geometry}

\usepackage{setspace}
\usepackage{indentfirst}

\usepackage{amsmath}
\usepackage{amssymb}
\usepackage{amsfonts}
\usepackage{amsthm}
\usepackage{mathrsfs}
\usepackage{nicefrac}

\newtheorem{lemma}{Lemma}
\newtheorem{proposition}{Proposition}

\theoremstyle{definition}
\newtheorem{definition}{Definition}

\theoremstyle{remark}
\newtheorem{remark}{Remark}

\usepackage{graphicx}
\usepackage{booktabs}
\usepackage{tabularx}
\usepackage{multirow}
\usepackage{makecell}
\usepackage{float}
\usepackage{subcaption}
\usepackage{algorithm}
\usepackage{algpseudocode}
\usepackage{listings}

\usepackage{enumitem}
\usepackage{csquotes}
\usepackage{textcomp}

\usepackage{xcolor}
\usepackage{tikz}

\usepackage[title]{appendix}

\usepackage[authoryear,longnamesfirst]{natbib}

\usepackage{authblk}

\usepackage{hyperref}

\hypersetup{
    colorlinks=true,
    linkcolor=black,
    citecolor=black,
    urlcolor=black,
    filecolor=black
}

\usepackage{lineno}

\begin{document}

\title{Joint Spatiotemporal Spectral Neural Operators for Learning PDEs on Irregular Domains}

\author[1]{Abdolmehdi Behroozi}
\author[1]{Chaopeng Shen\thanks{Corresponding author: \texttt{cshen@engr.psu.edu}}}

\affil[1]{Department of Civil and Environmental Engineering,
Penn State University,
University Park, PA 16802, USA}

\date{\today}

\maketitle

\begin{center}
\small
Abdolmehdi Behroozi: ORCID 0000-0002-7663-8727

Chaopeng Shen: ORCID 0000-0002-0685-1901
\end{center}

\input{sections/abstract}

\medskip

\noindent\textbf{Keywords:}
graph spectral neural operators,
spatiotemporal learning,
irregular domains,
partial differential equations,
scientific machine learning

\input{sections/intro}

\input{sections/methods}

\input{sections/Experiments}

\input{sections/discussion}

\bibliographystyle{plainnat}
\bibliography{ref}

\clearpage
\appendix

\input{sections/appendix1}

\section*{CRediT authorship contribution statement}

\noindent
\textbf{Abdolmehdi Behroozi:}
Conceptualization, Methodology, Software, Formal analysis,
Investigation, Data curation, Visualization,
Writing -- original draft.

\medskip

\noindent
\textbf{Chaopeng Shen:}
Conceptualization, Supervision, Project administration,
Funding acquisition, Writing -- review and editing.

\end{document}

%% file: sections/abstract.tex
\begin{abstract}
Learning solution operators for partial differential equations (PDEs) on irregular and geometry-dependent domains remains a central challenge in scientific machine learning. While spectral methods provide strong inductive biases for modeling global interactions, they are typically limited to regular domains, and existing neural approaches often require domain warping, interpolation, or costly geometric embeddings. We introduce the \textbf{Graph Spectral Neural Operator (GSNO)}, a neural operator that combines spatial graph spectral decompositions with temporal Fourier transforms through a unified space--time spectral kernel. This formulation enables globally coherent operator learning on non-Cartesian discretizations without domain warping or autoregressive rollouts. By replacing learned geometric embeddings with a graph Laplacian spectral basis, GSNO provides geometry-aware spectral learning with low parameter complexity. Across steady and unsteady PDE benchmarks on irregular and geometry-dependent domains, GSNO achieves strong accuracy with reduced runtime and parameter counts, while demonstrating robust zero-shot generalization across mesh resolutions and geometry families.
\end{abstract}

%% file: sections/intro.tex
\section{Introduction}

Many problems in science and engineering involve solving complex partial differential equations (PDEs) repeatedly for varying parameters, as seen in fluid dynamics, structural analysis, and geophysical modeling. Capturing multiscale dynamics often requires fine spatial and temporal resolutions, making classical solvers computationally prohibitive; for instance, simulating pollutant transport in irregular terrain or flow in fractured porous media can require thousands of expensive forward solves \citep{palais2009differential}.

\textbf{Conventional solvers vs. data-driven approaches.} 
Traditional numerical techniques such as the finite difference method (FDM), finite volume method (FVM), and finite element method (FEM) \citep{leveque2007finite} offer high precision but suffer from poor scalability as resolution increases, forcing a difficult trade-off between accuracy and computational cost \citep{blechschmidt2021three}. Conversely, data-driven models learn direct mappings from input parameters to solutions \citep{rudy2017data, xiao2024fourier}, providing speedups of several orders of magnitude over traditional solvers once trained \citep{raissi2019physicsinformed, kovachki2023neural}. Recent progress has introduced \textit{neural operators} (NOs), models designed to learn mappings between infinite-dimensional function spaces. Unlike standard neural networks tied to fixed grids, neural operators are mesh-invariant and can generalize across different resolutions. Leading frameworks such as Fourier Neural Operators (FNO) \citep{li2021fourier} have achieved state-of-the-art results via efficient spectral learning, though they are primarily optimized for structured, rectangular domains.

\textbf{Limitations on irregular domains.} 
Most neural operator architectures struggle with the irregular geometries---such as propagating cracks, complex airfoil contours, or patient-specific anatomies---that are fundamental to real-world engineering. In these cases, regular grid approximations fail to capture critical boundary physics and fine-scale geometric details, necessitating models that operate directly on unstructured meshes.

\textbf{Related work for irregular geometries.} 
Several recent methods extend operators to unstructured domains, yet each faces distinct limitations. \textbf{MGKN} \citep{li2020multipole} lacks explicit temporal modeling; \textbf{Geo-FNO} \citep{li2023fourier} relies on diffeomorphic mappings that fail in the presence of holes, sharp boundaries, or complex topologies; and \textbf{CORAL} \citep{serrano2023operator} lacks geometric priors like spectral bases, limiting its ability to capture long-range dependencies. While \textbf{GNOT} \citep{hao2023gnot} and \textbf{Transolver} \citep{wu2024transolver} use attention or token-clustering to handle irregularity, they lack explicit spectral formulations and may obscure fine-scale structures when learned groupings misalign. Notably, \textbf{Sp$^2$GNO} \citep{sarkar2025spatio} handles time autoregressively, which restricts long-range temporal correlations and leads to error accumulation. Finally, \textbf{AMG} \citep{li2025harnessing} introduces significant multi-graph complexity without the physics-grounded guarantees provided by a principled spectral operator. (See Appendix~\ref{appendix:baseline_comparison} for a detailed comparison).

\textbf{Our contributions.} 
We propose the \textbf{Graph Spectral Neural Operator (GSNO)}, a unified architecture for explicit spectral learning in both space and time on arbitrary domains. GSNO utilizes a parameter-free graph Laplacian basis derived from triangulated point clouds, providing a geometry-adaptive representation without added complexity. By combining Laplacian eigenvectors for space with a real-valued FFT for time, GSNO learns a single complex-valued kernel that captures global spatiotemporal dependencies. Our contributions are three-fold: (1) \textbf{Efficiency and Scalability:} By replacing recurrent modules and learned spatial embeddings with a unified spectral formulation, we achieve higher accuracy with fewer parameters and lower training costs. (2) \textbf{Superior Performance:} GSNO reaches state-of-the-art results across diverse PDE benchmarks, maintaining robustness against irregular geometries and multiscale dynamics. (3) \textbf{Zero-Shot Generalization:} The architecture naturally transfers to unseen meshes, resolutions, and discretizations without the need for retraining.
GSNO represents the first operator framework to unify space--time spectral learning on arbitrary geometries, establishing a principled standard for efficient, mesh-invariant PDE modeling.

%% file: sections/methods.tex
\section{Methodology}

We present the Graph Spectral Neural Operator (GSNO), a neural operator architecture for learning solution operators of parametric partial differential equations (PDEs) on irregular domains. The core novelty of GSNO is not only the use of graph spectral representations for irregular spatial domains, but the construction of a joint graph--temporal spectral operator that learns directly over coupled spatial and temporal frequencies. By combining graph Fourier transforms in space with classical Fourier transforms in time, GSNO captures global spatiotemporal interactions while remaining applicable to nonuniform and geometry-dependent discretizations.

\subsection{Neural Operator Framework}

Let \(D \subset \mathbb{R}^d\) be a bounded spatial domain, and let
\(\mathcal{A}=\mathcal{A}(D;\mathbb{R}^{d_a})\) and \(\mathcal{U}=\mathcal{U}(D;\mathbb{R}^{d_u})\)
be the input and output function spaces. Given training pairs
\(\{(a_j,u_j)\}_{j=1}^{N}\) with \(u_j=G^\dagger(a_j)\), we approximate
the solution operator \(G^\dagger:\mathcal{A}\rightarrow\mathcal{U}\) by a
parametric model \(G_\theta:\mathcal{A}\rightarrow\mathcal{U}\),
\(\theta\in\Theta\), trained via empirical loss minimization.
A neural operator lifts the input via \(v_0(x)=P(a(x))\), applies \(L\)
iterative layers,
\begin{equation}
    v_{\ell+1}(x)
    =
    \sigma\!\left(Wv_\ell(x)+(\mathcal{K}_\phi v_\ell)(x)\right),
    \qquad \ell=0,\ldots,L-1,
\end{equation}
and projects to the output via \(u(x)=Q(v_L(x))\), where \(P\) and \(Q\)
are lifting and projection networks, \(W\) is a pointwise linear map,
\(\sigma\) is a nonlinear activation, and \(\mathcal{K}_\phi\) is a
learnable global operator \cite{kovachki2023neural, behroozi2025sensitivity}.

\subsection{Graph Spectral Neural Operator}

GSNO instantiates \(\mathcal{K}_\phi\) as a joint graph--temporal spectral convolution. Unlike approaches that only apply spectral learning over space or rely on local graph message passing, GSNO projects latent fields into a coupled spectral domain defined by graph Laplacian eigenvectors in space and Fourier modes in time. This allows the model to learn nonlocal spatial structure and temporal dynamics within a single global operator. Let the spatial domain be discretized by nodes \(\{x_i\}_{i=1}^{N_s}\). For time-dependent problems, the latent feature tensor is \(v_\ell \in \mathbb{R}^{N_s \times T \times d_v}\), where \(N_s\) is the number of spatial nodes, \(T\) is the number of time steps, and \(d_v\) is the latent channel dimension. For time-independent problems, the same formulation reduces to a purely spatial operator by removing the temporal Fourier transform.

\paragraph{Graph construction and spectral basis.}
We represent the spatial discretization as an undirected geometric graph \(G=(V,E)\), where \(V=\{x_i\}_{i=1}^{N_s}\). The edge set \(E\) is obtained from the available mesh connectivity or constructed using triangulation-based connectivity in 2D and tetrahedralization-based connectivity in 3D. Compared with \(k\)-nearest-neighbor graphs, this construction follows the geometric adjacency of the discretization, provides more isotropic local connectivity,avoids arbitrary neighborhood choices, and better preserves the mesh-induced structure used to define the graph Laplacian, as confirmed by the ablation study in Section~\ref{sec:component-analysis}.

The weighted adjacency matrix is defined as
\begin{equation}
    A_{ij} =
    \begin{cases}
    \exp\left(-\frac{\|x_i-x_j\|^2}{\sigma^2}\right),
    & (x_i,x_j)\in E,\\
    0, & \text{otherwise},
    \end{cases}
\end{equation}
and the normalized graph Laplacian is
\begin{equation}
    \tilde{L}=I-D^{-1/2}AD^{-1/2},
    \qquad
    D_{ii}=\sum_j A_{ij}.
\end{equation}
Its eigendecomposition is
\begin{equation}
    \tilde{L}=\Phi\Lambda\Phi^\top,
\end{equation}
where \(\Phi\in\mathbb{R}^{N_s\times N_s}\) contains the graph eigenvectors and
\(\Lambda=\mathrm{diag}(\lambda_1,\ldots,\lambda_{N_s})\) contains the eigenvalues. We retain the first \(k_s\) eigenvectors, \(\Phi_{k_s}\in\mathbb{R}^{N_s\times k_s}\), as the truncated graph Fourier basis. For a spatial feature field \(f\in\mathbb{R}^{N_s\times d_v}\), the graph Fourier transform projects the field onto the retained spectral basis, and the corresponding reconstruction maps it back to the physical mesh:
\begin{equation}
    \hat{f}=\Phi_{k_s}^{\top}f,
    \qquad
    f\approx \Phi_{k_s}\hat{f}.
\end{equation}

\paragraph{Joint graph--temporal spectral operator.}
The main operator in GSNO acts in a joint spectral domain rather than treating space and time as separate components. Given \(v_\ell\in\mathbb{R}^{N_s\times T\times d_v}\), GSNO first projects the spatial dimension into the graph spectral domain and then applies a temporal Fourier transform:
\begin{equation}
    \hat{v}_{st}
    =
    \mathcal{F}_t\left(\Phi_{k_s}^{\top}v_\ell\right)
    \in
    \mathbb{C}^{k_s\times k_t\times d_v},
\end{equation}
where \(\mathcal{F}_t\) denotes the temporal Fourier transform and
\(k_t=\lfloor T/2\rfloor+1\) is the number of retained temporal modes. A learnable complex-valued kernel \(R_\phi \in \mathbb{C}^{k_s\times k_t\times d_v\times d_v}\) mixes channels at each joint graph--temporal frequency:
\begin{equation}
    \hat{v}'_{st}(p,q,l)
    =
    \sum_{j=1}^{d_v}
    R_\phi(p,q,j,l)\,
    \hat{v}_{st}(p,q,j),
\end{equation}
where \(p\) and \(q\) index graph and temporal modes, respectively. The result is then mapped back to physical space and time:
\begin{equation}
    \mathcal{K}_\phi v_\ell
    =
    \Phi_{k_s}
    \mathcal{F}_t^{-1}
    \left(
    R_\phi \cdot
    \mathcal{F}_t(\Phi_{k_s}^{\top}v_\ell)
    \right).
\end{equation}
Equivalently, this operator applies the sequence:
graph Fourier projection \(\rightarrow\) temporal Fourier transform \(\rightarrow\) learnable joint spectral mixing \(\rightarrow\) inverse temporal Fourier transform \(\rightarrow\) inverse graph Fourier reconstruction. Thus, each GSNO layer combines a local residual mapping with a joint graph--temporal spectral operator acting in the compressed spectral domain:
\begin{equation}
    v_{\ell+1}
    =
    \sigma\!\left(
    \underbrace{W(v_\ell)}_{\text{Local Residual}}
    +
    \underbrace{
    \Phi_{k_s}
    \mathcal{F}_t^{-1}
    \left(
    R_\phi \cdot
    \mathcal{F}_t(\Phi_{k_s}^{\top}v_\ell)
    \right)
    }_{\text{Joint Graph--Temporal Spectral Operator } \mathcal{K}_\phi v_\ell}
    \right),
\end{equation}

where \(W\) is a learnable \(1\times1\) convolution acting locally on the latent channels. The spectral branch captures global coupled graph--temporal interactions, while the pointwise residual branch preserves local adaptivity on irregular meshes.

\begin{figure}
    \centering
    \includegraphics[width=0.7\textwidth]{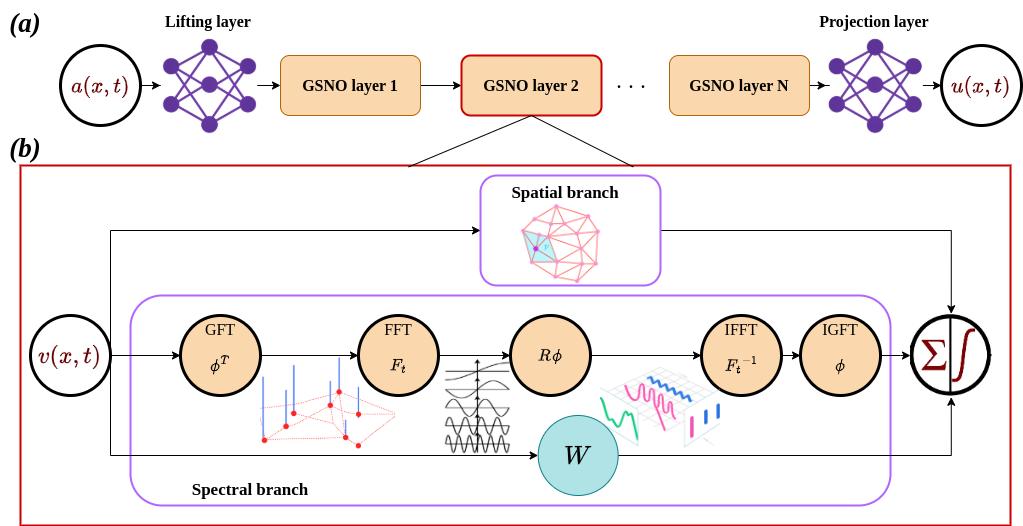}
    \captionsetup{font=footnotesize}
    \caption{
    Overview of the GSNO architecture. \textbf{(a)} The input coefficient \(a(x,t)\) is lifted to a latent space and processed by multiple GSNO layers, each combining graph Fourier transforms and temporal FFTs, before projection to the output \(u(x,t)\). \textbf{(b)} Each GSNO layer maps the latent field to a joint graph--temporal spectral domain, applies a learnable kernel \(R_\phi\), reconstructs the result in physical space and time, and combines it with a local residual branch.
    }
    \label{fig:gsno_architecture}
\end{figure}

\paragraph{Fixed- and varying-geometry settings.}
GSNO is designed to support both fixed-geometry and varying-geometry problems. For a mesh or point set $\mathcal{G}$, we construct the graph Laplacian $L_{\mathcal{G}}$ and compute its truncated eigendecomposition:
\begin{equation}
L_{\mathcal{G}}\Phi_{\mathcal{G},k}
=
\Phi_{\mathcal{G},k}\Lambda_{\mathcal{G},k}.
\end{equation}
where $\Phi_{\mathcal{G},k}\in\mathbb{R}^{N_s\times k}$ denotes the first $k$ graph-spectral modes. In fixed-geometry settings, $L_{\mathcal{G}}$ and $\Phi_{\mathcal{G},k}$ are computed once for the given mesh and reused throughout training and inference. In varying-geometry settings, each geometry $\mathcal{G}^{(i)}$ has its own Laplacian $L_{\mathcal{G}^{(i)}}$ and basis $\Phi_{\mathcal{G}^{(i)},k}$, while the learnable GSNO parameters are shared across all samples. Thus, the eigenbasis construction is a one-time preprocessing cost per geometry, not a trainable component of the model. This allows GSNO to operate across both fixed and varying geometries without changing the architecture.

\paragraph{Computational and spectral interpretation.} The graph spectral decomposition is a \emph{one-time offline preprocessing step}: the first $k_s$ eigenvectors of the normalized graph Laplacian are computed via LOBPCG~\cite{knyazev2001toward} once per geometry and reused across all training epochs and at inference. For fixed-geometry benchmarks a single decomposition suffices for the entire dataset; for varying-geometry benchmarks one decomposition is required per unique geometry. In both settings the amortized preprocessing cost represents a small fraction of per-epoch training time across all experiments, as confirmed by the timing results reported in Appendix~\ref{appendix:laplacian_eigendecomposition_cost}. This stands in contrast to learnable graph methods that recompute edge weights at every forward pass. During training and inference, the spectral operation depends only on the retained modes $k_s$ and $k_t$, avoiding dense operations over the full graph spectrum.

GSNO can be interpreted as a learnable spectral method on irregular geometries, following a projection--transformation--reconstruction structure in which the spectral coefficient transformation is learned from data through \(R_\phi\). Formal properties of the truncated basis, including projector invariance and approximation behavior, are provided in Appendix~\ref{app:theory}.



%% file: sections/Experiments.tex
\section{Numerical Experiments}

We evaluate GSNO on nine PDE systems spanning steady-state and time-dependent regimes: (i) steady-state Darcy flow;
(ii) Euler equations over a 2D airfoil, formulated as a single-step temporal forecast;
(iii) Pipe Flow;
(iv) Hyper-Elasticity;
(v) airflow around the Shape-Net 3D car;
(vi) unsteady Navier--Stokes equations in vorticity form;
(vii) unsteady Shallow Water equations; and
(viii) unsteady Burgers' equations in 2D and 3D. Among these benchmarks, the airfoil, 3D car, pipe, and plasticity cases are treated as varying-geometry problems across samples. Detailed problem setups---including domain geometry, meshes, initial conditions, boundary conditions, and preprocessing---are provided in Appendices~\ref{appendix:darcy}--\ref{appendix:3d_Car}. GSNO is benchmarked against state-of-the-art neural operators, while classical numerical solvers and public datasets are used only to generate ground-truth data. Inputs and outputs are min--max normalized, with spectral settings, model hyperparameters, and sensitivity analyses reported in Appendix~\ref{appendix:hyperparams}. All experiments were conducted on a single NVIDIA V100 GPU (32 GB); reported runtimes reflect per-epoch training time, with the one-time graph Laplacian eigendecomposition treated as offline preprocessing whose cost is shown to be negligible across all benchmarks in Appendix~\ref{appendix:laplacian_eigendecomposition_cost}.

\begin{figure}
  \centering
  \includegraphics[width=0.95\linewidth]{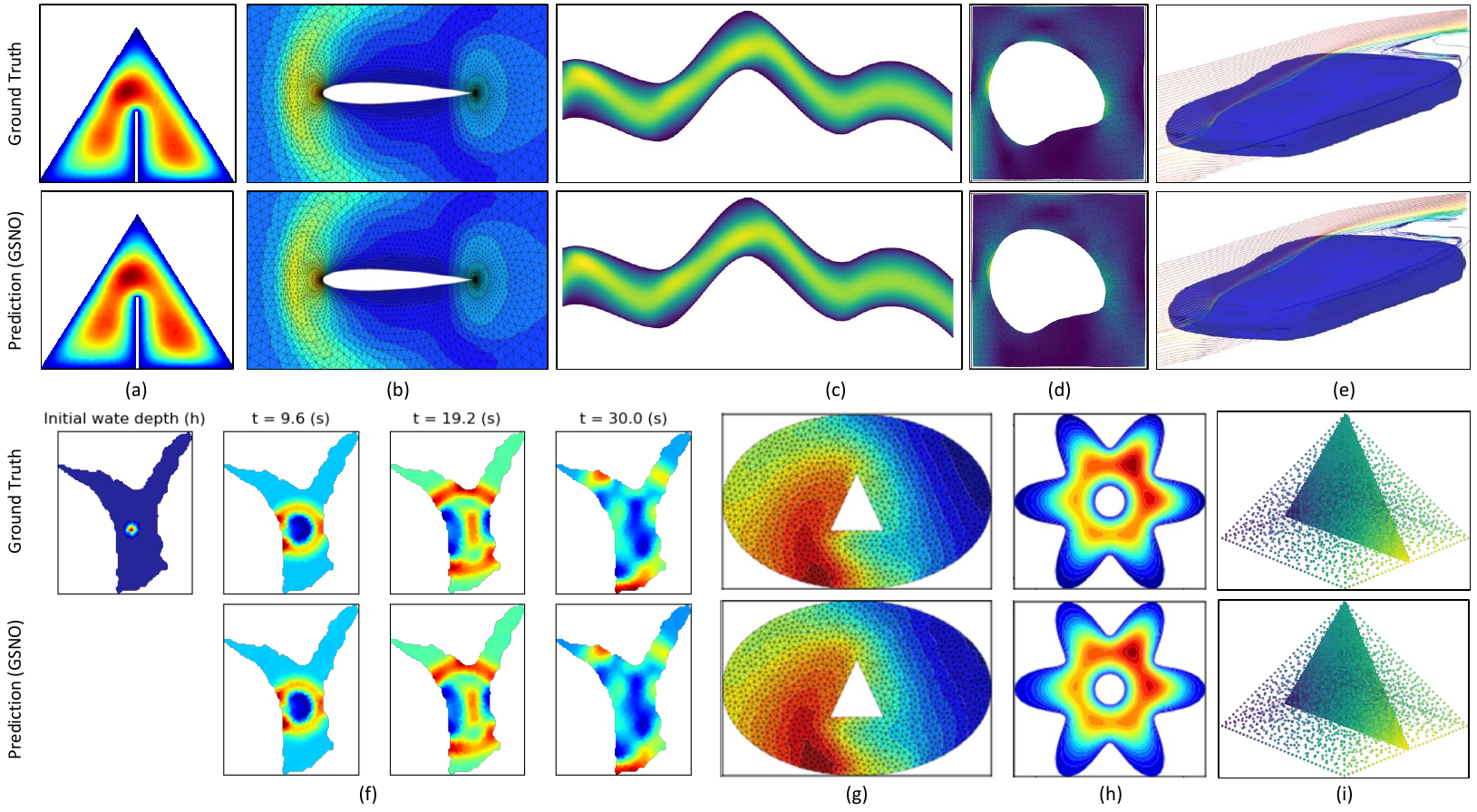}
  \captionsetup{font=footnotesize}

  \caption{Selected results from PDE benchmarks solved using GSNO. Ground truth (top) and GSNO predictions (bottom) are shown for each case. Additional samples and temporal generalization results are provided in Appendix~\ref{appendix:more_results}. 
  \textbf{(a) Darcy flow:} hydraulic head field on a mesh with \(N_s=1184\). 
  \textbf{(b) Airfoil:} pressure field on a mesh with \(N_s=5233\). 
  \textbf{(c) Pipe flow:} horizontal velocity field on a mesh with \(N_s=4225\). 
  \textbf{(d) Hyper-elasticity:} stress field on a mesh with \(N_s=972\). 
  \textbf{(e) Shape-Net 3D Car:} steady-state flow streamlines with \(N_s=32186\). 
  \textbf{(f) Shallow water:} water-height field \(h(x,y,t)\) over 30 seconds in a realistic lake basin; GSNO was trained on \(N_s=1832\) and evaluated zero-shot on \(N_s=3663\). 
  \textbf{(g) Navier--Stokes:} stream function \(\psi(x,y)\) at \(t=10.0\); GSNO was trained on a mesh with \(N_s=972\) and evaluated zero-shot on a finer mesh with \(N_s=1903\). 
  \textbf{(h) 2D unsteady Burgers' equation:} velocity magnitude field \(\|\mathbf{u}(x,y)\|\) at the final time step \(t=1.00\), evaluated on a mesh with \(N_s=1168\). 
  \textbf{(i) 3D unsteady Burgers' equation:} velocity magnitude field \(\|\mathbf{u}(x,y,z)\|\) at the final time step \(t=1.00\), evaluated on a mesh with \(N_s=1867\).}
  \label{fig:comparison}
\end{figure}

\textbf{Benchmarks.}
We compare GSNO against seven neural-operator baselines—\textbf{DeepONet}, \textbf{MGKN}, \textbf{CORAL}, \textbf{Geo-FNO}, \textbf{AMG}, \textbf{Sp$^2$GNO}, \textbf{GNOT}, and \textbf{Transolver}—covering a range of operator-learning approaches on irregular domains. All baselines are retrained on our datasets using identical splits, mesh configurations, and optimization settings. Hyperparameters follow the original implementations, with light tuning to ensure fair comparison, and full training details are provided in Appendix~\ref{appendix:baseline_hyper}.

\subsection{Forward PDE Benchmarks}
\label{sec:pde_forward}

Tables~\ref{tab:steady}, \ref{tab:geo_change}, and~\ref{tab:tin_errors_temporal} provide a comprehensive summary of the relative \(L_2\) errors across all benchmark settings. For clarity, the best result is shown in \textbf{bold} and the second best is \underline{underlined}. \emph{Promotion} denotes the relative error reduction with respect to the second-best model, \(1 - \frac{E_{\text{GSNO}}}{E_{\text{2nd-best}}}\) (reported as a percentage where indicated). Figure~\ref{fig:comparison} presents selected visual comparisons of GSNO predictions against ground truth to be discussed below, with a more comprehensive set of results provided in Appendix~\ref{appendix:more_results}.

\begin{table}
\centering
\footnotesize

\begin{minipage}[t]{0.45\textwidth}
\centering
\captionsetup{font=footnotesize}
\caption{Relative \( L_2 \) error of models on steady-state PDEs\\ at fixed resolution.}
\label{tab:steady}
\resizebox{\textwidth}{!}{%
\begin{tabular}{l c cccc}
\toprule
\multirow{2}{*}{\textbf{Model}}
& \makecell{\textbf{(a) Darcy Flow} \\ (\( N_s = 1184 \))} 
& \multicolumn{4}{c}{\makecell{\textbf{(b) 2D-Airfoil} \\ (\( N_s = 5233 \))}} \\
\cmidrule(lr){2-2}\cmidrule(lr){3-6}
& \multicolumn{1}{c}{Hydraulic head}
& Density & Pressure & Velocity\_x & Velocity\_y \\
\midrule
CORAL      & 0.0664 & 0.0650 & 0.0610 & 0.0365 & 0.0410 \\
Geo-FNO    & 0.0548 & 0.0580 & 0.0550 & 0.0320 & 0.0360 \\
MGKN       & 0.0242 & 0.0500 & 0.0480 & 0.0215 & 0.0260 \\
DeepONet   & 0.0312 & 0.0400 & 0.0370 & 0.0290 & 0.0310 \\
AMG        & 0.0172 & \underline{0.0021} & \underline{0.0020} & \underline{0.0014} & \underline{0.0018} \\
SP$^2$GNO  & 0.0150 & 0.0030 & 0.0028 & 0.0022 & 0.0025 \\
GNOT       & \underline{0.0118} & 0.0054 & 0.0049 & 0.0040 & 0.0040 \\
Transolver & 0.0142 & 0.0036 & 0.0032 & 0.0028 & 0.0035 \\
\textbf{GSNO} 
           & \textbf{0.0083} & \textbf{0.0012} & \textbf{0.0012} & \textbf{0.0009} & \textbf{0.0008} \\
\midrule
\textit{(vs 2nd-best)} 
           & ↓29.66\% & ↓42.86\% & ↓40.00\% & ↓35.71\% & ↓55.56\% \\
\bottomrule
\end{tabular}%
}
\end{minipage}
\hfill
\begin{minipage}[t]{0.48\textwidth}
\centering
\captionsetup{font=footnotesize}

\caption{Relative \(L_2\) error of models on  geometry-varying \\PDE benchmarks.}
\label{tab:geo_change}
\resizebox{\textwidth}{!}{%
\begin{tabular}{l c c cc}
\toprule
\multirow{2}{*}{\textbf{Model}}
& \makecell{\textbf{(a) Pipe FLow} \\ (\( N_s = 4225 \))}
& \makecell{\textbf{(b) Hyper-Elasticity} \\ (\( N_s = 972 \))}
& \multicolumn{2}{c}{\makecell{\textbf{(c) Shape-Net 3D Car} \\ (\( N_s = 32186 \))}} \\
\cmidrule(lr){2-2}\cmidrule(lr){3-3}\cmidrule(lr){4-5}
& \multicolumn{1}{c}{Velocity}
& \multicolumn{1}{c}{Stress field}
& Pressure & Velocity magnitude \\
\midrule
CORAL      & 0.0461 & 0.1091 & 0.1680 & 0.1750 \\
Geo-FNO    & 0.0501 & 0.0937 & 0.1560 & 0.1620 \\
MGKN       & 0.0300 & 0.0795 & 0.1350 & 0.1420 \\
DeepONet   & 0.0371 & 0.1335 & 0.1400 & 0.1480 \\
AMG        & \underline{0.0293} & \underline{0.0770} & \underline{0.0878} & \underline{0.0919} \\
SP$^2$GNO  & 0.0394 & 0.1115 & 0.1005 & 0.1102 \\
GNOT       & 0.0436 & 0.1097 & 0.1199 & 0.1206 \\
Transolver & 0.0310 & 0.0827 & 0.0993 & 0.1208 \\
\textbf{GSNO} 
           & \textbf{0.0142} & \textbf{0.0271} & \textbf{0.0712} & \textbf{0.0759} \\
\midrule
\textit{(vs 2nd-best)} 
           & ↓51.54\% & ↓64.81\% & ↓18.91\% & ↓17.41\% \\
\bottomrule
\end{tabular}%
}
\end{minipage}
\end{table}

\begin{table}
\centering
\captionsetup{font=footnotesize}
\caption{Relative \( L_2 \) error of models on time-dependent PDEs at fixed resolution. \\For these cases, the model takes \( T_{\text{in}} \) input steps to predict the next \( T_{\text{out}} \) steps. }
\label{tab:tin_errors_temporal}

\resizebox{0.98\textwidth}{!}{%
{\setlength{\tabcolsep}{2pt}\renewcommand{\arraystretch}{1.1}%
\begin{tabular}{lcccc cccc cccc cccc}
\toprule
\multirow{3}{*}{\textbf{Model}}
& \multicolumn{4}{c}{\makecell{\textbf{(a) Navier–Stokes Equation} \\ (\( N_s = 1244 \))}} 
& \multicolumn{4}{c}{\makecell{\textbf{(b) Shallow Water Equation} \\ (\( N_s = 1830 \))}}
& \multicolumn{4}{c}{\makecell{\textbf{(c) 2D Burgers’ Equation} \\ (\( N_s = 1168 \))}} 
& \multicolumn{4}{c}{\makecell{\textbf{(d) 3D Burgers’ Equation} \\ (\( N_s = 1867 \))}} \\
& \multicolumn{16}{c}{\textbf{Temporal Config:} \(T_{\text{in}} \rightarrow T_{\text{out}}\)} \\
\cmidrule(lr){2-5} \cmidrule(lr){6-9} \cmidrule(lr){10-13} \cmidrule(lr){14-17}
& 1$\rightarrow$50 & 3$\rightarrow$48 & 5$\rightarrow$46 & 10$\rightarrow$41 
& 1$\rightarrow$50 & 3$\rightarrow$48 & 5$\rightarrow$46 & 10$\rightarrow$41 
& 1$\rightarrow$50 & 3$\rightarrow$48 & 5$\rightarrow$46 & 10$\rightarrow$41 
&  1$\rightarrow$50 &  3$\rightarrow$48 &  5$\rightarrow$46 &  10$\rightarrow$41 
\\
\cmidrule(lr){2-5} \cmidrule(lr){6-9} \cmidrule(lr){10-13} \cmidrule(lr){14-17}
& \multicolumn{4}{c}{Vorticity (\(\omega\))}
& \multicolumn{4}{c}{Water height (\(h\))}
& \multicolumn{4}{c}{Velocity magnitude}
& \multicolumn{4}{c}{Velocity magnitude} \\
\midrule

CORAL      & 0.1654 & 0.1412 & 0.1148 & 0.1070 & 0.1784 & 0.1534 & 0.1238 & 0.1162 & 0.1542 & 0.1308 & 0.1052 & 0.0966 & 0.1975 & 0.1688 & 0.1340 & 0.1255 \\
Geo-FNO    & 0.2056 & 0.1790 & 0.1614 & 0.1512 & 0.2112 & 0.1848 & 0.1650 & 0.1526 & 0.1968 & 0.1718 & 0.1564 & 0.1410 & 0.2380 & 0.2070 & 0.1820 & 0.1695 \\
MGKN       & \underline{0.0964} & \underline{0.0770} & 0.0654 & 0.0606 & \underline{0.1128} & \underline{0.0876} & 0.0724 & 0.0668 & 0.0876 & \underline{0.0686} & 0.0612 & 0.0562 & \underline{0.1245} & \underline{0.0965} & 0.0798 & 0.0735 \\
DeepONet   & 0.1250 & 0.1096 & 0.0958 & 0.0916 & 0.1284 & 0.1128 & 0.0982 & 0.0904 & 0.1146 & 0.1004 & 0.0894 & 0.0846 & 0.1415 & 0.1235 & 0.1080 & 0.0995 \\
AMG        & 0.1060 & 0.0780 & 0.0580 & 0.0540 & 0.1210 & 0.0890 & 0.0692 & 0.0630 & 0.0812 & 0.0694 & 0.0570 & 0.0540 & 0.1325 & 0.0975 & 0.0755 & 0.0685 \\
GNOT       & 0.1876 & 0.1326 & 0.0912 & 0.0844 & 0.2150 & 0.1534 & 0.1093 & 0.0995 & 0.1301 & 0.1232 & 0.0907 & 0.0855 & 0.2425 & 0.1710 & 0.1215 & 0.1105 \\
SP$^2$GNO  & 0.1533 & 0.1081 & 0.0744 & 0.0689 & 0.1752 & 0.1250 & 0.0891 & 0.0810 & 0.1054 & 0.0998 & 0.0736 & 0.0695 & 0.1965 & 0.1400 & 0.1005 & 0.0915 \\
Transolver & 0.1189 & 0.0836 & \underline{0.0575} & \underline{0.0534} & 0.1354 & 0.0965 & \underline{0.0689} & \underline{0.0625} & \underline{0.0806} & 0.0763 & \underline{0.0564} & \underline{0.0534} & 0.1500 & 0.1065 & \underline{0.0765} & \underline{0.0695} \\
\textbf{GSNO} 
           & \textbf{0.0336} & \textbf{0.0237} & \textbf{0.0164} & \textbf{0.0152} 
           & \textbf{0.0375} & \textbf{0.0268} & \textbf{0.0193} & \textbf{0.0174} 
           & \textbf{0.0221} & \textbf{0.0213} & \textbf{0.0156} & \textbf{0.0148} 
           & \textbf{0.0425} & \textbf{0.0305} & \textbf{0.0220} & \textbf{0.0198} \\
\midrule
\textit{Promotion (vs 2nd-best)} 
           & ↓65.15\% & ↓69.22\% & ↓71.48\% & ↓71.54\% 
           & ↓66.76\% & ↓69.41\% & ↓71.99\% & ↓72.16\% 
           & ↓72.58\% & ↓68.95\% & ↓72.34\% & ↓72.28\% 
           & ↓65.86\% & ↓68.39\% & ↓71.24\% & ↓71.51\% \\

\bottomrule
\end{tabular}%
}%
}
\end{table}

\textbf{Darcy Flow.} We evaluate steady-state Darcy flow on a notched triangular domain, shown in Figure~\ref{fig:pde_geometries}a. The task is to learn the solution operator from the diffusion coefficient field to the hydraulic head field on a triangular mesh with \(N_s\) nodes, \(G_\theta: a\in\mathbb{R}^{N_s}\mapsto u\in\mathbb{R}^{N_s}\). As reported in Table~\ref{tab:steady}a, GSNO achieves the lowest relative \(L_2\) error of \textbf{0.0083}, improving over GNOT (0.0118) by 29.6\%. Figures~\ref{fig:comparison}a and~\ref{fig:sample_Darcy} show that GSNO captures the elevated hydraulic head near the notch and respects the no-flow behavior along irregular boundaries. The resolution study in Figure~\ref{fig:gsno-accuracy-gap_darcy} shows consistent gains across mesh resolutions, with up to \(8\times\) lower error than competing methods. Efficiency results in Figure~\ref{fig:runtime_comparison_darcy} and Table~\ref{tab:darcy_runtime_mem_compact} show that GSNO also provides the fastest per-epoch training and inference, with memory usage comparable to the most efficient baselines.

\textbf{2D Airfoil.} We evaluate GSNO on unsteady compressible Euler flow around a 2D airfoil, as shown in Figure~\ref{fig:pde_geometries}b. The irregular domain is discretized using an unstructured mesh with \(N_s=5233\) nodes. The task is one-step prediction: given the flow state at time \(t\), including density, velocity, and pressure fields, the operator predicts the next state at \(t+1\). Formally, \(G_\theta: \mathbb{R}^{N_s} \rightarrow \mathbb{R}^{N_s}\). Table~\ref{tab:steady}b shows that GSNO achieves the lowest relative \(L_2\) errors across all state variables, with \textbf{0.0012} for density, \textbf{0.0012} for pressure, \textbf{0.0009} for \(u_x\), and \textbf{0.0008} for \(u_y\). Compared with AMG, the second-best baseline, these results correspond to error reductions of 42.9\%, 40.0\%, 35.7\%, and 55.6\%, respectively. The qualitative comparisons in Figures~\ref{fig:comparison}b and~\ref{fig:sample_Airfoil} further show that GSNO accurately reconstructs near-field flow structures, including the pressure distribution along the airfoil surface and velocity separation in the wake. These results demonstrate GSNO’s ability to operate on highly irregular meshes while recovering multiple coupled flow variables with high fidelity. In terms of computational efficiency, GSNO attains the lowest per-epoch training time among all evaluated models, requiring only \textbf{9.6s} per epoch. This is faster than Transolver (10.8s) and GNOT (12.1s), and corresponds to approximately a \(7.0\times\) reduction in training time relative to AMG (67.2s), as reported in Table~\ref{tab:airfoil_relL2_mae_rmse}.

\textbf{Pipe Flow.}
We evaluate GSNO on the pipe flow benchmark from \cite{li2023fourier}, where the domain geometry changes across samples through different curved pipe configurations, as shown in Figure~\ref{fig:pde_geometries}h. The objective is to learn an operator that maps each sample-dependent geometry to its corresponding velocity field. In particular, for each pipe geometry \(\Omega_i\), the model learns
\(G_\theta: \mathbb{R}^{N_s \times d_{\text{in}}} \rightarrow \mathbb{R}^{N_s \times 1}\),
where the input consists of geometric features such as node coordinates and shape descriptors, and the output is the velocity field defined on the same geometry. Table~\ref{tab:geo_change}a shows that GSNO achieves the lowest relative \(L_2\) error of \textbf{0.0142}, compared with 0.0293 for AMG, the second-best model. This gives a 51.5\% reduction in error. The visual results in Figure~\ref{fig:comparison}c and the additional samples in Figure~\ref{fig:sample_pipe} show that GSNO closely matches the ground-truth velocity fields across different pipe shapes. The detailed metrics in Table~\ref{tab:pipe_relL2_rmse_mae} further support this result, with GSNO obtaining the lowest RMSE and MAE while also requiring the shortest per-epoch training time. These results indicate that GSNO can accurately and efficiently learn geometry-dependent pipe flow solutions across varying curved domains.

\textbf{Hyper-Elasticity.}
We evaluate GSNO on the hyper-elasticity benchmark from \cite{li2023fourier}, where the internal void geometry varies across samples, as shown in Figure~\ref{fig:pde_geometries}i. This benchmark tests whether the model can learn a geometry-dependent solution operator for solid mechanics problems. The objective is to map each sample-specific solid geometry to its corresponding stress field. In particular, for each geometry \(\Omega_i\), the model learns
\(G_\theta: \mathbb{R}^{N_s \times d_{\text{in}}} \rightarrow \mathbb{R}^{N_s \times 1}\),
where the input contains geometric features such as node coordinates and void-shape information, and the output is the stress field defined on the same solid domain. Table~\ref{tab:geo_change}b shows that GSNO achieves the lowest relative \(L_2\) error of \textbf{0.0271}, compared with 0.0770 for AMG, the second-best model. This corresponds to a 64.8\% reduction in error. The qualitative results in Figure~\ref{fig:comparison}d and the additional examples in Figure~\ref{fig:sample_hyperelasticity} show that GSNO accurately recovers stress concentrations around different void patterns. The detailed results in Table~\ref{tab:elasticity_relL2_rmse_mae} further show that GSNO obtains the lowest RMSE and MAE while also achieving the fastest per-epoch training time. These results indicate that GSNO can efficiently learn geometry-to-stress mappings across varying hyper-elastic solid configurations.

\textbf{Shape-Net 3D Car.}
We evaluate GSNO on the ShapeNet 3D car benchmark, a varying-geometry problem where each sample represents a distinct car shape with approximately 32k unstructured points, as shown in Figure~\ref{fig:pde_geometries}c. The task is to learn a geometry-to-flow operator from geometry-dependent inputs, including coordinates, signed distance values, and surface normals, to time-averaged pressure and velocity fields. Figures~\ref{fig:comparison}e and~\ref{fig:sample_car} show that GSNO predictions closely follow the reference streamlines and capture aerodynamic structures across different vehicle shapes. As reported in Table~\ref{tab:geo_change}c, GSNO achieves the lowest relative \(L_2\) errors, with \textbf{0.0712} for pressure and \textbf{0.0759} for velocity magnitude, improving over AMG by 18.9\% and 17.4\%, respectively. Table~\ref{tab:car3d_metrics} further shows that GSNO has the lowest per-epoch training time, requiring only \textbf{24s}, faster than Transolver (27s), GNOT (36s), and AMG (181s). These results show that GSNO remains accurate and efficient under substantial geometry variation.

\textbf{Navier--Stokes Equations.} For the incompressible 2D Navier--Stokes equations, we consider an elliptical domain with a triangular cutout, as shown in Figure~\ref{fig:pde_geometries}e. The task is to predict the vorticity field \(\omega \in \mathbb{R}^{N_s \times T \times 1}\). The operator maps the first \(T_{\text{in}}\) input snapshots to the following \(T_{\text{out}}\) forecast snapshots, namely \(G_\theta: \mathbb{R}^{N_s \times T_{\text{in}} \times 1} \rightarrow \mathbb{R}^{N_s \times T_{\text{out}} \times 1}\). Table~\ref{tab:tin_errors_temporal}a shows that GSNO achieves the most accurate predictions across all temporal splits. In the long-horizon setting \(T_{\text{in}}=10 \rightarrow T_{\text{out}}=41\), GSNO obtains an error of \textbf{0.0152}, while the next best method, Transolver, records 0.0534, corresponding to a 71\% error reduction. The qualitative comparisons in Figures~\ref{fig:comparison}g and~\ref{fig:sample_NSE} further show that GSNO reconstructs the roll-up of vortical structures and the onset of flow separation with high fidelity. Its resolution generalization is also strong: Figure~\ref{fig:nse_joint_summary}a shows errors up to an order of magnitude lower than competing baselines on finer meshes. In terms of efficiency, Figure~\ref{fig:nse_joint_summary}b shows that GSNO completes each training epoch up to \(5\times\) faster than competing operators. Table~\ref{tab:nse_runtime_mem_compact} further summarizes the computational profile of this setup, where GSNO achieves the fastest inference while maintaining a memory footprint comparable to the most efficient baselines.

\textbf{Shallow Water Equations.}
We test GSNO on the 2D Shallow Water Equations in conservative form, a standard flood inundation model, using an irregular Lake Union mesh with \(N_s=3663\) nodes over 30 seconds, as shown in Figure~\ref{fig:pde_geometries}f. The task is to learn the temporal solution operator from an input water-height window to future states, \(G_\theta: h\in\mathbb{R}^{N_s \times T_{\text{in}}}\mapsto h\in\mathbb{R}^{N_s \times T_{\text{out}}}\). Table~\ref{tab:tin_errors_temporal}b shows that GSNO is the most accurate across temporal settings; for \(T_{\text{in}}=10 \rightarrow T_{\text{out}}=41\), it achieves \textbf{0.0174} relative error, a 72\% reduction over Transolver (0.0625). Figure~\ref{fig:comparison}f shows that GSNO captures wavefront propagation and shoreline reflections, while Figure~\ref{fig:gsno-accuracy-gap_swe} shows up to \(9\times\) lower error under mesh refinement. Figure~\ref{fig:runtime_comparison_swe} further shows up to \(3.5\times\) per-epoch training speedup.

\textbf{2D and 3D Burgers' Equations.}
We evaluate GSNO on two- and three-dimensional unsteady Burgers' equations, shown in Figures~\ref{fig:pde_geometries}d and~\ref{fig:pde_geometries}g. Both simulations contain 51 time steps, and the task is to map the first \(T_{\text{in}}\) snapshots to the following \(T_{\text{out}}\) velocity states,
\(G_\theta: \mathbb{R}^{N_s \times T_{\text{in}} \times d} \rightarrow \mathbb{R}^{N_s \times T_{\text{out}} \times d}\),
where \(d\) is the velocity-state dimension. For 2D Burgers, the domain is flower-shaped with a central hole and \(N_s=1168\) nodes. The model predicts \([u,v] \in \mathbb{R}^{N_s \times T \times 2}\), so \(d=2\). GSNO achieves the lowest errors across all temporal splits in Table~\ref{tab:tin_errors_temporal}c. In the \(1\rightarrow50\) setting, it obtains 0.0221, compared with 0.0806 for Transolver, giving more than a 72\% error reduction. It further reaches \textbf{0.0156} for \(5\rightarrow46\) and remains below 0.022 across all splits. Figures~\ref{fig:comparison}h and~\ref{fig:sample_burgers_uv} show that GSNO captures nonlinear transport and dissipation within the hollowed-out domain, while Figures~\ref{fig:gsno-accuracy-gap_burger} and~\ref{fig:runtime_comparison_burger} show up to \(10\times\) lower error and \(3.5\times\) runtime speedup. For 3D Burgers, the equation is solved on a fixed square-pyramid domain \(\Omega\), represented by an irregular tetrahedral mesh with \(N_s=1867\) nodes. GSNO again obtains the lowest errors across all settings in Table~\ref{tab:tin_errors_temporal}d. In the \(1\rightarrow50\) split, it reaches \textbf{0.0425}, compared with 0.1245 for MGKN, corresponding to a 65.9\% reduction. With \(10\rightarrow41\), the error further decreases to \textbf{0.0198}. Figure~\ref{fig:comparison}i shows that GSNO captures the main 3D velocity structures, while Table~\ref{tab:swe_metrics} and Figure~\ref{fig:runtime_comparison_3d_burgers} confirm its lowest RMSE, MAE, and per-epoch runtime across all temporal configurations.

\begin{figure}
    \centering
    \begin{subfigure}[t]{\textwidth}
        \centering
        \captionsetup{font=footnotesize, justification=centering}
        \includegraphics[trim={0 7 0 0}, clip, width=0.9\textwidth]{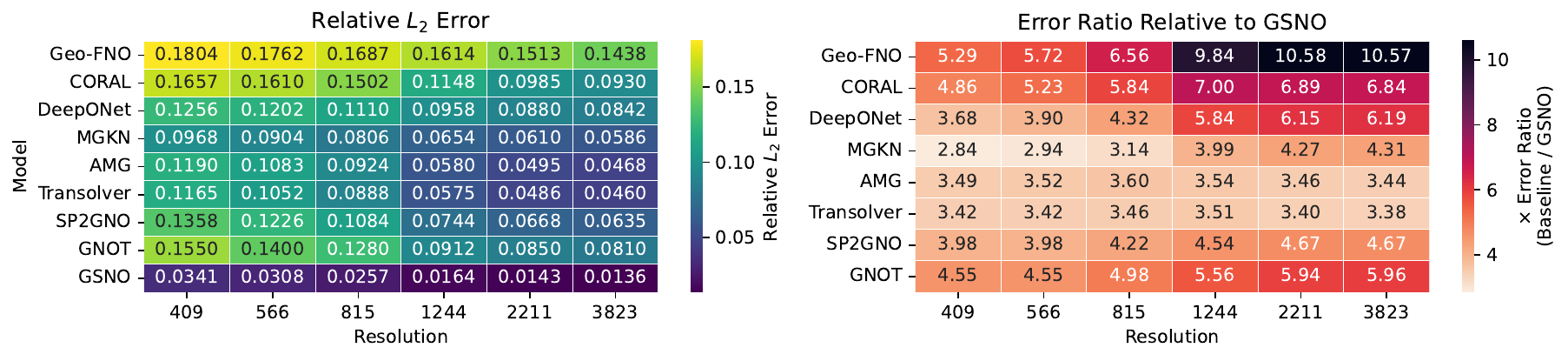}
        \caption{Accuracy vs. spatial resolution. All models are trained and evaluated on the same temporal setup \( T_{\text{in}} = 5 \rightarrow T_{\text{out}} = 46 \) and tested across increasing mesh resolutions. Left: relative \( L_2 \) error. Right: model-wise error ratio with respect to GSNO.}
    \end{subfigure}
    
    \vspace{1em}
    
    \begin{subfigure}[t]{\textwidth}
        \centering
        \captionsetup{font=footnotesize, justification=centering}
        \includegraphics[trim={0 7 0 0}, clip, width=0.9\textwidth]{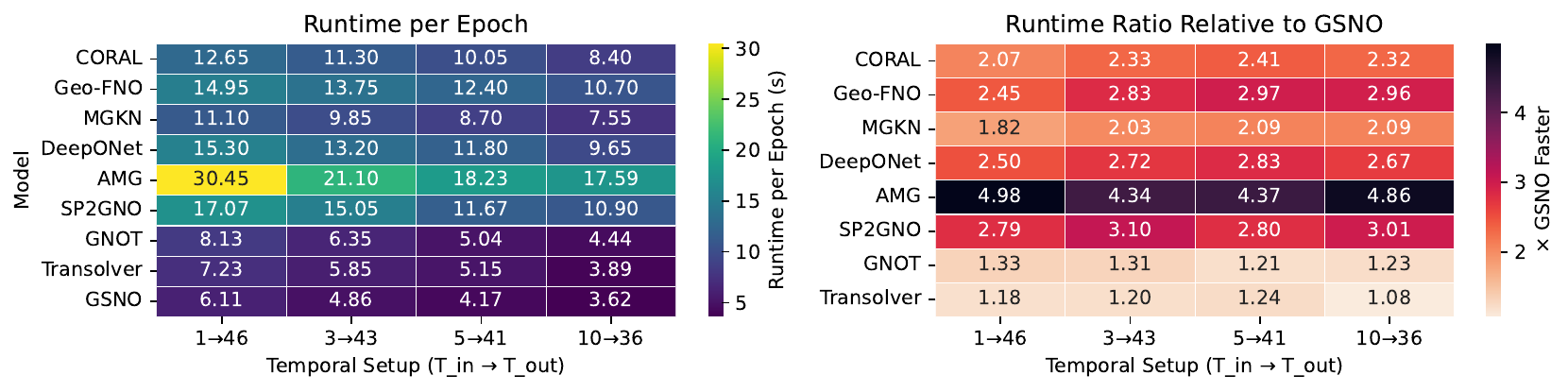}
        \caption{Training runtime per epoch. All models are trained and evaluated on the same mesh resolution (\( N_s = 1244 \)) and tested across different temporal configurations. Left: runtime in seconds. Right: slowdown factor relative to GSNO.}
    \end{subfigure}

    \captionsetup{font=footnotesize}
    \caption{Performance comparison of neural operator models on the 2D Navier–Stokes equation using GSNO and baselines.}
    \label{fig:nse_joint_summary}
\end{figure}

\subsection{Zero-Shot Super-Resolution}
\label{sec:zero_shot}

GSNO enables zero-shot super-resolution by applying the learned operator in a graph-spectral basis rather than on a fixed grid. Given latent features \(v_t \in \mathbb{R}^{N_s \times T \times d_v}\), GSNO projects them onto the Laplacian eigenbasis \(\Phi_{k_s}\) and applies the learned graph--temporal spectral operator:
\begin{equation}
(\mathcal{K}_\phi v_t)(x)
=
\Phi_{k_s}
\mathcal{F}_t^{-1}
\left(
R_\phi \cdot \mathcal{F}_t(\Phi_{k_s}^{\top} v_t)
\right),
\end{equation}
where \(R_\phi\) is the learned spectral kernel and \(\mathcal{F}_t\) is the temporal Fourier transform. At test time, the Laplacian basis is recomputed on the finer mesh, while the trained \(R_\phi\) is reused without retraining. Thus, GSNO transfers across resolutions through Laplacian recomputation instead of coordinate interpolation or grid mapping. Appendix~\ref{app:theory} discusses the related permutation-equivariance and refinement properties. Figures~\ref{fig:comparison}(e) and~\ref{fig:sample_NSE} show zero-shot transfer from \(N_s=1832\) to \(3663\) for shallow water equations and from \(N_s=972\) to \(1903\) for Navier--Stokes.

\subsection{Bayesian Inverse Problem for GSNO}
\label{sec:darcy_inverse}

We consider the Bayesian inverse problem of recovering the unknown Darcy coefficient field
\( a(x,y) \in \mathbb{R}^{1184 \times 1} \) from a single observed solution
\( u_{\mathrm{obs}} \in \mathbb{R}^{1184 \times 1} \). In this setting, the forward map
\( a \mapsto u \) is approximated using the trained GSNO model, which provides fast,
GPU-accelerated surrogate evaluations during posterior sampling. To characterize the uncertainty in the recovered coefficient field, we employ a
function-space Markov Chain Monte Carlo (MCMC) method
\cite{geyer1992practical,geyer1995annealing}, specifically the
Metropolis--Hastings algorithm \cite{chib2001marginal}, to sample from the posterior
distribution over admissible coefficient fields. We assume a zero-mean Gaussian prior,

\[
a \sim \mathcal{N}\!\left(0,\sigma_{\mathrm{prior}}^{2}I\right),
\]

where \( \sigma_{\mathrm{prior}}^{2} \) controls the prior variance. The data-misfit term
is defined using the squared error between the GSNO-predicted solution and the observed
solution. Because the observations are assumed to be noise-free, the resulting
unnormalized log-posterior is written as

\[
\log p\!\left(a \mid u_{\mathrm{obs}}\right)
\propto
-\frac{1}{2}
\left\|
\mathsf{GSNO}(a)-u_{\mathrm{obs}}
\right\|^{2}
-\frac{1}{2\sigma_{\mathrm{prior}}^{2}}
\left\|a\right\|^{2}.
\]

The first term penalizes disagreement between the surrogate prediction and the observed
solution, while the second term regularizes the inferred coefficient field according to
the Gaussian prior. At each Metropolis--Hastings iteration, a candidate coefficient field
is evaluated through a single forward pass of the trained GSNO model, and the candidate
is accepted or rejected according to the corresponding posterior probability. We perform \(5{,}000\) MCMC iterations and discard the first \(500\) iterations as
burn-in. This results in \(4{,}500\) retained posterior samples and requires a total of
\(5{,}000\) forward evaluations of GSNO. Owing to the mesh-invariant spectral formulation
of GSNO and its GPU-accelerated execution, the complete sampling procedure can be
performed within minutes. Figure~\ref{fig:gsno_bayesian_inverse_darcy} compares the true Darcy coefficient field
with the posterior mean estimated from the retained samples. The close agreement between
the reconstructed and true fields demonstrates that GSNO can serve as an efficient and
accurate surrogate forward model for Bayesian inversion under noise-free observation
conditions.

\begin{figure}[h]
    \centering
    \includegraphics[width=0.8\textwidth]
    {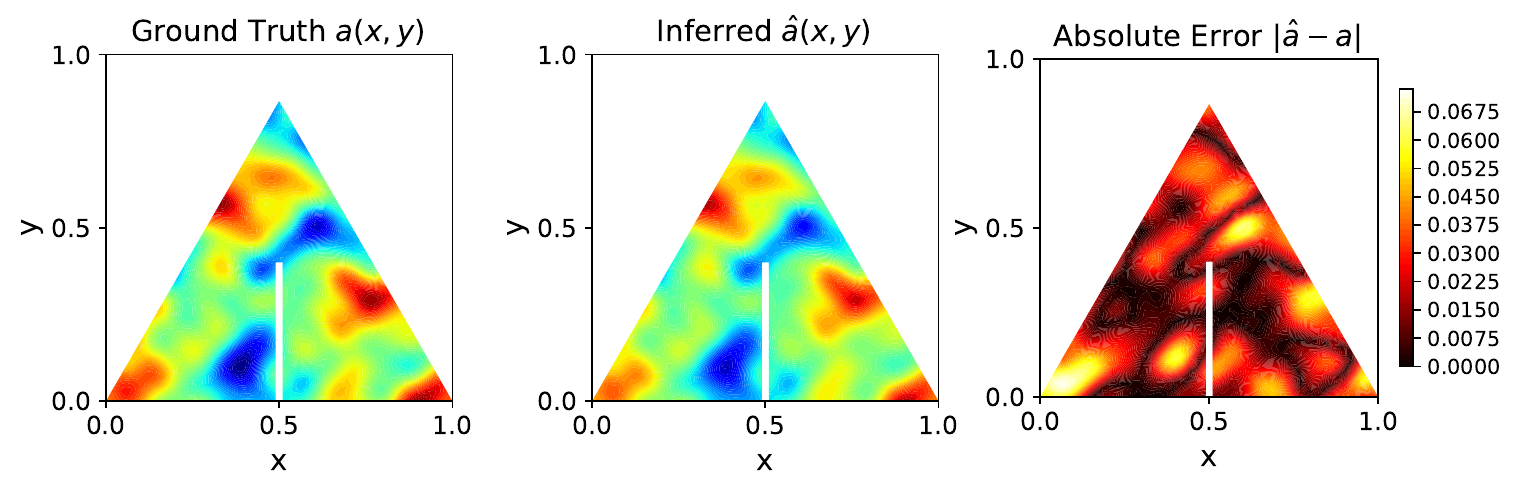}
    \captionsetup{font=footnotesize}
    \caption{
    Comparison of the true Darcy coefficient field and the posterior mean inferred from
    the noise-free observation
    \(u_{\mathrm{obs}} \in \mathbb{R}^{1184 \times 1}\).
    GSNO is used as the surrogate forward model for \(5{,}000\)
    Metropolis--Hastings iterations, of which the first \(500\) are discarded as burn-in,
    leaving \(4{,}500\) retained posterior samples. The reconstructed posterior-mean
    field closely agrees with the true coefficient field.
    }
    \label{fig:gsno_bayesian_inverse_darcy}
\end{figure}

\subsection{Component-wise Ablation Analysis}
\label{sec:component-analysis}

We evaluate the contribution of the main GSNO components through controlled ablation experiments on the steady-state Darcy flow problem and the time-dependent two-dimensional Burgers' equation. Five model variants are considered. First, we remove the learnable spectral kernel by setting \(R_\phi=I\). This variant retains the mesh-induced graph Laplacian eigenbasis but does not learn interactions or mixing among the retained spatial modes. Second, we disable the local residual path by setting \(W=0\), removing the pointwise correction branch and leaving the prediction primarily governed by the global spectral operator. Third, for the time-dependent Burgers' problem, we replace the temporal fast Fourier transform (FFT) with a multilayer perceptron (MLP) to examine whether explicit global temporal-frequency modeling is beneficial for long-horizon forecasting. Fourth, we replace the mesh-induced spatial graph with a K-nearest-neighbor (KNN) graph constructed from the node coordinates while retaining the remaining GSNO architecture and spectral-kernel structure. This variant tests whether approximate distance-based connectivity can substitute for the original mesh connectivity. Finally, we replace the graph Laplacian eigenbasis \(\Phi_{k_s}\) with a random orthonormal basis of the same dimension, thereby removing the geometry-aware spatial spectral structure while preserving the dimensionality of the projected representation. The experiments use the same training configuration, mesh resolution, number of retained spatial modes, and optimization settings as the corresponding main experiments. The Darcy problem is formulated as a static operator mapping the coefficient field \(a(x)\) to the solution field \(u(x)\). For Burgers', we use the long-horizon forecasting configuration
\(T_{\mathrm{in}}=5 \rightarrow T_{\mathrm{out}}=46\). For all graph-based variants, the graph structure and spatial basis are computed before training. Consequently, the runtimes reported in Table~\ref{tab:gsno_ablation} represent the training cost per epoch and exclude the one-time costs of graph construction and eigendecomposition.

\begin{table}[h]
\centering
\captionsetup{font=footnotesize}
\caption{
Component-wise ablation results for GSNO on steady-state Darcy flow and the time-dependent Burgers' equation with
\(T_{\mathrm{in}}=5 \rightarrow T_{\mathrm{out}}=46\).
The reported runtimes represent training time per epoch and exclude the one-time costs of graph construction and eigendecomposition.
}
\label{tab:gsno_ablation}
\scriptsize
\resizebox{0.85\textwidth}{!}{%
\begin{tabular}{l|cc|cc}
\toprule
\textbf{Model Variant}
& \multicolumn{2}{c|}{\makecell{\textbf{(a) Darcy Flow} \\ (\(N_s=1184\))}}
& \multicolumn{2}{c}{\makecell{\textbf{(b) Burgers' Equation} \\ (\(N_s=1168\))}} \\
\cmidrule(lr){2-3}
\cmidrule(lr){4-5}
& Relative \(L_2\) Error
& Runtime (s/epoch)
& Relative \(L_2\) Error
& Runtime (s/epoch) \\
\midrule
\textbf{Full GSNO (ours)}
& \textbf{0.0083}
& 2.24
& \textbf{0.0156}
& 3.12 \\

No Spectral Kernel (\(R_\phi=I\))
& 0.0138
& 2.02
& 0.0269
& 2.86 \\

No Local Path (\(W=0\))
& 0.0119
& 2.08
& 0.0237
& 2.91 \\

No Temporal FFT (MLP instead)
& --
& --
& 0.0348
& 3.74 \\

KNN Spatial Graph
& 0.0198
& 2.30
& 0.0445
& 3.24 \\

Random Spatial Basis
& 0.0465
& 2.26
& 0.0917
& 3.18 \\
\bottomrule
\end{tabular}%
}
\end{table}

As shown in Table~\ref{tab:gsno_ablation}, the full GSNO achieves the lowest relative \(L_2\) error on both benchmarks, demonstrating that the mesh-aware spatial basis, learnable spectral kernel, local residual path, and temporal Fourier representation each contribute to the final performance. Removing the learnable spectral kernel by setting \(R_\phi=I\) increases the relative \(L_2\) error from \(0.0083\) to \(0.0138\) for Darcy flow and from \(0.0156\) to \(0.0269\) for Burgers'. Because this variant retains the original mesh-induced Laplacian eigenbasis, the results show that spectral projection alone provides a useful geometry-aware representation. However, the increased error indicates that learning to mix and transform the retained spatial modes is necessary to capture interactions among frequencies and improve the global expressivity of the operator. Disabling the local residual path by setting \(W=0\) also degrades accuracy, increasing the error to \(0.0119\) for Darcy flow and \(0.0237\) for Burgers'. The global spectral representation efficiently captures long-range spatial interactions, but it is based on a truncated set of modes and may not fully represent localized or fine-scale features. The pointwise residual branch therefore complements the global operator by correcting and refining local features that are not completely captured in the truncated spectral representation. For the Burgers' equation, replacing the temporal FFT with an MLP increases the relative \(L_2\) error from \(0.0156\) to \(0.0348\). It also increases the training runtime from \(3.12\) to \(3.74\) seconds per epoch. Thus, the MLP replacement is both less accurate and more computationally expensive than the temporal Fourier formulation. This result supports the use of explicit temporal-frequency modeling for efficiently capturing global and long-range temporal dependencies, particularly in the challenging
\(T_{\mathrm{in}}=5 \rightarrow T_{\mathrm{out}}=46\) forecasting setting. Replacing the mesh-induced graph with a KNN graph further increases the error to \(0.0198\) for Darcy flow and \(0.0445\) for Burgers'. Although the KNN graph preserves approximate geometric proximity between nodes, it does not exactly retain the physical mesh connectivity, boundary structure, or element-induced neighborhood relationships encoded by the original computational mesh. The KNN variant performs worse than the full GSNO and also worse than the variant that retains the mesh Laplacian basis while removing the spectral kernel. This comparison indicates that constructing the spatial spectral basis from the correct mesh topology is more important than applying a learnable spectral kernel to an approximate distance-based graph. The largest degradation occurs when the mesh Laplacian eigenbasis is replaced by a random orthonormal basis. The relative \(L_2\) error increases to \(0.0465\) for Darcy flow and \(0.0917\) for Burgers', while the per-epoch runtimes remain comparable to those of the full model. Because the random basis has the same dimension as the Laplacian basis, this result confirms that GSNO's performance does not arise merely from projecting the solution onto an arbitrary low-dimensional subspace. Instead, the geometry-aware Laplacian modes provide essential information about the irregular domain, mesh connectivity, and spatial structure of the underlying PDE. Overall, the ablation results show that GSNO's accuracy arises from the combined effect of its mesh-induced graph spectral representation, learnable spectral-mode coupling, local residual correction, and global temporal Fourier modeling. The relatively similar per-epoch runtimes of most spatial variants further indicate that the accuracy improvements are primarily attributable to the quality of the learned representation rather than a substantial increase in computational cost. These findings support GSNO's central design of coupling a geometry-aware graph spectral basis in space with Fourier modeling in time for learning PDE solution operators on irregular domains.

%% file: sections/discussion.tex
\section{Discussion, Limitations, and Conclusion}
\label{sec:conclusion}

GSNO's performance across diverse PDE benchmarks confirms that compact space–time spectral representations—avoiding mesh-specific encodings and heavy parameterization—yield high accuracy and scalability across varying resolutions and geometries without model reconfiguration. We interpret these results through the lens of spectral learning, resolution-adaptive generalization, geometry-aware transfer, and compression-driven efficiency.

\textbf{Spectral Learning Enables Accurate Operator Approximation.}
GSNO projects spatial inputs onto a truncated graph Laplacian basis \(\Phi_{k_s}\) and applies a temporal Fourier transform, enabling the learnable kernel \(R_\phi\) to capture long-range spatial and temporal dependencies while preserving geometric and physical structure—unlike coordinate-based MLPs or message-passing architectures. As shown in Figures~\ref{fig:comparison},~\ref{fig:sample_Darcy},~\ref{fig:sample_burgers_uv}, and~\ref{fig:sample_NSE}, this yields high accuracy and smooth, physically consistent predictions on complex irregular domains for both steady and unsteady PDEs.

\textbf{Generalization Across Mesh Resolutions.}
GSNO supports resolution-independent inference by learning operators in a graph spectral basis induced by the normalized graph Laplacian. At inference time, the graph Laplacian and its spectral basis can be recomputed for a new discretization, while the learned spectral kernel \(R_\phi\) remains fixed. This enables zero-shot transfer across mesh resolutions without retraining. As shown in Section~\ref{sec:zero_shot} and Figures~\ref{fig:gsno-accuracy-gap_darcy},~\ref{fig:gsno-accuracy-gap_burger},~\ref{fig:gsno-accuracy-gap_nse}, and~\ref{fig:gsno-accuracy-gap_swe}, GSNO maintains strong accuracy across multiple mesh resolutions for both linear and nonlinear PDEs.

\textbf{Geometry-Aware Generalization Across Varying Domains.}
In addition to resolution transfer, GSNO can handle geometry-varying datasets by constructing the graph and spectral basis from each sample-specific domain. This allows the model to learn geometry-conditioned solution operators, where changes in the domain shape are reflected directly in the graph representation. As shown in Table~\ref{tab:geo_change}, GSNO achieves the lowest errors across geometry-varying PDE benchmarks. The results demonstrate this capability across both fluid and solid mechanics settings, where GSNO accurately maps varying geometries to their corresponding solution fields.

\textbf{Efficient Training via Low-Rank Spectral Compression.} GSNO's efficiency stems from its low-rank spectral design. A one-time offline step computes only the first $k_s$ low-frequency graph Laplacian eigenvectors, reused throughout training and inference, so per-epoch cost depends on $k_s$ and $k_t$ rather than full mesh resolution $N_s$. Operating in this compact subspace, GSNO filters a reduced set of spatial and temporal modes, avoiding high-resolution convolutions and deep message-passing stacks---substantially cutting memory and computation while preserving the dominant physical modes needed for accuracy. As shown in Figures~\ref{fig:runtime_comparison_darcy},~\ref{fig:runtime_comparison_burger}, and~\ref{fig:runtime_comparison_nse}, this yields up to $2$--$3\times$ faster per-epoch runtimes than strong baselines while matching or exceeding their accuracy, enabling scalable deployment on large or resource-constrained problems. The one-time eigendecomposition cost itself represents a small fraction of total training time across all benchmarks, as detailed in Appendix~\ref{appendix:laplacian_eigendecomposition_cost}.

\textbf{Limitations and Future Scope.}
While GSNO generalizes effectively across mesh resolutions and geometry-varying datasets, the current experiments focus on geometry families with shared structural characteristics, such as Pipe, Car, and Elasticity, where sample-specific domains vary but remain within a related class. Extending this capability to substantially different geometry families or topologies is a broader open challenge in neural operator learning and may require additional adaptation or problem-specific alignment. In addition, this study focuses on benchmark PDE systems under supervised settings; evaluating GSNO on large-scale real-world multiphysics problems and sparse observational regimes remains an important direction for future work.

%% file: sections/appendix1.tex
\section{Table of Notations}
\renewcommand{\thefigure}{A.\arabic{figure}}  
\renewcommand{\thetable}{A.\arabic{table}}  
\setcounter{figure}{0}
\setcounter{table}{0}

The definitions of key mathematical symbols used throughout this work are summarized in Table~\ref{tab:gsno_notation}.

\begin{table}[]
\centering
\footnotesize

\caption{\footnotesize Summary of notations used in the GSNO methodology.}
\label{tab:gsno_notation}
\resizebox{0.6\textwidth}{!}{%

\begin{tabular}{ll}
\toprule
\textbf{Notation} & \textbf{Meaning} \\
\midrule
\( D \subset \mathbb{R}^d \) & Spatial domain \\
\( \mathcal{A}, \mathcal{U} \) & Input/output function spaces \\
\( a \in \mathcal{A} \) & Input field (e.g., coefficients, initial conditions) \\
\( u \in \mathcal{U} \) & Output field (e.g., PDE solution) \\
\( G^\dagger \) & True PDE solution operator \\
\( G_\theta \) & Learnable neural operator with parameters \( \theta \) \\
\( v_t \in \mathbb{R}^{N_s \times T \times d_v} \) & Latent representation at layer \( t \) \\
\( W \) & Learnable pointwise (local) linear operator \\
\( \mathcal{K}_\phi \) & Learnable global spectral operator \\
\( \sigma \) & Nonlinear activation function (e.g., GELU) \\
\( N_s \) & Number of spatial nodes \\
\( T \) & Number of temporal steps \\
\( T_{\text{in}}, T_{\text{out}} \) & Number of input and output time steps \\
\( \Phi_{k_s} \in \mathbb{R}^{N_s \times k_s} \) & Truncated graph Laplacian eigenbasis \\
\( \hat{v}_s \in \mathbb{R}^{k_s \times T \times d_v} \) & Spatial graph Fourier transform of latent features \\
\( \hat{v}_{st} \in \mathbb{C}^{k_s \times k_t \times d_v} \) & Joint spatiotemporal Fourier representation \\
\( k_s \) & Number of retained spatial frequency modes \\
\( k_t \) & Number of retained temporal frequency modes \\
\( \mathcal{F}_t, \mathcal{F}_t^{-1} \) & Temporal Fourier transform and its inverse \\
\( R_\phi \in \mathbb{C}^{k_s \times k_t \times d_v \times d_v} \) & Learnable spectral convolution kernel \\
\( A \in \mathbb{R}^{N_s \times N_s} \) & Graph adjacency matrix (Gaussian-weighted) \\
\( D \in \mathbb{R}^{N_s \times N_s} \) & Degree matrix \\
\( \tilde{L} \in \mathbb{R}^{N_s \times N_s} \) & Normalized graph Laplacian \\
\( \Lambda \in \mathbb{R}^{N_s \times N_s} \) & Diagonal matrix of Laplacian eigenvalues \\
\( \hat{f} \) & Graph Fourier coefficients of a signal \( f \) \\
\bottomrule
\end{tabular}
}
\end{table}

\clearpage
\section{Approximation Guarantees, Truncation Bounds, and Equivariance of GSNO}
\label{app:theory}

\renewcommand{\thefigure}{B.\arabic{figure}}  
\renewcommand{\thetable}{B.\arabic{table}}  

\setcounter{figure}{0}
\setcounter{table}{0}

This appendix summarizes key theoretical properties of GSNO that support its design and empirical behavior. We first characterize the Laplacian spectral basis and quantify the approximation error induced by truncating to $k_s$ spatial modes via a graph-Sobolev tail bound. We then formalize expressivity within the class of joint space--time spectral-multiplier neural operators realized by GSNO, and establish permutation equivariance under node relabeling. Finally, we briefly discuss mesh-refinement consistency under standard spectral convergence assumptions, connecting these results to the zero-shot super-resolution experiments in the main text.

\subsection{Preliminaries and notation}
Let $G=(V,E)$ be an undirected graph with $|V|=N_s$ nodes and normalized graph Laplacian
$\tilde L\in\mathbb{R}^{N_s\times N_s}$. Let
\begin{equation}
\tilde L=\Phi\Lambda\Phi^\top,\qquad
\Lambda=\mathrm{diag}(\lambda_1\le\cdots\le\lambda_{N_s}),\quad \Phi^\top\Phi=I,
\end{equation}
where $\Phi=[\phi_1,\dots,\phi_{N_s}]$ is an orthonormal eigenbasis. For $k_s\le N_s$, define the truncated
eigenbasis $\Phi_{k_s}\in\mathbb{R}^{N_s\times k_s}$ and the orthogonal projector onto the first $k_s$ eigenspaces
\begin{equation}
P_{k_s}:=\Phi_{k_s}\Phi_{k_s}^\top.
\end{equation}
For time-dependent problems with $T$ time steps, let $\mathcal{F}_t$ and $\mathcal{F}_t^{-1}$ denote the
(unitary) discrete Fourier transform and its inverse along the temporal axis. For real-valued sequences we use
the real FFT convention; the statements below are unchanged up to standard rFFT conjugate-symmetry bookkeeping,
which is implicitly enforced by the inverse transform.

\subsection{Completeness of the Laplacian basis and truncation projector}
\begin{lemma}[Completeness on a given discretization]
\label{lem:complete}
For the fixed graph Laplacian $\tilde L$, the eigenvectors $\{\phi_i\}_{i=1}^{N_s}$ form an orthonormal basis
of $\mathbb{R}^{N_s}$. Consequently, $P_{k_s} f \to f$ for all $f\in\mathbb{R}^{N_s}$ as $k_s\to N_s$.
\end{lemma}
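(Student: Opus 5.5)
The plan is to invoke the real spectral theorem for $\tilde L$ and then read off the projector claim directly.

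\textbf{Step 1: symmetry of $\tilde L$.} Since $G$ is undirected, the weights satisfy $A_{ij}=A_{ji}$, and $D$ is diagonal. Hence $D^{-1/2}AD^{-1/2}$ is symmetric, and so is $\tilde L=I-D^{-1/2}AD^{-1/2}$.

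The only point needing care is that $D^{-1/2}$ exists. This requires $D_{ii}=\sum_j A_{ij}>0$ for every node, i.e. no isolated vertices. That holds for triangulation- or tetrahedralization-based connectivity, because every node belongs to at least one edge and the Gaussian weights are strictly positive. If isolated nodes could occur, I would adopt the usual convention $D^{-1/2}_{ii}=0$; symmetry is preserved under that convention, so the argument goes through unchanged. I expect this well-posedness check to be the only real obstacle; everything else is standard linear algebra.

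\textbf{Step 2: spectral theorem.} A real symmetric $N_s\times N_s$ matrix admits an orthogonal diagonalization $\tilde L=\Phi\Lambda\Phi^\top$ with $\Phi^\top\Phi=I$. Within any eigenspace of dimension greater than one, we fix an orthonormal basis of that eigenspace, which is exactly the choice the preliminaries already assume.

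Because $\Phi$ is square with orthonormal columns, it is orthogonal, so $\Phi\Phi^\top=I$ as well. Therefore the $N_s$ columns $\{\phi_i\}$ are orthonormal, hence linearly independent, hence a basis of $\mathbb{R}^{N_s}$.

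\textbf{Step 3: convergence of the projector.} Expand any $f$ as $f=\sum_{i=1}^{N_s}\langle \phi_i,f\rangle\phi_i$. Then $P_{k_s}f=\sum_{i\le k_s}\langle\phi_i,f\rangle\phi_i$, and by Parseval
\[
\|f-P_{k_s}f\|_2^2=\sum_{i>k_s}\langle\phi_i,f\rangle^2 .
\]
This tail is nonincreasing in $k_s$ and vanishes at $k_s=N_s$, where $P_{N_s}=\Phi\Phi^\top=I$. The limit ``$k_s\to N_s$'' is therefore attained exactly at finite $k_s=N_s$, which gives $P_{k_s}f\to f$.
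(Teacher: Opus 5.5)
Your proof is correct and follows the paper's argument: the spectral theorem for the real symmetric $\tilde L$ gives an orthonormal eigenbasis, and expanding $f$ in it shows that $f-P_{k_s}f=\sum_{i>k_s}\langle f,\phi_i\rangle\phi_i$, which vanishes at $k_s=N_s$. Your additional checks are details the paper leaves implicit: the symmetry of $D^{-1/2}AD^{-1/2}$, the positivity of the degrees so that $D^{-1/2}$ exists, and the remark that the ``limit'' is attained exactly because $P_{N_s}=\Phi\Phi^\top=I$.
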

\begin{proof}
Since $\tilde L$ is real symmetric, it admits an orthonormal eigenbasis spanning $\mathbb{R}^{N_s}$.
Writing $f=\sum_{i=1}^{N_s}\langle f,\phi_i\rangle \phi_i$ gives
$P_{k_s}f=\sum_{i=1}^{k_s}\langle f,\phi_i\rangle \phi_i$, hence
$f-P_{k_s}f=\sum_{i>k_s}\langle f,\phi_i\rangle \phi_i\to 0$ as $k_s\to N_s$.
\end{proof}

\subsection{Spectral truncation error bounds via graph-Sobolev regularity}
For $\beta>0$, define the graph-Sobolev (spectral) norm
\begin{equation}
\|f\|_{H^\beta(\tilde L)}^2 \;:=\; \sum_{i=1}^{N_s}(1+\lambda_i)^\beta\,|\langle f,\phi_i\rangle|^2.
\end{equation}

\begin{lemma}[Graph spectral truncation bound]
\label{lem:trunc}
Let $\beta>0$ and $f\in\mathbb{R}^{N_s}$. Then
\begin{equation}
\|f-P_{k_s}f\|_2^2
\;=\;\sum_{i>k_s}|\langle f,\phi_i\rangle|^2
\;\le\;(1+\lambda_{k_s+1})^{-\beta}\,\|f\|_{H^\beta(\tilde L)}^2.
\end{equation}
\end{lemma}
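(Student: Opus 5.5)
The plan is a standard Parseval-plus-monotonicity argument in the eigenbasis of Lemma~\ref{lem:complete}.

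\textbf{Step 1: the equality.} Expand $f=\sum_{i=1}^{N_s}\langle f,\phi_i\rangle\phi_i$ in the orthonormal eigenbasis. As in the proof of Lemma~\ref{lem:complete}, this gives
\[
f-P_{k_s}f=\sum_{i>k_s}\langle f,\phi_i\rangle\phi_i .
\]
Orthonormality of $\{\phi_i\}$ (Parseval/Pythagoras) then yields $\|f-P_{k_s}f\|_2^2=\sum_{i>k_s}|\langle f,\phi_i\rangle|^2$.

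\textbf{Step 2: the inequality.} Use the ordering $\lambda_1\le\cdots\le\lambda_{N_s}$. For every $i>k_s$ we have $\lambda_i\ge\lambda_{k_s+1}$. Since $\tilde L$ is positive semidefinite, all eigenvalues are nonnegative (in fact they lie in $[0,2]$), so $1+\lambda_i\ge 1+\lambda_{k_s+1}\ge 1>0$. Because $\beta>0$, the map $x\mapsto x^\beta$ is increasing on $(0,\infty)$, which gives $1\le (1+\lambda_{k_s+1})^{-\beta}(1+\lambda_i)^\beta$ for each $i>k_s$. Multiply by $|\langle f,\phi_i\rangle|^2$ and sum over $i>k_s$:
\[
\sum_{i>k_s}|\langle f,\phi_i\rangle|^2\le(1+\lambda_{k_s+1})^{-\beta}\sum_{i>k_s}(1+\lambda_i)^\beta|\langle f,\phi_i\rangle|^2 .
\]
Finally, add back the omitted terms $i\le k_s$. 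Each is nonnegative, so the right-hand side is at most $(1+\lambda_{k_s+1})^{-\beta}\|f\|_{H^\beta(\tilde L)}^2$.

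\textbf{Edge cases.} No step is a genuine obstacle. The only care needed is positivity of $1+\lambda_i$, which follows from positive semidefiniteness: $x^\top\tilde L x=\tfrac12\sum_{ij}A_{ij}\bigl(x_i/\sqrt{D_{ii}}-x_j/\sqrt{D_{jj}}\bigr)^2\ge0$, assuming no isolated nodes so that $D$ is invertible. The case $k_s=N_s$ also needs a remark: $\lambda_{N_s+1}$ is undefined, but the left-hand side is $0$, so the bound holds under any convention, for instance $(1+\lambda_{N_s+1})^{-\beta}:=0$.
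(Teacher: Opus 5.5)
Your proof is correct and follows essentially the same route as the paper: the equality comes from the orthonormal eigen-expansion, and the inequality from $(1+\lambda_i)^\beta \ge (1+\lambda_{k_s+1})^\beta$ for $i>k_s$, after which the nonnegative low-mode terms are added back to recover $\|f\|_{H^\beta(\tilde L)}^2$. Your extra remarks on the nonnegativity of the eigenvalues of $\tilde L$ (needed for the power map to be monotone and well defined) and on the convention for $k_s=N_s$ fill in details the paper leaves implicit.
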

\begin{proof}
For $i>k_s$, we have $(1+\lambda_i)^\beta\ge (1+\lambda_{k_s+1})^\beta$. Hence
\[
\sum_{i>k_s}|\langle f,\phi_i\rangle|^2
=\sum_{i>k_s}(1+\lambda_i)^{-\beta}\,(1+\lambda_i)^\beta|\langle f,\phi_i\rangle|^2
\le (1+\lambda_{k_s+1})^{-\beta}\sum_{i>k_s}(1+\lambda_i)^\beta|\langle f,\phi_i\rangle|^2,
\]
which is bounded by $(1+\lambda_{k_s+1})^{-\beta}\|f\|_{H^\beta(\tilde L)}^2$.
\end{proof}

\paragraph{Interpretation.}
Lemma~\ref{lem:trunc} shows that truncation error decays as $k_s$ increases, with faster decay when the signal
has higher graph-Sobolev regularity (i.e., energy concentrated in low graph frequencies). This formalizes the
accuracy--efficiency tradeoff induced by retaining only $k_s$ Laplacian modes.

\subsection{Universality within the class of spectral-multiplier neural operators}
Fix $(N_s,T)$ and define the \emph{joint spectral transform}
\begin{equation}
\mathcal{T}(v)\;:=\;\mathcal{F}_t\!\left(\Phi^\top v\right),
\qquad
\mathcal{T}^{-1}(\hat v)\;:=\;\Phi\,\mathcal{F}_t^{-1}(\hat v),
\end{equation}
acting channelwise on tensors $v\in\mathbb{R}^{N_s\times T\times d_v}$ (and correspondingly in the complex domain).

A \emph{joint spectral multiplier} is a tensor-valued function $M$ that assigns, for each spatial eigen-index $i$
and temporal frequency index $m$, a matrix $M(i,m)\in\mathbb{C}^{d_v\times d_v}$. It defines a linear operator
$\mathcal{K}_M$ by
\begin{equation}
\label{eq:mult}
\mathcal{K}_M(v)
:= \mathcal{T}^{-1}\!\big( M \cdot \mathcal{T}(v)\big),
\end{equation}
where $(M\cdot \hat v)(i,m,:) = M(i,m)\hat v(i,m,:)$.

\begin{definition}[Spectral-multiplier neural operator class]
\label{def:class}
Let $\mathfrak{G}$ denote the class of operators obtained by composing a finite number of layers of the form
\begin{equation}
v_{\ell+1}=\sigma\!\left(W_\ell v_\ell + \mathcal{K}_{M_\ell}(v_\ell)\right),
\qquad \ell=0,\dots,L-1,
\end{equation}
with pointwise linear maps $W_\ell$ (e.g., $1\times1$ convolutions), joint multipliers $M_\ell$,
and pointwise nonlinearity $\sigma$, together with input/output lifts $P,Q$.
\end{definition}

\begin{proposition}[Realizability by GSNO]
\label{prop:realize}
For any given discretization $(N_s,T)$, a GSNO layer with $k_s=N_s$ and full temporal FFT modes implements
\eqref{eq:mult} exactly by setting $R_\phi=M$ (up to rFFT conventions). Consequently, any operator
in the class $\mathfrak{G}$ can be realized by a GSNO network of the same depth and channel width.
\end{proposition}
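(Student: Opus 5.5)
The plan is to argue layer by layer: show that a single GSNO spectral branch equals $\mathcal{K}_M$, then lift this identity to whole networks by induction on depth.

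\emph{Step 1: a single layer.} Set $k_s=N_s$. Then $\Phi_{k_s}=\Phi$ is the full orthonormal eigenbasis of Lemma~\ref{lem:complete}, so $\Phi_{k_s}^\top$ and $\Phi_{k_s}$ coincide with the spatial parts of $\mathcal{T}$ and $\mathcal{T}^{-1}$. Retain all temporal modes, i.e.\ $k_t=\lfloor T/2\rfloor+1$ under the rFFT or $T$ under the full DFT. Then $\mathcal{F}_t$ and $\mathcal{F}_t^{-1}$ in the GSNO layer are exactly the transforms used in $\mathcal{T}$. Comparing the GSNO kernel formula with \eqref{eq:mult} entrywise, the mixing rule $\hat v'(p,q,l)=\sum_j R_\phi(p,q,j,l)\hat v(p,q,j)$ is the matrix--vector product $M(p,q)\hat v(p,q,:)$ once we set $R_\phi(p,q,j,l):=M(p,q)_{l j}$. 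This is only an index transposition, so $\mathcal{K}_\phi=\mathcal{K}_M$. The local branch $W$, the activation $\sigma$ and the lifts $P,Q$ have the same form in GSNO and in Definition~\ref{def:class}, so we take them identical.

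\emph{Step 2: the rFFT bookkeeping, which I expect to be the only real obstacle.} GSNO stores $R_\phi$ only on the nonnegative frequencies $q=0,\dots,\lfloor T/2\rfloor$, while $M$ is defined on all $T$ frequencies. For real inputs, $\mathcal{T}(v)(i,-m)=\overline{\mathcal{T}(v)(i,m)}$, because $\Phi$ is real.

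\begin{itemize}
\item If $M$ is Hermitian-compatible, $M(i,-m)=\overline{M(i,m)}$, which is exactly the condition for $\mathcal{K}_M$ to map real tensors to real tensors, then $M$ is determined by its nonnegative-frequency half. The irfft then reconstructs the full product correctly.
\item There is one subtlety. At $q=0$, and at $q=T/2$ when $T$ is even, irfft discards imaginary parts. So $M(i,0)$ and $M(i,T/2)$ must be real for exactness.
\item For a general complex $M$, $\mathcal{K}_M$ maps real inputs to complex outputs. Its real part is realized by the GSNO with the Hermitian symmetrization $\tfrac12\big(M(i,m)+\overline{M(i,-m)}\big)$.
\end{itemize}

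This is precisely what ``up to rFFT conventions'' should mean, and I would state it explicitly. An alternative that avoids the issue is to use a full complex FFT in time, for which the identification is exact with no restriction on $M$.

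\emph{Step 3: whole networks.} Take any $G\in\mathfrak{G}$ with layers $(W_\ell,M_\ell)$ and lifts $P,Q$. Build a GSNO with the same $P,Q,\sigma$, with $W$ equal to $W_\ell$ in layer $\ell$, and with $R_{\phi,\ell}$ obtained from $M_\ell$ as in Steps 1--2. Induction on $\ell$ then gives identical latent states $v_\ell$ at every layer, since each layer is the same map applied to the same input. Hence the outputs coincide. Depth equals $L$ and channel width equals $d_v$ by construction.
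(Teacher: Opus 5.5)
Your proof is correct and follows the paper's argument. You set $k_s=N_s$ so that $\Phi_{k_s}=\Phi$, keep all temporal modes, and identify $R_\phi$ with $M$; you also correctly note the index transposition $R_\phi(p,q,j,l)=M(p,q)_{lj}$, which the paper glosses over. You then compose layer by layer with the same $W_\ell$, $\sigma$, $P$, $Q$. Your Step~2 makes explicit what the paper leaves as ``up to rFFT conventions''. Note that your second bullet is not an extra requirement: the Hermitian condition $M(i,-m)=\overline{M(i,m)}$, applied at $m\equiv -m \pmod T$, already forces $M(i,0)$ and $M(i,T/2)$ to be real.
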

\begin{proof}
With $k_s=N_s$, we have $\Phi_{k_s}=\Phi$ and $\Phi\Phi^\top=I$. With full FFT modes, $\mathcal{F}_t$ is invertible.
Comparing \eqref{eq:mult} to the GSNO spectral branch,
$\Phi\,\mathcal{F}_t^{-1}(R_\phi \cdot \mathcal{F}_t(\Phi^\top v))$, we obtain exact equality by choosing $R_\phi=M$.
Composition with pointwise maps $W_\ell$ and nonlinearities $\sigma$ is identical in Definition~\ref{def:class}.
\end{proof}

\begin{proposition}[Approximation by truncation]
\label{prop:trunc_operator}
Let $\mathcal{K}_M$ be a spectral multiplier operator \eqref{eq:mult}. Let $\Pi_{k_s}$ denote truncation of the
spatial spectrum to the first $k_s$ modes (i.e., replacing $\Phi$ with $\Phi_{k_s}$ and zeroing higher modes).
Then for any $v$,
\begin{equation}
\|\mathcal{K}_M(v)-\mathcal{K}_M(\Pi_{k_s}v)\|_2
\;\le\;
\|M\|_{\mathrm{op}}\;\|v-P_{k_s}v\|_2,
\end{equation}
where $\|M\|_{\mathrm{op}}:=\max_{i,m}\|M(i,m)\|_{2\to 2}$.
In particular, if $v$ has bounded graph-Sobolev norm, the RHS decays with $k_s$ by Lemma~\ref{lem:trunc}.
\end{proposition}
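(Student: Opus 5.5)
The plan is to exploit linearity of $\mathcal{K}_M$ and the fact that, with the full orthonormal basis $\Phi$ and the unitary temporal transform $\mathcal{F}_t$, the joint transform $\mathcal{T}$ is an isometry. First, fix the meaning of $\Pi_{k_s}$. Replacing $\Phi$ by $\Phi_{k_s}$ in the forward transform and zeroing the higher modes amounts to applying $P_{k_s}=\Phi_{k_s}\Phi_{k_s}^\top$ channelwise and timewise to $v$. I would therefore identify $\Pi_{k_s}v$ with $P_{k_s}v$, where $P_{k_s}$ acts on the spatial index for each $(t,c)$. Since $P_{k_s}$ acts only on the spatial index, it commutes with $\mathcal{F}_t$. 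Hence $\mathcal{T}(P_{k_s}v)$ equals $\mathcal{T}(v)$ with all rows $i>k_s$ set to zero. This also shows that the truncated GSNO branch $\Phi_{k_s}\mathcal{F}_t^{-1}(M\cdot\mathcal{F}_t(\Phi_{k_s}^\top v))$ coincides with $\mathcal{K}_M(P_{k_s}v)$.

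Second, by linearity of $\mathcal{T}$, of the multiplier and of $\mathcal{T}^{-1}$, we have $\mathcal{K}_M(v)-\mathcal{K}_M(P_{k_s}v)=\mathcal{K}_M(w)$ with $w:=v-P_{k_s}v$. It then suffices to show $\|\mathcal{K}_M(w)\|_2\le\|M\|_{\mathrm{op}}\|w\|_2$ for an arbitrary tensor $w$. Because $\Phi$ is orthogonal and $\mathcal{F}_t$ is unitary (as stipulated in the preliminaries), $\mathcal{T}$ and $\mathcal{T}^{-1}$ preserve the Frobenius norm over all indices. Thus $\|\mathcal{K}_M(w)\|_2=\|M\cdot\hat w\|_2$, where $\hat w=\mathcal{T}(w)$. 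Then
\[
\|M\cdot\hat w\|_2^2=\sum_{i,m}\|M(i,m)\hat w(i,m,:)\|_2^2\le\|M\|_{\mathrm{op}}^2\sum_{i,m}\|\hat w(i,m,:)\|_2^2=\|M\|_{\mathrm{op}}^2\|w\|_2^2 ,
\]
which gives the claimed bound. The final sentence of the proposition then follows by applying Lemma~\ref{lem:trunc} to each spatial slice $v(:,t,c)$ and summing. This gives $\|v-P_{k_s}v\|_2^2\le(1+\lambda_{k_s+1})^{-\beta}\sum_{t,c}\|v(:,t,c)\|_{H^\beta(\tilde L)}^2$.

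The main obstacle is the rFFT bookkeeping rather than the inequality itself. With a real FFT, the stored half-spectrum is not an isometric image of the real signal; interior frequencies carry double weight. I would handle this by working with the full unitary complex DFT throughout. I would interpret $M$ on negative frequencies through the conjugate-symmetric extension $M(i,-m)=\overline{M(i,m)}$, which leaves $\|M\|_{\mathrm{op}}$ unchanged. Then $\mathcal{F}_t^{-1}$ returns a real signal and Parseval holds exactly. If unitary normalization is not assumed, the forward and inverse constants cancel in the composition $\mathcal{T}^{-1}\circ M\circ\mathcal{T}$, so the bound remains valid.
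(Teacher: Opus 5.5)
Your proposal is correct and follows the same route as the paper. You use linearity to reduce to $\mathcal{K}_M(v-P_{k_s}v)$, use that $\mathcal{T}$ is unitary, and bound the multiplier frequency-wise by $\|M\|_{\mathrm{op}}$. Your identification $\Pi_{k_s}v=P_{k_s}v$, the slice-wise use of Lemma~\ref{lem:trunc}, and the rFFT bookkeeping make explicit what the paper leaves implicit.

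One small nit concerns the self-conjugate frequencies $m=0$ (and $m=T/2$ for even $T$). There the extension $M(i,-m)=\overline{M(i,m)}$ is consistent only if $M(i,m)$ is real, so you should use $\mathrm{Re}\,M(i,m)$ instead. This is what the inverse real FFT effectively applies to the real coefficients at those bins, and since $\|\mathrm{Re}\,M(i,m)\|_{2\to 2}\le\|M(i,m)\|_{2\to 2}$, the bound is unaffected.
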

\begin{proof}
The joint spectral transform is unitary (up to FFT convention constants), hence it preserves $\ell_2$ norms.
The multiplier acts pointwise in $(i,m)$, so
$\|M\cdot \hat v\|_2\le \|M\|_{\mathrm{op}}\|\hat v\|_2$.
Applying this to $\hat v=\mathcal{T}(v-P_{k_s}v)$ yields the claim.
\end{proof}

\paragraph{Takeaway.}
Propositions~\ref{prop:realize}--\ref{prop:trunc_operator} establish a precise universality statement:
GSNO is universal \emph{within} the class of neural operators whose global mixing is realized by joint
graph/time spectral multipliers, and truncation introduces a controlled, regularity-dependent approximation error.

\subsection{Permutation equivariance (formal mesh invariance under node relabeling)}
\begin{proposition}[Permutation equivariance]
\label{prop:perm}
Let $P$ be any permutation matrix (node relabeling). Define the permuted signal $(Pv)(i,:)=v(\pi(i),:)$ and
the permuted Laplacian $\tilde L':=P\tilde L P^\top$. Let $\Phi'_{k_s}$ be any orthonormal eigenbasis spanning
the same $k_s$-dimensional invariant subspace of $\tilde L'$ as $P\Phi_{k_s}$.
Then the truncated projector satisfies $P'_{k_s}=P P_{k_s} P^\top$, and the GSNO spectral operator obeys
\begin{equation}
\mathcal{K}_\phi'(Pv)=P\,\mathcal{K}_\phi(v),
\end{equation}
where $\mathcal{K}_\phi'$ is computed using $\tilde L'$ and $\Phi'_{k_s}$.
Consequently, each GSNO layer (including the pointwise branch $W$) is permutation equivariant.
\end{proposition}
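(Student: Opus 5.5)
The plan is to reduce everything to one algebraic fact: conjugating by $P$ intertwines all the spectral objects. I will then check that the learned kernel $R_\phi$ is insensitive to the residual freedom in the choice of eigenbasis.

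\emph{Step 1 (Laplacian and eigenvectors).} Permutation matrices are orthogonal, so $\tilde L' = P\tilde L P^\top$ satisfies $\tilde L'(P\phi_i) = P\tilde L\phi_i = \lambda_i P\phi_i$. Hence $P\Phi_{k_s}$ is an orthonormal set of eigenvectors of $\tilde L'$ with the same eigenvalues. The same holds at the construction level: relabeling nodes permutes $A$ and $D$ consistently, so $\tilde L'$ is exactly the Laplacian built from the relabeled graph.

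\emph{Step 2 (projector).} $\Phi'_{k_s}$ and $P\Phi_{k_s}$ are two orthonormal bases of the same $k_s$-dimensional subspace. Therefore $\Phi'_{k_s} = P\Phi_{k_s} Q$ for some orthogonal $Q\in\mathbb{R}^{k_s\times k_s}$. It follows that $P'_{k_s} = P\Phi_{k_s}QQ^\top\Phi_{k_s}^\top P^\top = P P_{k_s} P^\top$. This part is basis-independent and needs no further hypothesis.

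\emph{Step 3 (spectral operator).} Since $P$ acts only on the node index, it commutes with $\mathcal{F}_t$, $\mathcal{F}_t^{-1}$, and the channel mixing. Substituting gives
\[
\mathcal{K}'_\phi(Pv) = P\Phi_{k_s} Q\,\mathcal{F}_t^{-1}\!\left(R_\phi\cdot \mathcal{F}_t(Q^\top \Phi_{k_s}^\top v)\right).
\]
This reduces the claim to showing that $Q$ commutes with the map $\hat v\mapsto R_\phi\cdot\hat v$, which acts mode by mode in the graph index $p$.

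\emph{Step 4 (commutation with $R_\phi$).} If $Q$ is diagonal with entries $\pm1$ (sign ambiguity of simple eigenvalues), then $Q_{pp}^2=1$. Since $R_\phi(p,q,\cdot,\cdot)$ multiplies only the $p$-th coefficient, the signs cancel and $Q R_\phi Q^\top = R_\phi$, which yields $\mathcal{K}'_\phi(Pv)=P\mathcal{K}_\phi(v)$. In the most natural reading of the hypothesis, $\Phi'_{k_s}=P\Phi_{k_s}$, we have $Q=I$ and the identity is immediate.

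\emph{Step 5 (full layer).} The pointwise branch $W$ acts on channels node-by-node, so $W(Pv)=PW(v)$. The activation $\sigma$ is entrywise, so $\sigma(Pu)=P\sigma(u)$. Summing and applying $\sigma$ gives layer equivariance, and composing layers (with pointwise $P,Q$ lifts) gives equivariance of the whole network.

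\emph{Main obstacle.} The hard step is Step 4 when the eigenvalues in the retained band are degenerate. In that case $Q$ may be a genuine rotation mixing modes within an eigenspace, and a mode-indexed kernel $R_\phi$ need not commute with it. I would first state the result under the convention that $\Phi'_{k_s}=P\Phi_{k_s}$ up to signs, which is what a deterministic eigensolver applied to relabeled data produces. I would then handle the degenerate case by noting that $Q$ is block-diagonal over eigenspaces, so equivariance holds exactly whenever $R_\phi(p,q)$ is constant over each degenerate block. I would also flag the case where the cut-off at $k_s$ splits an eigenspace, since there the invariant subspace itself is ill-defined and only the projector statement of Step 2 survives.
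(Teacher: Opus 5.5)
Your argument has the same skeleton as the paper's: conjugation by $P$ intertwines $\tilde L$ and $\tilde L'$, the projector is basis-independent, $P$ commutes with $\mathcal{F}_t$ and with channel mixing, and $W$ and $\sigma$ act node-wise. At the decisive step, however, you do something the paper does not, and your treatment is the correct one.

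The paper handles the spectral branch by asserting that the operations ``commute with permutation when written at the projector/subspace level.'' That would be valid only if $\mathcal{K}_\phi$ depended on $\Phi_{k_s}$ through $P_{k_s}=\Phi_{k_s}\Phi_{k_s}^\top$ alone. It does not, because the kernel $R_\phi(p,q,\cdot,\cdot)$ is indexed by the graph mode $p$. Writing $\Phi'_{k_s}=P\Phi_{k_s}Q$ isolates exactly what is needed, namely that $Q$ commute with $\hat v\mapsto R_\phi\cdot\hat v$.

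Your case analysis is right. Diagonal sign matrices cancel, so the identity holds when the retained eigenvalues are simple. More generally it holds when $R_\phi$ is constant on each degenerate block, for instance when $R_\phi(p,q)$ depends on $p$ only through $\lambda_p$.

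Conversely, the proposition as literally stated (``any orthonormal eigenbasis'') is false, already for $P=I$. Take $\lambda_2=\lambda_3$, $d_v=1$, $R_\phi(2,q,1,1)=1$ and $R_\phi(3,q,1,1)=0$ for all $q$. Let $Q$ be any rotation of that $2\times 2$ block other than $\pm I$. Then $Q\,\mathrm{diag}(1,0)\,Q^\top\neq\mathrm{diag}(1,0)$, so $\mathcal{K}'_\phi v\neq\mathcal{K}_\phi v$ for generic $v$. Your restricted version is therefore the provable statement. The paper's remark on eigenvector sign and degeneracy names the issue but does not repair the proof.

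One small correction to your final sentence concerns the case where the cutoff at $k_s$ splits an eigenspace. The proposition's hypothesis still pins the subspace to $\mathrm{span}(P\Phi_{k_s})$, so the same block argument applies to the part of that eigenspace lying inside the band. What genuinely breaks is that an eigensolver run on $\tilde L'$ need not return this subspace at all. In that case even the projector identity of your Step 2 fails, so it is not true that ``only the projector statement survives.''
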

\begin{proof}
Since $\tilde L'=P\tilde L P^\top$, the invariant subspace spanned by the first $k_s$ eigenvectors is mapped by $P$.
While individual eigenvectors may change sign or rotate within degenerate eigenspaces, the projector onto the
subspace is unique: $P'_{k_s}=\Phi'_{k_s}(\Phi'_{k_s})^\top = P\Phi_{k_s}\Phi_{k_s}^\top P^\top = P P_{k_s}P^\top$.
Because the GSNO spectral branch is linear and expressed via projection, FFT, frequency-wise multiplication,
inverse FFT, and reconstruction, these operations commute with permutation when written at the projector/subspace
level, yielding $\mathcal{K}_\phi'(Pv)=P\,\mathcal{K}_\phi(v)$. The pointwise map $W$ satisfies $W(Pv)=P(Wv)$, and
pointwise nonlinearities preserve equivariance.
\end{proof}

\begin{remark}[Eigenvector sign and degeneracy in practice]
\label{rem:eig_ambiguity}
Proposition~\ref{prop:perm} is most naturally understood at the level of the invariant subspace (or projector)
$P_{k_s}$, since eigenvectors are only defined up to sign and may rotate within degenerate eigenspaces. Numerical
implementations may enforce a deterministic eigenvector convention (e.g., fixing the sign by requiring the
largest-magnitude entry of each eigenvector to be positive) or otherwise use subspace-invariant constructions
to avoid coefficient-orientation ambiguity.
\end{remark}

\subsection{Mesh refinement consistency (conditional remark)}
\begin{remark}[Refinement limit and zero-shot super-resolution]
\label{rem:refine}
A full proof of $G_\theta^{(h)}\to G^\dagger$ as $h \to 0$ requires additional assumptions from spectral graph
convergence (e.g., convergence of discrete Laplacian spectral projectors to the Laplace--Beltrami projector on
the underlying domain) and regularity/stability of the target operator under refinement. In particular, one typically
assumes uniform boundedness of an appropriate (continuous) Sobolev norm of the underlying fields across the refinement
sequence, together with spectral convergence of the projectors. Under such standard conditions, the truncation bounds
in Lemma~\ref{lem:trunc} and stability of multiplier operators imply a discretization-consistent low-frequency
representation whose approximation error decreases as $(k_s,k_t)$ increase. In the main paper we validate this refinement
robustness empirically via zero-shot super-resolution experiments.
\end{remark}

\clearpage
\section{PDE Setup and Data Generation}
\label{appendix:datasets}
\renewcommand{\thefigure}{C.\arabic{figure}}  
\renewcommand{\thetable}{C.\arabic{table}}  

\setcounter{figure}{0}
\setcounter{table}{0}

\begin{figure}
    \centering
    \includegraphics[trim={0 0 0 0}, clip, width=0.98\textwidth]{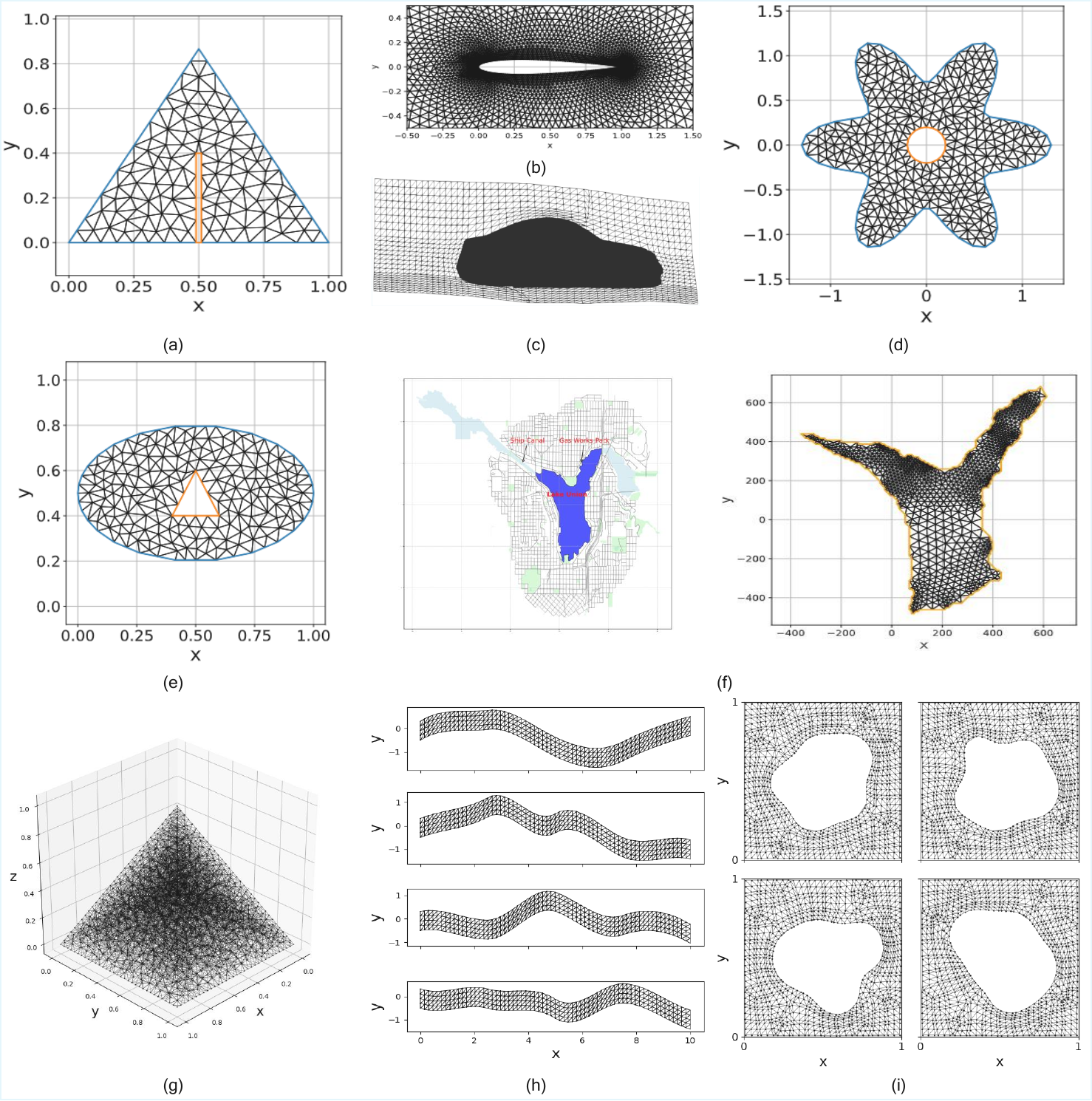}
    \captionsetup{font=footnotesize}
    \caption{Geometries used in the PDE benchmarks. Layout: (a) Darcy---triangle with a notch; (b) 2D Airfoil; (c) Shape-Net 3D Car, shown as a representative 2D cross-sectional slice for visualization; (d) Burgers---six-petal flower with a circular hole; (e) Navier--Stokes---ellipse with a triangular hole; (f) Shallow Water---Lake Union domain with a zoomed view of the southern inlet; (g) 3D Burgers---simple tetrahedral domain; (h) Pipe flow---geometry-dependent curved pipe domains; and (i) Hyper-elasticity---unit-cell domains with sample-dependent internal voids. All cases use irregular or geometry-dependent domains discretized with unstructured meshes.}

    \label{fig:pde_geometries}
\end{figure}

\subsection{\quad 2D Darcy Flow}
\label{appendix:darcy}
We consider the steady-state Darcy flow equation defined on an irregular domain shaped like a triangle with a single notch (see Figure~\ref{fig:pde_geometries}a). The PDE is given by:
\begin{align*}
    -\nabla \cdot \left(a(\mathbf{x}) \nabla u(\mathbf{x})\right) &= f(\mathbf{x}), && \mathbf{x} \in \Omega, \\
    u(\mathbf{x}) &= 0, && \mathbf{x} \in \partial \Omega,
\end{align*}
where $u(\mathbf{x})$ denotes the hydraulic head (solution field), and the forcing term is fixed as \( f(\mathbf{x}) = 1 \).

The spatially-varying diffusion coefficient \( a(\mathbf{x}) \) is drawn from a pushforward Gaussian random field distribution, \( a \sim \psi_\#\mathcal{N}(0,\ (-\Delta + 9I)^{-2}) \), where \( \psi \) is a nonlinear transformation to ensure positivity, and the Laplacian is defined with zero Neumann boundary conditions. The GRF is discretized on unstructured triangular meshes generated using PyMesh. We solve the equation using a generalized finite difference method (GFDM) on multiple mesh resolutions. This setup enables us to assess the model’s performance under complex geometries and varying spatial discretization scales. We curated 1{,}000 samples, allocated 600/200/200 to train/validation/test, and trained for 1{,}000 epochs.

\subsection{\quad Euler equations over a 2D Airfoil}
\label{appendix:Airfoil}
We consider subsonic/transonic flow past a two-dimensional airfoil (see Figure~\ref{fig:pde_geometries}b) using the compressible Euler equations on an unstructured mesh. This case is taken from the MeshGraphNets dataset \citep{pfaff2020learning}. The governing system is
\begin{align*}
    \partial_t \rho + \nabla\!\cdot(\rho\,\mathbf{u}) &= 0, \\
    \partial_t(\rho \mathbf{u}) + \nabla\!\cdot(\rho\,\mathbf{u}\!\otimes\!\mathbf{u} + p\,\mathbf{I}) &= 0,
\end{align*}
where \(\rho\) is density, \(\mathbf{u}\in\mathbb{R}^2\) is velocity, and \(p\) is pressure.

The domain is discretized using an unstructured triangular mesh with \(5{,}233\) vertices and \(10{,}216\) triangles. The dataset is evolved for 10 time steps and contains \(10{,}000\) training samples and \(1{,}000\) samples each for validation and testing. For each sample, the targets are the solution fields \(\{\rho,\,p,\,u_x,\,u_y\}\) defined on the mesh.

\subsection{\quad 2D unsteady Burgers’ Equation}
\label{appendix:burgers}
We study the two-dimensional vector-valued Burgers’ equation defined on an irregular domain shaped like a six-petal flower with a circular hole at its center (see Figure~\ref{fig:pde_geometries}d). The domain is embedded in the unit square and subject to no-slip boundary conditions, enforcing both velocity components to vanish along the boundary:
\begin{align*}
    \partial_t \mathbf{u}(\mathbf{x}, t) + \mathbf{u}(\mathbf{x}, t) \cdot \nabla \mathbf{u}(\mathbf{x}, t) &= \nu \Delta \mathbf{u}(\mathbf{x}, t), && \mathbf{x} \in \Omega,\ t \in (0, T], \\
    \mathbf{u}(\mathbf{x}, 0) &= \mathbf{u}_0(\mathbf{x}), && \mathbf{x} \in \Omega, \\
    \mathbf{u}(\mathbf{x}, t) &= \mathbf{0}, && \mathbf{x} \in \partial \Omega,
\end{align*}
where \( \mathbf{x} = (x, y) \) denotes spatial coordinates and \( \mathbf{u}(\mathbf{x}, t) = (u(\mathbf{x}, t), v(\mathbf{x}, t)) \) is the velocity vector field. The viscosity is set to \( \nu = 0.2 \). Spatial discretization is carried out using a generalized finite difference method (GFDM) over unstructured triangular meshes generated with PyMesh. Temporal integration is performed using a fourth-order adaptive Runge–Kutta scheme implemented through the \texttt{torchdiffeq} package. The initial condition \( \mathbf{u}_0(\mathbf{x}) \) is sampled componentwise from a Gaussian random field with distribution \( \mu = \mathcal{N}(0,\ 625(-\Delta + 25I)^{-2}) \), where the Laplacian is defined with zero Neumann boundary conditions. The PDE is solved across multiple mesh resolutions to evaluate the model’s generalization performance under varying discretization levels. 

The Burgers’ system was simulated for \(T=1.0s\) physical minutes, generating 51 temporal snapshots to capture nonlinear transport and dissipation. We prepared a dataset of 1{,}000 samples, divided into 600 for training, 200 for validation, and 200 for testing, and trained all models for 1{,}000 epochs.

\subsection{\quad 3D unsteady Burgers' equation}
\label{appendix:burgers3d}
We consider the three-dimensional viscous Burgers' equation on a fixed irregular domain, discretized using an unstructured tetrahedral mesh consisting of 1,867 vertices and 7,514 tetrahedra.The dynamics are governed by
\begin{align}
    \partial_t \mathbf{v} + (\mathbf{v}\cdot\nabla)\mathbf{v}
    &= \nu \Delta \mathbf{v},
    \qquad \mathbf{x}\in\Omega,\ t\in(0,T],
\end{align}
where \(\mathbf{v}=(u,v,w)\) denotes the velocity field and \(\nu=0.1\) is the
viscosity. Because the system contains nonlinear advection and diffusion but no
pressure term, it provides a clean and widely used benchmark for learning
nonlinear spatiotemporal operators in three dimensions.

As illustrated in Figure~\ref{fig:pde_geometries}g, the spatial domain \(\Omega\) is a fixed square-pyramid geometry represented by an irregular tetrahedral mesh. We consider different mesh resolutions, resulting in different numbers of vertices and tetrahedral elements. In the reported setup, the mesh contains \(533\) vertices and \(1763\) tetrahedral elements. This setup introduces nontrivial geometry and irregular mesh connectivity, making it well suited for evaluating operator learning methods on non-Cartesian discretizations.

For each sample, the initial condition is defined directly on the mesh nodes by
sampling smooth random fields as finite superpositions of sinusoidal modes:
\begin{equation}
    q_0(\mathbf{x})
    =
    \sum_{m=1}^{M}
    a_m \sin\left(\omega_m \mathbf{k}_m\cdot\mathbf{x}+\phi_m\right),
\end{equation}
where \(q_0\in\{u_0,v_0,w_0\}\), \(\mathbf{k}_m\) is a random unit direction,
\(\omega_m\) is a randomly sampled frequency, \(\phi_m\) is a random phase, and
\(a_m\) is a random amplitude. After sampling, the fields are normalized to a
prescribed magnitude, yielding smooth yet diverse initial velocity profiles
across different realizations.

Homogeneous Dirichlet boundary conditions are imposed on all boundary nodes, so
that \(\mathbf{v}=0\) on \(\partial\Omega\). The system is then advanced over a
finite time horizon \(T\) using an explicit time-stepping scheme.

Reference solutions are generated numerically on the same tetrahedral mesh,
resulting in spatiotemporal velocity fields \((u,v,w)\) defined at the mesh
vertices. Each trajectory corresponds to a different realization of the initial
condition while sharing the same geometry and governing equation. We use this
benchmark to assess the ability of GSNO to model nonlinear transport and
diffusion on three-dimensional unstructured domains.

\subsection{\quad 2D Navier–Stokes Equation}
\label{appendix:nse}
We study the two-dimensional incompressible Navier–Stokes equations in vorticity–stream function formulation, defined on an irregular domain shaped like an ellipse with a triangular hole (see Figure~\ref{fig:pde_geometries}e). The governing equations are:
\begin{align*}
    \partial_t w(\mathbf{x}, t) + \mathbf{u}(\mathbf{x}, t) \cdot \nabla w(\mathbf{x}, t) &= \nu \Delta w(\mathbf{x}, t) + f(\mathbf{x}), && \mathbf{x} \in \Omega,\ t \in (0, T], \\
    -\Delta \psi(\mathbf{x}, t) &= w(\mathbf{x}, t), && \mathbf{x} \in \Omega, \\
    \mathbf{u}(\mathbf{x}, t) &= \nabla^\perp \psi(\mathbf{x}, t) = \left( \frac{\partial \psi}{\partial y},\ -\frac{\partial \psi}{\partial x} \right), && \mathbf{x} \in \Omega, \\
    w(\mathbf{x}, 0) &= w_0(\mathbf{x}), && \mathbf{x} \in \Omega.
\end{align*}
$\nu$ here is the viscosity. No-slip boundary conditions are imposed by enforcing \( \mathbf{u} = 0 \) and \( \frac{\partial w}{\partial n} = 0 \) on \( \partial \Omega \). The initial vorticity \( w_0(\mathbf{x}) \) is sampled from a Gaussian random field with law \( \mathcal{N}(0,\ (-\Delta + 49I)^{-2.5}) \), where the Laplacian is equipped with zero Neumann boundary conditions. Spatial discretization is performed using a generalized finite difference method (GFDM) on unstructured triangular meshes generated via PyMesh. Time integration is carried out using a fourth-order adaptive Runge–Kutta scheme via the \texttt{torchdiffeq} package, consistent with the Burgers experiment.

The external forcing term is defined as:
\[
f(\mathbf{x}) = 0.1 \left( \sin(2\pi(x + y)) + \cos(2\pi(x - y)) \right).
\]

The Navier–Stokes solver was run for \(T=10s\) minutes of physical time, with 51 solution snapshots saved to resolve vortical dynamics and flow separation. From this, we generated a dataset of 1{,}000 samples, partitioned into 600 for training, 200 for validation, and 200 for testing, and trained all models for 1{,}000 epochs.

\subsection{\quad 2D Shallow Water Equations}
\label{appendix:swe}

We consider the two-dimensional nonlinear Shallow Water Equations (SWE) in conservative form, defined over an irregular domain shaped like the Lake Union (see Figure~\ref{fig:pde_geometries}f). The SWE system models the evolution of water surface height and horizontal momentum under gravity and is widely used for simulating wave propagation, including tsunami and flood inundation scenarios. The governing equations are:

\begin{align*}
    \partial_t h(\mathbf{x}, t) + \nabla \cdot (h \mathbf{v})(\mathbf{x}, t) &= 0, && \mathbf{x} \in \Omega,\ t \in (0, T], \\
    \partial_t (h v_x)(\mathbf{x}, t) + \nabla \cdot \left( h v_x^2 + \frac{1}{2} g h^2,\ h v_x v_y \right)(\mathbf{x}, t) &= 0, && \mathbf{x} \in \Omega,\ t \in (0, T], \\
    \partial_t (h v_y)(\mathbf{x}, t) + \nabla \cdot \left( h v_x v_y,\ h v_y^2 + \frac{1}{2} g h^2 \right)(\mathbf{x}, t) &= 0, && \mathbf{x} \in \Omega,\ t \in (0, T],
\end{align*}

where \( h \) denotes the fluid height, \( \mathbf{v} = (v_x, v_y) \) is the velocity field, and \( g = 8.81\) is the gravitational acceleration. The state variables are collectively represented as \( \mathbf{u} = [h,\ hu,\ hv]^\top \), and the system is solved using a finite volume method with Rusanov flux.

The shallow water simulation setup mimics the propagation of surface gravity waves initiated by a localized disturbance—an abstraction often used to model real-world scenarios such as tsunami generation from undersea earthquakes or landslides. The simulation begins with a quiescent water column and introduces a spatially localized Gaussian perturbation in the height field:

\begin{align*}
    h(\mathbf{x}, 0) &= h_{\text{base}} + \text{max\_field} \cdot \exp\left( -\frac{(x - x_c)^2 + (y - y_c)^2}{2 \sigma^2} \right), \\
    hu(\mathbf{x}, 0) &= 0, \quad hv(\mathbf{x}, 0) = 0, \quad \mathbf{x} \in \Omega,
\end{align*}

where \( (x_c, y_c) \in \Omega \) denotes the center of the perturbation—randomly sampled for each instance—and \( \sigma \) controls the spread of the Gaussian. We use fixed values \( h_{\text{base}} = 5.0 \), \( \text{max\_field} = 1.0 \), and \( \sigma = 20 \). This setup leads to outward-propagating circular wavefronts, reminiscent of the early stages of tsunami evolution in enclosed basins or coastal zones.

Reflective (wall) boundary conditions are applied on \( \partial \Omega \), enforcing zero normal velocity at the domain boundary:

\[
\mathbf{v}_{\text{inv}} = \mathbf{v} - 2(\mathbf{v} \cdot \mathbf{n})\mathbf{n},
\]

where \( \mathbf{n} \) is the outward unit normal vector. This condition ensures zero penetration and free-slip behavior, making it appropriate for modeling wave reflection against rigid coastal boundaries or natural terrain features. The SWE system is discretized on an unstructured triangular mesh containing 3,663 nodes, representing the Lake Union geometry (see Figure~\ref{fig:pde_geometries}f). A finite volume method is used to solve the conservative form of the equations, with Rusanov flux applied at each face. Reflective wall boundary conditions are enforced on all domain boundaries. Temporal integration is performed using a fourth-order adaptive Runge–Kutta scheme implemented via the \texttt{torchdiffeq} package. The simulation spans over 30 physical minutes, during which 51 solution snapshots are saved to capture the wave propagation dynamics. This configuration allows us to simulate wave propagation and reflection in a closed, irregular lake. The dataset includes multiple Gaussian-perturbed initial conditions sampled on varying mesh resolutions, enabling mesh-invariant surrogate modeling and resolution-aware prediction benchmarks. For this problem, we generated a dataset of 1{,}000 samples, divided into 600 for training, 200 for validation, and 200 for testing, and trained all models for 1{,}000 epochs.

\subsection{\quad Pipe Flow with Navier--Stokes Equation}
\label{appendix:pipe}

We consider incompressible flow through a two-dimensional pipe with geometry-dependent
centerline deformation. The problem is governed by the incompressible Navier--Stokes equations:
\begin{align}
    \partial_t \mathbf{v}
    + (\mathbf{v}\cdot\nabla)\mathbf{v}
    &=
    -\nabla p + \nu \Delta \mathbf{v},
    \qquad \mathbf{x}\in\Omega,\ t\in(0,T], \\
    \nabla\cdot\mathbf{v} &= 0,
    \qquad \mathbf{x}\in\Omega,\ t\in(0,T],
\end{align}
where \(\mathbf{v}\) is the velocity field, \(p\) is the pressure, and
\(\nu=0.005\) is the viscosity. A parabolic inlet velocity profile with maximum velocity
\(\mathbf{v}=[1,0]\) is imposed at the inlet, a free boundary condition is imposed at the outlet,
and no-slip boundary conditions are imposed along the pipe walls.

We use the pipe-flow benchmark from \cite{li2023fourier}. Reference solutions are generated using an implicit finite element solver with approximately \(4{,}000\) Taylor--Hood \(Q_2\)--\(Q_1\) mixed elements. The original data are provided on a \(129\times129\) structured representation of each deformed pipe, where the input consists of mesh point locations and the target output is the horizontal velocity field at the corresponding points.

As illustrated in Figure~\ref{fig:pde_geometries}h, the pipe has length \(10\) and width \(1\). Its centerline is parameterized by four piecewise cubic polynomials determined by the vertical positions and slopes of five uniformly spaced control nodes. The vertical positions are sampled from \(U[-2,2]\), while the slopes are sampled from \(U[-1,1]\). Therefore, each sample defines a different curved pipe geometry, making this a varying-geometry Navier--Stokes problem. The domain is discretized using an unstructured triangular mesh with \(4{,}225\) vertices and \(8{,}192\) triangles.

To evaluate GSNO across mesh resolutions, we construct additional dataset versions by resampling the solution fields for each pipe geometry onto point clouds with different densities. This produces different node locations and graph connectivity across resolutions while preserving the sample-dependent pipe geometries. The dataset contains \(1{,}000\) training samples and \(200\) test samples. This benchmark evaluates GSNO’s ability to model flow fields on geometry-dependent domains while preserving boundary-layer and global transport behavior.

\subsection{\quad Hyper-Elasticity Equation}
\label{appendix:hyperelasticity}

We consider a two-dimensional hyper-elastic solid mechanics benchmark from \cite{li2023fourier}, defined on a unit cell \(\Omega=[0,1]\times[0,1]\) with a sample-dependent void at the center. The governing equation for the solid body is
\begin{equation}
    \rho_s \frac{\partial^2 \mathbf{u}}{\partial t^2}
    + \nabla \cdot \boldsymbol{\sigma} = 0,
    \qquad \mathbf{x}\in\Omega,
\end{equation}
where \(\rho_s\) is the material density, \(\mathbf{u}\) is the displacement field, and
\(\boldsymbol{\sigma}\) is the stress tensor. The stress is related to strain through an incompressible Rivlin--Saunders hyper-elastic material model:
\begin{equation}
    \boldsymbol{\sigma}
    =
    \frac{\partial W(\boldsymbol{\epsilon})}{\partial \boldsymbol{\epsilon}},
    \qquad
    W(\boldsymbol{\epsilon})
    =
    C_1(I_1-3)+C_2(I_2-3),
\end{equation}
where \(I_1=\mathrm{tr}(\mathbf{C})\),
\(I_2=\frac{1}{2}\left[(\mathrm{tr}(\mathbf{C}))^2-\mathrm{tr}(\mathbf{C}^2)\right]\), and
\(\mathbf{C}=2\boldsymbol{\epsilon}+\mathbf{I}\) is the right Cauchy--Green stretch tensor.

The bottom boundary of the unit cell is clamped, while a tensile traction
\(\mathbf{t}=[0,100]\) is applied on the top boundary. As illustrated in Figure~\ref{fig:pde_geometries}i, the central void shape varies across samples and is parameterized by a random radius field, so each sample corresponds to a different internal geometry. The computational domain is represented by an unstructured triangular discretization consisting of \(N_s=972\) vertices and \(1908\) triangular elements. Reference solutions are generated using a finite element solver on geometry-dependent meshes with approximately \(10^3\) spatial points. The original input is given as a point cloud representing the sample-dependent computational domain, and the target output is the stress field. The dataset contains \(1{,}000\) training samples and \(200\) test samples. This benchmark evaluates GSNO's ability to learn stress fields on geometry-dependent domains with localized stress concentration near internal boundaries.

\subsection{\quad Shape-Net 3D Car}
\label{appendix:3d_Car}

We utilize the Car dataset (see Figure~\ref{fig:pde_geometries}c) introduced by \cite{umetani2018learning}, which sources its base geometries from the ShapeNet Car category \citep{chang2015shapenet}. Consistent with the procedure in \cite{umetani2018learning}, these geometries were manually modified to remove tires, spoilers, and side mirrors. The dataset was generated by obtaining time-averaged pressure and velocity fields from simulations of the Reynolds-Averaged Navier-Stokes (RANS) equations. These simulations incorporated a $k$--$\epsilon$ turbulence model, were stabilized using SUPG, and solved with a finite element method. A fixed inlet velocity of 20\,m/s (72\,km/h) was used, resulting in an approximate Reynolds number of $5 \times 10^6$. Each individual simulation required roughly 50 minutes to complete, with car surfaces discretized into 3.7k mesh points. From an initial pool of 889 instances, we selected the 611 water-tight shapes. This final dataset was then divided into 500 instances for training and 111 for validation.

\clearpage
\section{Amortized Cost of Graph Spectral Preprocessing}
\label{appendix:laplacian_eigendecomposition_cost}
\renewcommand{\thefigure}{D.\arabic{figure}}  
\renewcommand{\thetable}{D.\arabic{table}}  
\setcounter{figure}{0}
\setcounter{table}{0}

GSNO requires a truncated graph spectral basis for each computational geometry. Specifically, before training, we compute the \(k_s\) eigenpairs associated with the smallest eigenvalues of the normalized graph Laplacian using the \emph{locally optimal block preconditioned conjugate gradient} (LOBPCG) method~\cite{knyazev2001toward}. This eigendecomposition is performed only once for each geometry and constitutes an offline preprocessing step. Once computed, the same basis is reused across all mini-batches and training epochs and, for a given geometry, during inference. Therefore, eigendecomposition is not part of GSNO's recurrent per-epoch training cost. For fixed-geometry benchmarks, all samples are defined on the same mesh, and a single graph Laplacian eigendecomposition is sufficient for the entire dataset. For varying-geometry benchmarks, one basis is required for each distinct mesh. However, these decompositions are independent and can therefore be processed in batches and parallelized on the GPU.
LOBPCG avoids computing the full graph Laplacian spectrum and instead iteratively extracts only the required \(k_s\) low-frequency modes. This is particularly suitable for GSNO because the graph Laplacian is sparse: edges connect only neighboring mesh nodes, and each matrix--vector product therefore scales with
\(\mathcal{O}(|\mathcal{E}|)\), rather than the
\(\mathcal{O}(N_s^2)\) cost associated with a dense matrix. In addition, GSNO retains only a small truncated basis, with \(k_s=8\) in the experiments reported here. Because this preprocessing cost is incurred only once, its practical contribution should be evaluated after amortization over the full training procedure. For \(N_{\mathrm{epoch}}\) training epochs, the amortized preprocessing cost per epoch is defined as \(t_{\mathrm{amort}} = t_{\mathrm{total}} / N_{\mathrm{epoch}}\), where \(t_{\mathrm{total}}\) is the total wall-clock time required to compute the graph spectral bases for all geometries in the dataset. The corresponding overhead relative to the regular GSNO training cost is calculated as \(\mathrm{Overhead} = \left(t_{\mathrm{amort}} / t_{\mathrm{train}}\right) \times 100\%\), where \(t_{\mathrm{train}}\) denotes the measured training time per epoch. Table~\ref{tab:preprocess_cost} reports these quantities for \(N_{\mathrm{epoch}} = 1{,}000\), corresponding to the training duration used in our experiments.

\begin{table}[h!]
\centering
\caption{%
    Wall-clock cost of graph Laplacian eigendecomposition (LOBPCG,
    $k_s{=}8$) as a one-time offline preprocessing step,
    amortized over 1{,}000 training epochs.
}
\label{tab:preprocess_cost}
\resizebox{\textwidth}{!}{%
\begin{tabular}{lrrcrrrrrr}
\toprule
\multirow{2}{*}{\textbf{Benchmark}}
  & \multirow{2}{*}{$N_s$}
  & \multirow{2}{*}{$N_{\text{geom}}$}
  & \multirow{2}{*}{\textbf{Geometry}}
  & \multicolumn{6}{c}{\textbf{Preprocessing vs.\ training cost}} \\
\cmidrule(lr){5-10}
  &  &  &
  & \makecell{Per batch\\$t_{\text{batch}}$ (s)}
  & \makecell{Batch\\size}
  & \makecell{Total prep\\$t_{\text{total}}$ (s)}
  & \makecell{Amortized\\(s/epoch)}
  & \makecell{Train\\(s/epoch)}
  & \makecell{Overhead\\(\%)} \\
\midrule
Darcy Flow           & 1{,}184  & 1       & Fixed
    & 1.31  & 1  & 1.31   & 0.0013 & 2.24  & 0.058 \\
2D Airfoil           & 5{,}233  & 1       & Fixed
    & 5.74  & 1  & 5.74   & 0.0057 & 9.60  & 0.060 \\
2D Burgers'          & 1{,}168  & 1       & Fixed
    & 1.08  & 1  & 1.08   & 0.0011 & 4.05  & 0.027 \\
3D Burgers'          & 1{,}867  & 1       & Fixed
    & 2.14  & 1  & 2.14   & 0.0021 & 4.55  & 0.047 \\
Navier--Stokes       & 1{,}244  & 1       & Fixed
    & 1.46  & 1  & 1.46   & 0.0015 & 4.17  & 0.035 \\
Shallow Water        & 1{,}830  & 1       & Fixed
    & 1.92  & 1  & 1.92   & 0.0019 & 4.38  & 0.044 \\
\midrule
Hyper-Elasticity     & 972      & 1{,}000 & Varying
    & 1.07  & 50 & 21.40  & 0.0214 & 2.15  & 1.00 \\
Pipe Flow            & 4{,}225  & 1{,}000 & Varying
    & 5.82  & 50 & 116.40 & 0.1164 & 7.20  & 1.62 \\
Shape-Net 3D Car     & 32{,}186 & 500     & Varying
    & 39.70 & 50 & 397.00 & 0.3970 & 24.00 & 1.65 \\
\bottomrule
\end{tabular}%
}
\end{table}

As shown in Table~\ref{tab:preprocess_cost}, the one-time eigendecomposition cost becomes negligible when amortized over training. For the fixed-geometry benchmarks, only one basis is required for the entire dataset. The total preprocessing time ranges from \(1.08\) to \(5.74\) seconds, resulting in an amortized overhead below \(0.07\%\) for every fixed-geometry case. Thus, in these benchmarks, graph spectral preprocessing has effectively no impact on the total training cost. The varying-geometry datasets require substantially more decompositions because every distinct mesh has its own graph Laplacian. Nevertheless, batching and parallel execution keep the total preprocessing cost small relative to the complete training procedure. For Hyper-Elasticity, computing the bases for \(1{,}000\) geometries requires \(21.40\) seconds in total, corresponding to \(0.0214\) seconds per epoch after amortization and an overhead of \(1.00\%\). For Pipe Flow, the total preprocessing time is \(116.40\) seconds, producing an amortized overhead of \(1.62\%\). The most demanding case is Shape-Net 3D Car, which contains \(500\) varying geometries with \(N_s=32{,}186\) nodes per mesh. Computing all graph spectral bases requires \(397\) seconds, or approximately \(6.6\) minutes, as a one-time preprocessing operation. When distributed over \(1{,}000\) training epochs, this corresponds to \(0.397\) seconds per epoch, compared with a regular training cost of \(24.0\) seconds per epoch. The resulting amortized overhead is therefore only \(1.65\%\). The preprocessing time is not determined solely by the number of spatial nodes. LOBPCG convergence also depends on factors such as graph sparsity, connectivity, mesh topology, and spectral separation. This explains why the measured preprocessing times do not vary perfectly linearly with \(N_s\). Nevertheless, the results consistently show that the graph Laplacian eigendecomposition is a modest one-time cost rather than a recurring computational bottleneck. Across all benchmarks, its amortized contribution remains below \(1.7\%\) of the per-epoch training cost, including datasets containing hundreds or thousands of distinct geometries.

\clearpage
\section{Additional results}
\label{appendix:more_results}
\renewcommand{\thefigure}{E.\arabic{figure}}  
\renewcommand{\thetable}{E.\arabic{table}}  
\setcounter{figure}{0}
\setcounter{table}{0}

\subsection{Additional results for Darcy Flow}
Figure~\ref{fig:sample_Darcy} shows a sample prediction for the steady-state Darcy flow problem, comparing the input field, ground truth, and GSNO output. Figure~\ref{fig:gsno-accuracy-gap_darcy} reports the resolution-based generalization performance, highlighting GSNO's accuracy gains over baselines. Figure~\ref{fig:runtime_comparison_darcy} presents the training efficiency of all models across varying mesh resolutions.

\begin{table}
\centering
\captionsetup{font=footnotesize}
\caption{Comparison of neural operator models on Darcy Flow (\( N_s = 1184 \)).}
\label{tab:darcy_metrics}
\scriptsize
\resizebox{0.45\textwidth}{!}{%
\begin{tabular}{l|ccc}
\toprule
\textbf{Model} & \textbf{Relative $L_2$ Error} & \textbf{RMSE} & \textbf{MAE} \\
\midrule
CORAL        & 0.0664 & \(4.82 \times 10^{-4}\) & \(3.38 \times 10^{-4}\) \\
Geo-FNO      & 0.0548 & \(3.97 \times 10^{-4}\) & \(2.91 \times 10^{-4}\) \\
MGKN         & 0.0242 & \(1.74 \times 10^{-4}\) & \(1.43 \times 10^{-4}\) \\
DeepONet     & 0.0312 & \(2.60 \times 10^{-4}\) & \(2.04 \times 10^{-4}\) \\
AMG          & 0.0172 & \(1.29 \times 10^{-4}\) & \(9.83 \times 10^{-5}\) \\
SP$^2$GNO    & 0.0150 & \(1.12 \times 10^{-4}\) & \(8.92 \times 10^{-5}\) \\
GNOT         & 0.0118 & \(8.86 \times 10^{-5}\) & \(6.74 \times 10^{-5}\) \\
Transolver   & 0.0142 & \(1.07 \times 10^{-4}\) & \(8.11 \times 10^{-5}\) \\
\textbf{GSNO (Ours)} & \textbf{0.0083} & \(\mathbf{1.84 \times 10^{-5}}\) & \(\mathbf{1.62 \times 10^{-5}}\) \\
\bottomrule
\end{tabular}
}
\end{table}

\begin{figure}[h!]
    \centering
    \includegraphics[trim={0 0 0 0}, clip, width=0.7\textwidth]{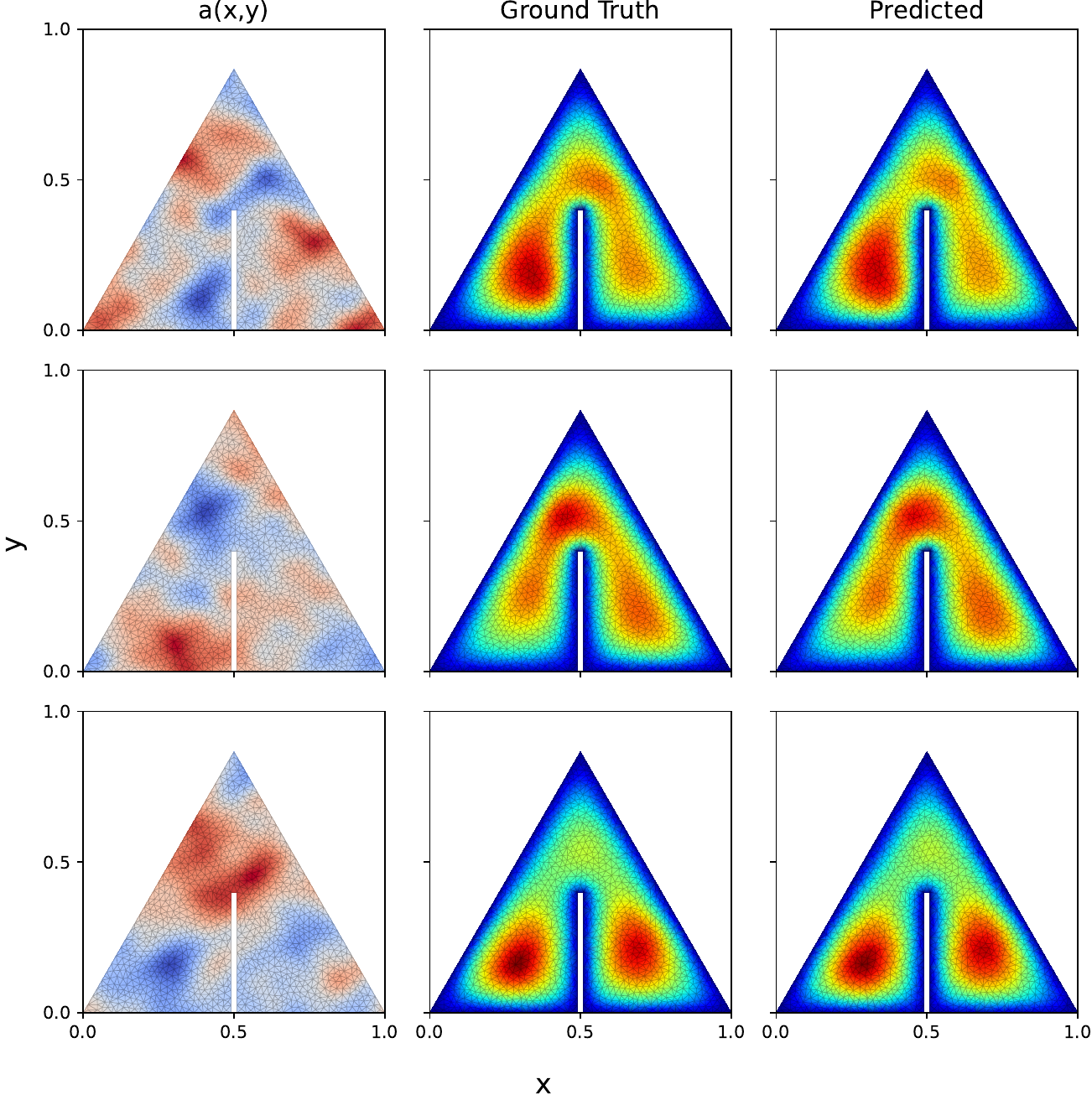}
    \captionsetup{font=footnotesize}
    \caption{Steady-state Darcy flow simulation on an irregular triangular domain with a notch. The first column shows the input diffusion field \( a(x, y) \), the second column shows the ground truth hydraulic head \( u \), and the third column presents GSNO predictions. The model is trained and tested on a mesh with \( N_s = 1184 \) points, highlighting GSNO’s ability to accurately recover the solution from heterogeneous input fields.}
    \label{fig:sample_Darcy}
\end{figure}

\begin{figure}
    \centering
    \includegraphics[width=\textwidth]{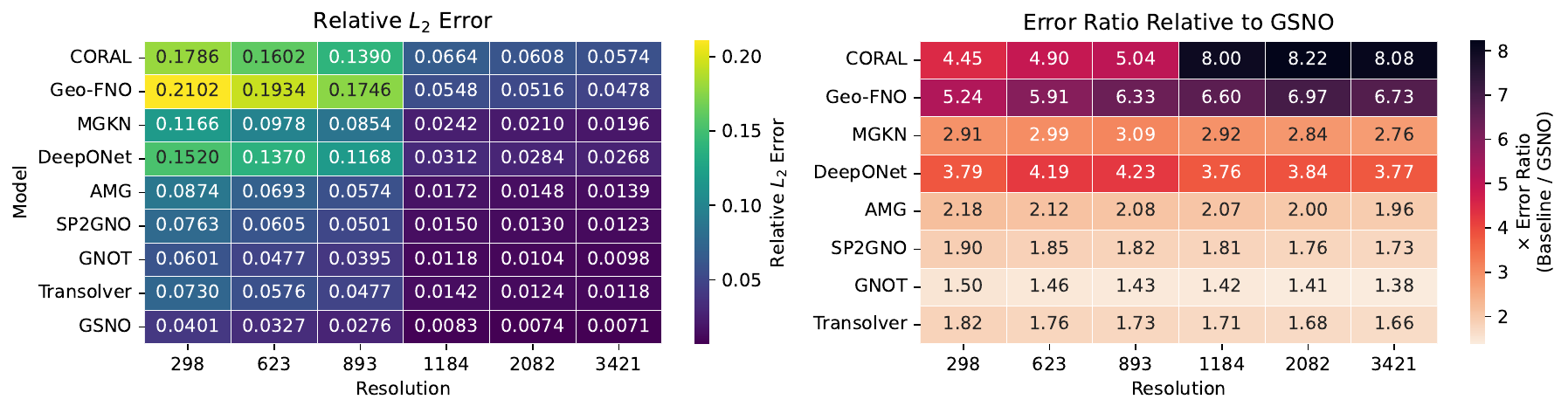}
    \captionsetup{font=footnotesize}
    \caption{
    Resolution-based generalization comparison for the steady-state Darcy flow problem.  
    \textbf{Left:} Relative $L_2$ error across increasing mesh resolutions for all models.  
    \textbf{Right:} Performance gap with respect to GSNO, shown as the ratio of each model’s error to GSNO’s at the same resolution.  
    All models are trained and evaluated on identical unstructured meshes.  
    GSNO demonstrates superior predictive accuracy across all resolutions, with error reductions of up to $8\times$ over competing methods.
    }
    \label{fig:gsno-accuracy-gap_darcy}
\end{figure}

\begin{figure}
    \centering
    \includegraphics[width=\textwidth]{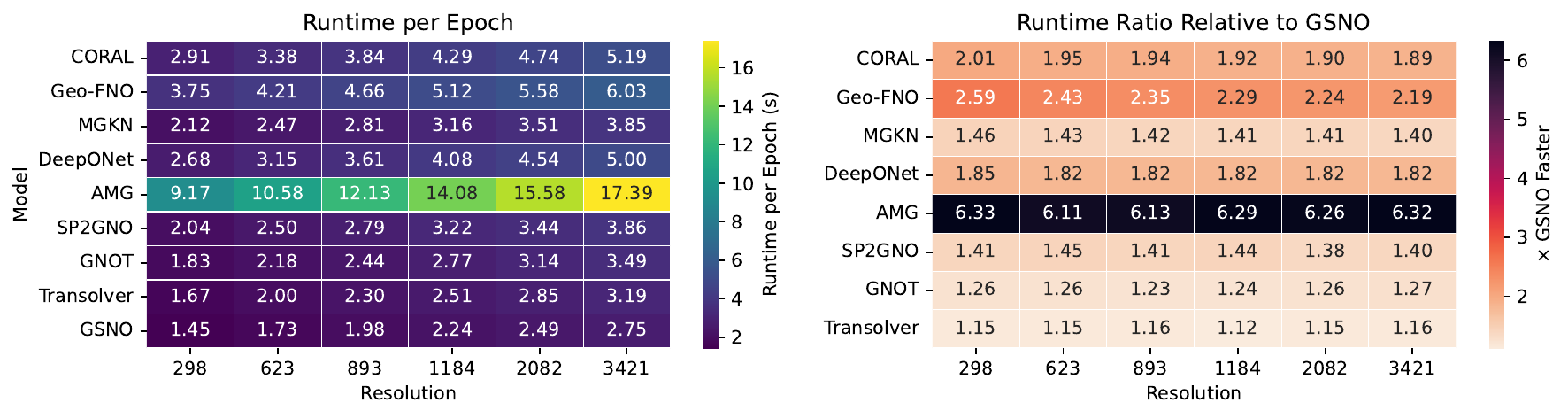}
    \captionsetup{font=footnotesize}
    \caption{
    Training efficiency of neural operator models on Darcy Flow across increasing spatial resolutions.  
    \textbf{Left:} Average runtime per epoch (in seconds) across six mesh resolutions from 298 to 3421 nodes.  
    \textbf{Right:} Slowdown relative to GSNO, computed as the ratio of each model's runtime to GSNO's at the same resolution.  
    GSNO consistently achieves the fastest per-epoch training, showcasing its efficiency on steady-state PDEs.
    }
    \label{fig:runtime_comparison_darcy}
\end{figure}

\newpage

\subsection{Additional results for Euler equations over a 2D Airfoil}

Figure~\ref{fig:sample_Airfoil} shows a qualitative one-step example for the 2D Airfoil. Table~\ref{tab:airfoil_relL2_mae_rmse} reports the corresponding quantitative metrics—training time (s/epoch) and errors (Relative \(L_2\), RMSE, MAE) for fluid quantities across all models.

\begin{figure}
    \centering
    \includegraphics[trim={0 10 0 0}, clip, width=0.7\textwidth]{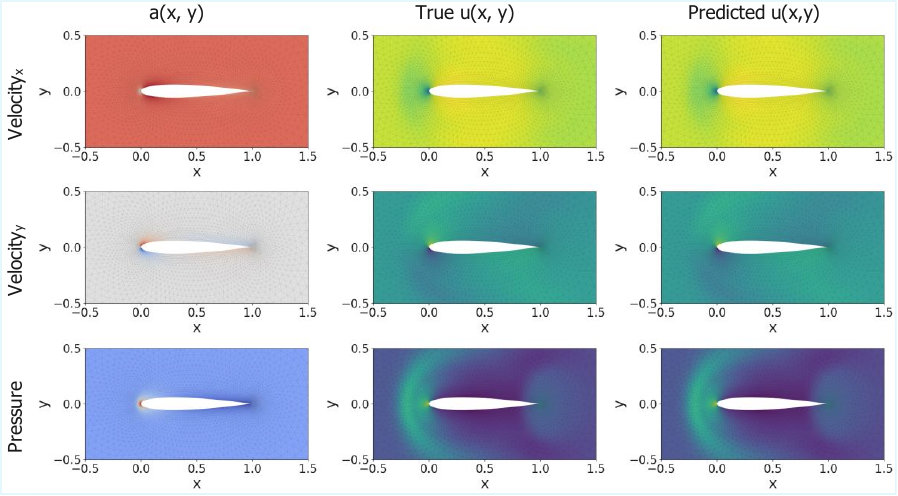}
    \captionsetup{font=footnotesize}
    \caption{2D Airfoil (compressible Euler) — one-step prediction. Columns: (left) input state at time \(t\), \(a(x,y)\); (middle) ground truth at \(t{+}1\), \(u(x,y)\); (right) GSNO prediction at \(t{+}1\). Rows: velocity \(u_x\) (top), velocity \(u_y\) (middle), and pressure \(p\) (bottom). All fields are on the same unstructured mesh with \(N_s = 5233\); the white region indicates the airfoil.}
    \label{fig:sample_Airfoil}
\end{figure}

\begin{table}
\centering
\captionsetup{font=footnotesize}
\caption{Comparison of neural operator models on \textbf{2D-Airfoil} (\(N_s=5233\))}
\label{tab:airfoil_relL2_mae_rmse}
\scriptsize
\resizebox{0.98\textwidth}{!}{%
{\setlength{\tabcolsep}{2.8pt}\renewcommand{\arraystretch}{1.05}%
\begin{tabular}{@{}l|c|ccc|ccc|ccc|ccc@{}}
\toprule
\multirow{2}{*}{\textbf{Model}}
& \multirow{2}{*}{\makecell{\textbf{Train}\\(s/epoch)}}
& \multicolumn{3}{c|}{\textbf{Density}}
& \multicolumn{3}{c|}{\textbf{Pressure}}
& \multicolumn{3}{c|}{\textbf{Velocity\_x}}
& \multicolumn{3}{c}{\textbf{Velocity\_y}} \\
& & \textbf{Rel.\(L_2\)} & \textbf{RMSE} & \textbf{MAE}
  & \textbf{Rel.\(L_2\)} & \textbf{RMSE} & \textbf{MAE}
  & \textbf{Rel.\(L_2\)} & \textbf{RMSE} & \textbf{MAE}
  & \textbf{Rel.\(L_2\)} & \textbf{RMSE} & \textbf{MAE} \\
\midrule
CORAL      & 25.2& 0.0650 & \(1.49 \times 10^{-2}\) & \(1.27 \times 10^{-2}\) & 0.0610 & \(1.34 \times 10^{-2}\) & \(1.13 \times 10^{-2}\) & 0.0365 & \(4.93 \times 10^{-3}\) & \(4.20 \times 10^{-3}\) & 0.0410 & \(6.15 \times 10^{-3}\) & \(5.12 \times 10^{-3}\) \\
Geo\mbox{-}FNO  & 29.8& 0.0580 & \(1.33 \times 10^{-2}\) & \(1.13 \times 10^{-2}\) & 0.0550 & \(1.21 \times 10^{-2}\) & \(1.02 \times 10^{-2}\) & 0.0320 & \(4.32 \times 10^{-3}\) & \(3.68 \times 10^{-3}\) & 0.0360 & \(5.40 \times 10^{-3}\) & \(4.50 \times 10^{-3}\) \\
MGKN       & 26.4& 0.0500 & \(1.15 \times 10^{-2}\) & \(0.975 \times 10^{-2}\) & 0.0480 & \(1.06 \times 10^{-2}\) & \(8.88 \times 10^{-3}\) & 0.0215 & \(2.90 \times 10^{-3}\) & \(2.47 \times 10^{-3}\) & 0.0260 & \(3.90 \times 10^{-3}\) & \(3.25 \times 10^{-3}\) \\
DeepONet   & 31.2& 0.0400 & \(9.20 \times 10^{-3}\) & \(7.80 \times 10^{-3}\) & 0.0370 & \(8.14 \times 10^{-3}\) & \(6.85 \times 10^{-3}\) & 0.0290 & \(3.92 \times 10^{-3}\) & \(3.34 \times 10^{-3}\) & 0.0310 & \(4.65 \times 10^{-3}\) & \(3.88 \times 10^{-3}\) \\
AMG        & 67.2& \underline{0.0021} & \(4.83 \times 10^{-4}\) & \(4.09 \times 10^{-4}\) & \underline{0.0020} & \(4.40 \times 10^{-4}\) & \(3.70 \times 10^{-4}\) & \underline{0.0014} & \(1.89 \times 10^{-4}\) & \(1.61 \times 10^{-4}\) & \underline{0.0018} & \(2.70 \times 10^{-4}\) & \(2.25 \times 10^{-4}\) \\
SP$^2$GNO  & 18.9& 0.0030 & \(6.90 \times 10^{-4}\) & \(5.80 \times 10^{-4}\) & 0.0028 & \(6.20 \times 10^{-4}\) & \(5.20 \times 10^{-4}\) & 0.0022 & \(3.00 \times 10^{-4}\) & \(2.50 \times 10^{-4}\) & 0.0025 & \(3.80 \times 10^{-4}\) & \(3.20 \times 10^{-4}\) \\
GNOT       & 12.1& 0.0054 & \(1.24 \times 10^{-3}\) & \(1.05 \times 10^{-3}\) & 0.0049 & \(1.08 \times 10^{-3}\) & \(8.97 \times 10^{-4}\) & 0.0040 & \(5.40 \times 10^{-4}\) & \(4.60 \times 10^{-4}\) & 0.0040 & \(6.00 \times 10^{-4}\) & \(5.00 \times 10^{-4}\) \\
Transolver & \underline{10.8}& 0.0036 & \(8.28 \times 10^{-4}\) & \(7.02 \times 10^{-4}\) & 0.0032 & \(7.04 \times 10^{-4}\) & \(5.92 \times 10^{-4}\) & 0.0028 & \(3.78 \times 10^{-4}\) & \(3.22 \times 10^{-4}\) & 0.0035 & \(5.25 \times 10^{-4}\) & \(4.38 \times 10^{-4}\) \\
\textbf{GSNO (Ours)} 
           & 9.6& \textbf{0.0012} & \(\mathbf{2.76 \times 10^{-4}}\) & \(\mathbf{2.34 \times 10^{-4}}\)
           & \textbf{0.0012} & \(\mathbf{2.64 \times 10^{-4}}\) & \(\mathbf{2.22 \times 10^{-4}}\)
           & \textbf{0.0009} & \(\mathbf{1.21 \times 10^{-4}}\) & \(\mathbf{1.04 \times 10^{-4}}\)
           & \textbf{0.0008} & \(\mathbf{1.20 \times 10^{-4}}\) & \(\mathbf{1.00 \times 10^{-4}}\) \\
\bottomrule
\end{tabular}%
}%
}
\end{table}

\newpage

\subsection{Additional results for Shape-Net 3D Car}

Figure~\ref{fig:sample_car} illustrates a representative prediction from GSNO on the Car3D dataset, highlighting the reconstructed velocity streamlines and pressure distribution. The corresponding quantitative comparison is provided in Table~\ref{tab:car3d_metrics}, which reports per-epoch training cost (s/epoch) together with error measures (Relative \(L_2\), RMSE, MAE) for velocity and pressure across all competing models.

\begin{figure}[h]
    \centering
    \includegraphics[trim={0 10 0 0}, clip, width=0.7\textwidth]{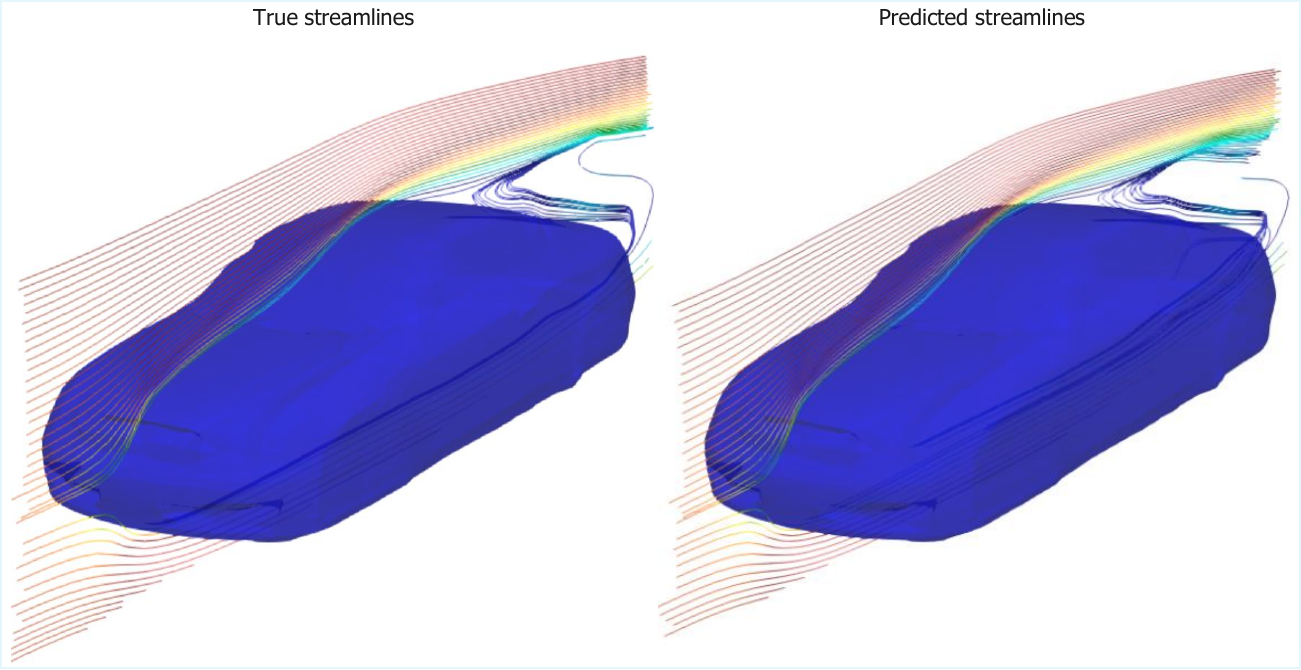}
    \captionsetup{font=footnotesize}
    \caption{Shape-Net 3D Car — flow streamlines. Left: reference simulation; right: GSNO prediction.}
    \label{fig:sample_car}
\end{figure}

\begin{table}[h]
\centering
\captionsetup{font=scriptsize}
\caption{Comparison of neural operator models on \textbf{Shape-Net 3D Car} (\( N_s = 32186 \))}
\label{tab:car3d_metrics}
\scriptsize
\resizebox{0.58\textwidth}{!}{%
{\setlength{\tabcolsep}{2.4pt}\renewcommand{\arraystretch}{1.05}%
\begin{tabular}{@{}l|c|ccc|ccc@{}}
\toprule
\multirow{2}{*}{\textbf{Model}} 
& \multirow{2}{*}{\makecell{\textbf{Train}\\(s/epoch)}}
& \multicolumn{3}{c|}{\textbf{Pressure}} 
& \multicolumn{3}{c}{\textbf{Velocity}} \\
& & \textbf{Rel.\(L_2\)} & \textbf{RMSE} & \textbf{MAE}
  & \textbf{Rel.\(L_2\)} & \textbf{RMSE} & \textbf{MAE} \\
\midrule
CORAL        & 62 & 0.1680 & \(4.20 \times 10^{-2}\) & \(3.57 \times 10^{-2}\) & 0.1750 & \(4.72 \times 10^{-2}\) & \(4.02 \times 10^{-2}\) \\
Geo-FNO      & 58 & 0.1560 & \(3.90 \times 10^{-2}\) & \(3.31 \times 10^{-2}\) & 0.1620 & \(4.37 \times 10^{-2}\) & \(3.72 \times 10^{-2}\) \\
MGKN         & 67 & 0.1350 & \(3.38 \times 10^{-2}\) & \(2.87 \times 10^{-2}\) & 0.1420 & \(3.83 \times 10^{-2}\) & \(3.26 \times 10^{-2}\) \\
DeepONet     & 87 & 0.1400 & \(3.50 \times 10^{-2}\) & \(2.98 \times 10^{-2}\) & 0.1480 & \(4.00 \times 10^{-2}\) & \(3.40 \times 10^{-2}\) \\
AMG          & 181& 0.0878 & \(2.20 \times 10^{-2}\) & \(1.87 \times 10^{-2}\) & 0.0919 & \(2.48 \times 10^{-2}\) & \(2.11 \times 10^{-2}\) \\
SP$^2$GNO    & 48 & 0.1005 & \(2.50 \times 10^{-2}\) & \(2.13 \times 10^{-2}\) & 0.1102 & \(2.95 \times 10^{-2}\) & \(2.50 \times 10^{-2}\) \\
GNOT         & 36 & 0.1199 & \(3.00 \times 10^{-2}\) & \(2.55 \times 10^{-2}\) & 0.1206 & \(3.26 \times 10^{-2}\) & \(2.77 \times 10^{-2}\) \\
Transolver   & \underline{27} & 0.0993 & \(2.48 \times 10^{-2}\) & \(2.11 \times 10^{-2}\) & 0.1208 & \(3.26 \times 10^{-2}\) & \(2.77 \times 10^{-2}\) \\
\textbf{GSNO (Ours)} 
             & 24 & \textbf{0.0712} & \(\mathbf{1.78 \times 10^{-2}}\) & \(\mathbf{1.51 \times 10^{-2}}\)
             & \textbf{0.0759} & \(\mathbf{2.05 \times 10^{-2}}\) & \(\mathbf{1.74 \times 10^{-2}}\) \\
\bottomrule
\end{tabular}%
}%
}
\end{table}

\newpage

\subsection{Additional results for 2D Burgers’ Equation}
Figure~\ref{fig:sample_burgers_uv} presents GSNO predictions for the 2D Burgers’ equation benchmark, showcasing both horizontal and vertical velocity components over time. Figures~\ref{fig:gsno-accuracy-gap_burger} and~\ref{fig:runtime_comparison_burger} provide a comparative analysis of generalization accuracy across resolutions and training efficiency under varying temporal settings.

\begin{table}[h]
\centering
\captionsetup{font=footnotesize}
\caption{Comparison of neural operator models on Burgers’ Equation (\( N_s = 1168 \)).}
\label{tab:burgers_metrics}
\scriptsize
\resizebox{0.98\textwidth}{!}{%
\begin{tabular}{l|ccc|ccc|ccc|ccc}
\toprule
\multirow{2}{*}{\textbf{Model}} 
& \multicolumn{3}{c|}{Temporal Config: 1$\rightarrow$50} 
& \multicolumn{3}{c|}{Temporal Config: 3$\rightarrow$48} 
& \multicolumn{3}{c|}{Temporal Config: 5$\rightarrow$46} 
& \multicolumn{3}{c}{Temporal Config: 10$\rightarrow$41} \\
\cmidrule(lr){2-4} \cmidrule(lr){5-7} \cmidrule(lr){8-10} \cmidrule(lr){11-13}
& Rel $L_2$ & RMSE & MAE 
& Rel $L_2$ & RMSE & MAE 
& Rel $L_2$ & RMSE & MAE 
& Rel $L_2$ & RMSE & MAE \\
\midrule
CORAL      & 0.1542 & \(1.30 \times 10^{-1}\) & \(9.47 \times 10^{-2}\)
& 0.1308 & \(1.05 \times 10^{-1}\) & \(7.97 \times 10^{-2}\)
& 0.1052 & \(7.76 \times 10^{-2}\) & \(6.15 \times 10^{-2}\)
& 0.0966 & \(6.88 \times 10^{-2}\) & \(5.52 \times 10^{-2}\) \\
Geo-FNO    & 0.1968 & \(1.59 \times 10^{-1}\) & \(1.10 \times 10^{-1}\)
& 0.1718 & \(1.32 \times 10^{-1}\) & \(9.58 \times 10^{-2}\)
& 0.1564 & \(1.17 \times 10^{-1}\) & \(8.71 \times 10^{-2}\)
& 0.1410 & \(1.02 \times 10^{-1}\) & \(7.77 \times 10^{-2}\) \\
MGKN       & 0.0876 & \(5.86 \times 10^{-2}\) & \(4.78 \times 10^{-2}\)
& \underline{0.0686} & \(3.68 \times 10^{-2}\) & \(3.10 \times 10^{-2}\)
& 0.0612 & \(3.09 \times 10^{-2}\) & \(2.62 \times 10^{-2}\)
& 0.0562 & \(2.76 \times 10^{-2}\) & \(2.35 \times 10^{-2}\) \\
DeepONet   & 0.1146 & \(8.56 \times 10^{-2}\) & \(6.70 \times 10^{-2}\)
& 0.1004 & \(6.98 \times 10^{-2}\) & \(5.59 \times 10^{-2}\)
& 0.0894 & \(5.79 \times 10^{-2}\) & \(4.72 \times 10^{-2}\)
& 0.0846 & \(5.25 \times 10^{-2}\) & \(4.32 \times 10^{-2}\) \\
AMG        & 0.0812 & \(5.45 \times 10^{-2}\) & \(4.46 \times 10^{-2}\)
& 0.0694 & \(3.74 \times 10^{-2}\) & \(3.15 \times 10^{-2}\)
& 0.0570 & \(2.88 \times 10^{-2}\) & \(2.48 \times 10^{-2}\)
& 0.0540 & \(2.63 \times 10^{-2}\) & \(2.25 \times 10^{-2}\) \\
GNOT       & 0.1301 & \(9.60 \times 10^{-2}\) & \(7.60 \times 10^{-2}\)
& 0.1232 & \(8.90 \times 10^{-2}\) & \(7.10 \times 10^{-2}\)
& 0.0907 & \(6.30 \times 10^{-2}\) & \(5.05 \times 10^{-2}\)
& 0.0855 & \(5.75 \times 10^{-2}\) & \(4.60 \times 10^{-2}\) \\
SP$^2$GNO  & 0.1054 & \(7.20 \times 10^{-2}\) & \(5.85 \times 10^{-2}\)
& 0.0998 & \(6.60 \times 10^{-2}\) & \(5.48 \times 10^{-2}\)
& 0.0736 & \(3.95 \times 10^{-2}\) & \(3.30 \times 10^{-2}\)
& 0.0695 & \(3.50 \times 10^{-2}\) & \(3.02 \times 10^{-2}\) \\
Transolver & \underline{0.0806} & \(5.40 \times 10^{-2}\) & \(4.42 \times 10^{-2}\)
& 0.0763 & \(4.10 \times 10^{-2}\) & \(3.44 \times 10^{-2}\)
& \underline{0.0564} & \(2.85 \times 10^{-2}\) & \(2.45 \times 10^{-2}\)
& \underline{0.0534} & \(2.58 \times 10^{-2}\) & \(2.20 \times 10^{-2}\) \\
\textbf{GSNO} & \textbf{0.0221} & \(\mathbf{1.21 \times 10^{-2}}\) & \(\mathbf{1.05 \times 10^{-2}}\)
& \textbf{0.0213} & \(\mathbf{1.16 \times 10^{-2}}\) & \(\mathbf{1.01 \times 10^{-2}}\)
& \textbf{0.0156} & \(\mathbf{7.60 \times 10^{-3}}\) & \(\mathbf{6.67 \times 10^{-3}}\)
& \textbf{0.0148} & \(\mathbf{7.20 \times 10^{-3}}\) & \(\mathbf{6.32 \times 10^{-3}}\) \\
\bottomrule
\end{tabular}
}
\end{table}

\begin{figure}[h]
    \centering
    \begin{subfigure}[t]{0.98\textwidth}
        \centering
        \includegraphics[width=\linewidth]{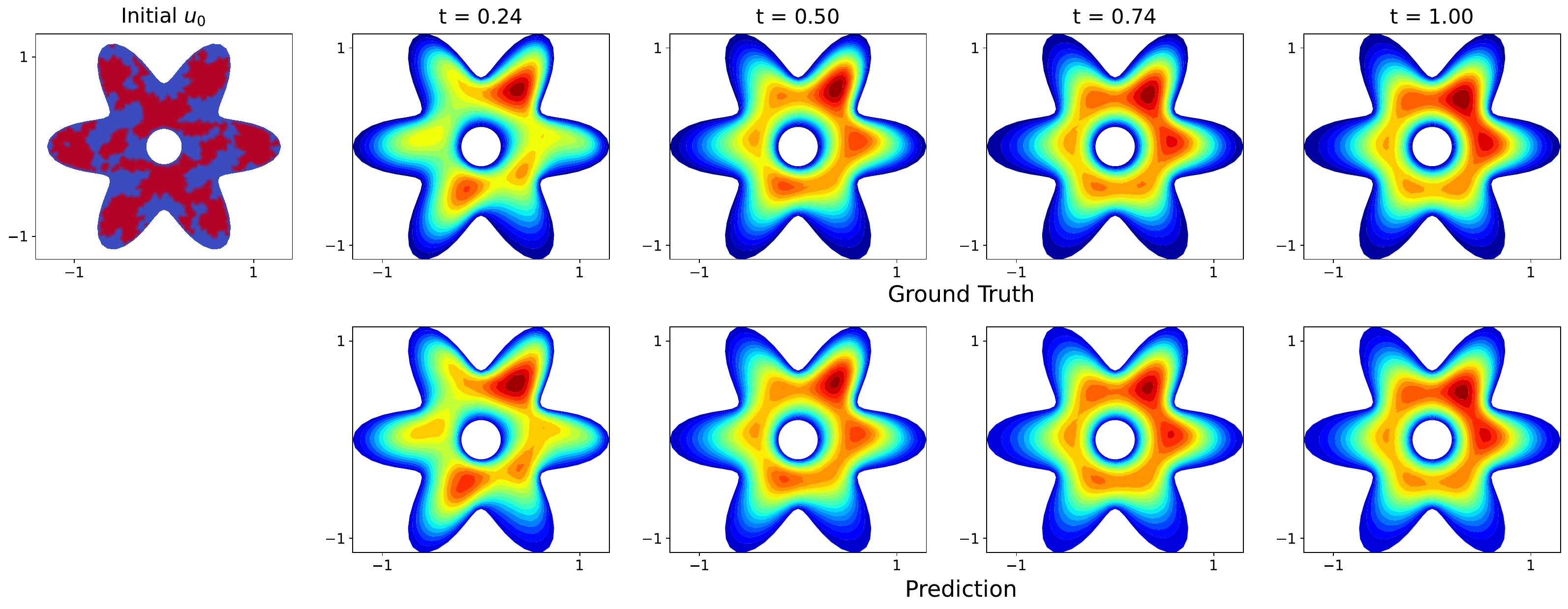}
    \end{subfigure}
    
    \vspace{4pt}
    
    \begin{subfigure}[t]{0.98\textwidth}
        \centering
        \includegraphics[width=\linewidth]{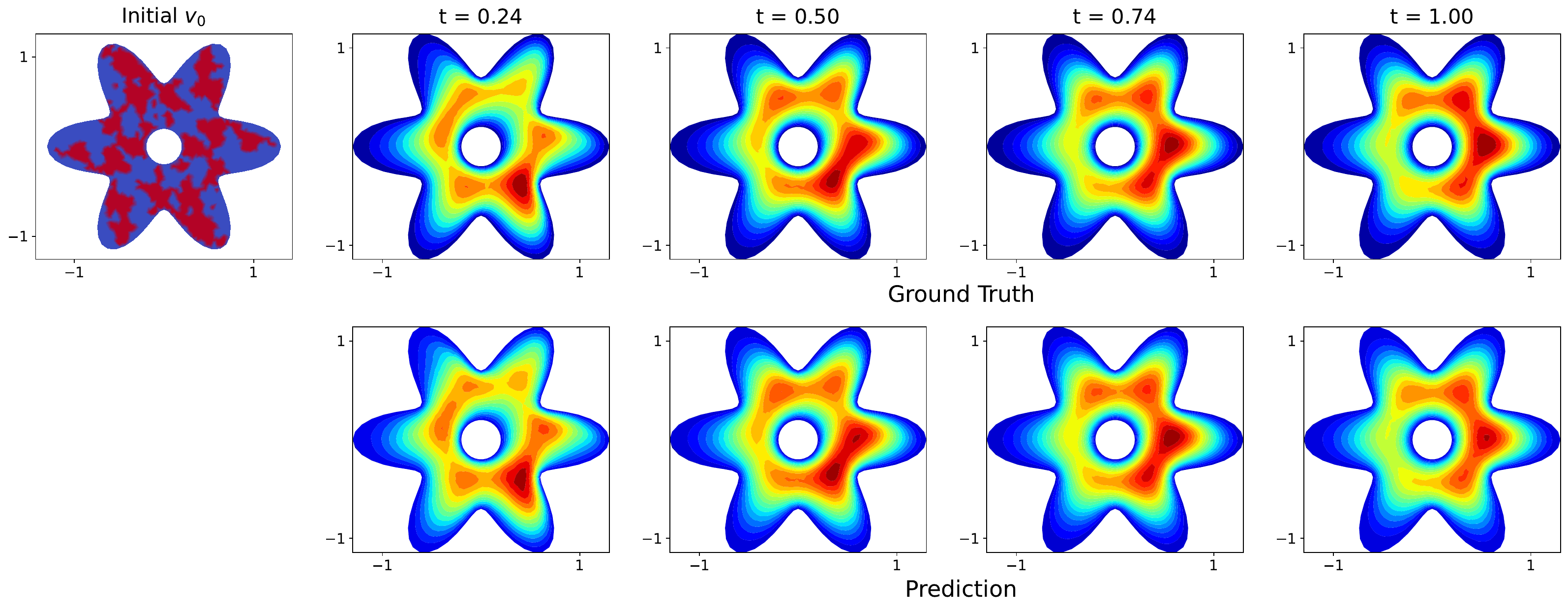}
    \end{subfigure}
    
    \captionsetup{font=footnotesize}
    \caption{2D Burgers’ equation on an irregular flower-shaped domain with a circular hole. \textbf{Top row:} Ground truth evolution of the horizontal velocity component \( u \). \textbf{Bottom row:} GSNO inference predictions of the vertical component \( v \). The model is trained and evaluated on the same resolution mesh with \( N_s = 1168 \) nodes and 51 temporal snapshots. Inference is performed using \( T_{\text{in}} = 5 \) input steps to forecast \( T_{\text{out}} = 46 \) future steps.}
    \label{fig:sample_burgers_uv}
\end{figure}

\begin{figure}[ht!]
    \centering
    \includegraphics[width=0.95\textwidth]{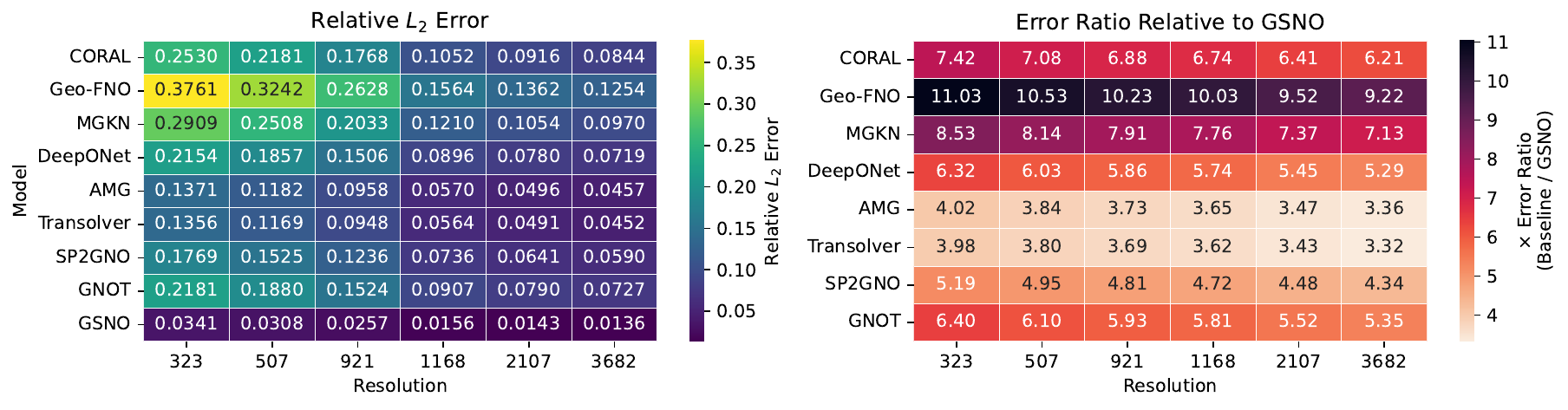}
    \captionsetup{font=footnotesize}
    \caption{
    Resolution-wise generalization results on the 2D Burgers’ equation benchmark.  
    \textbf{Left:} Relative $L_2$ error on the test set across increasing spatial resolutions.  
    \textbf{Right:} Error degradation relative to GSNO, computed as the ratio of each model's error to GSNO’s at the same resolution.  
    All models are trained and evaluated on matching meshes using $T_{\text{in}} = 5$ input snapshots to predict $T_{\text{out}} = 46$ future steps.  
    GSNO consistently outperforms baselines across all scales, achieving up to $10\times$ lower error.
    }
    \label{fig:gsno-accuracy-gap_burger}
\end{figure}

\begin{figure}[h]
    \centering
    \includegraphics[width=0.95\textwidth]{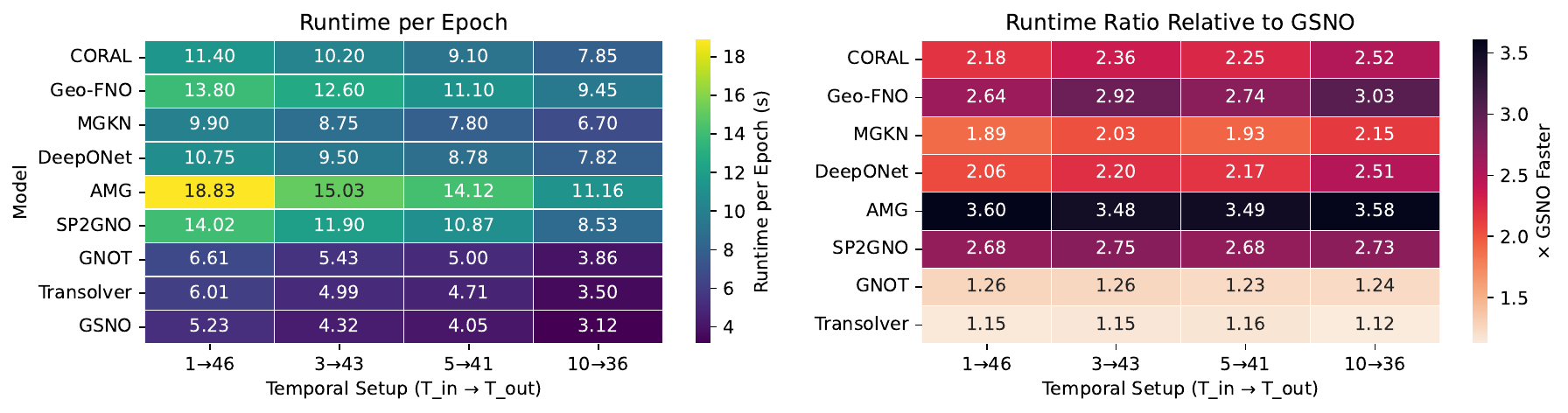}
    \captionsetup{font=footnotesize}
    \caption{
    \textbf{Training efficiency of neural operator models on Burgers’ equation under varying temporal configurations at resolution \( N_s = 1168 \).}  
    \textbf{Left:} Average runtime per epoch (in seconds) across different temporal input-output setups (\( T_{\text{in}} \rightarrow T_{\text{out}} \)).  
    \textbf{Right:} Slowdown relative to GSNO, computed as the ratio of each model's runtime to GSNO's at the same setting.  
    GSNO consistently achieves the lowest per-epoch runtime, highlighting its computational efficiency.
    }
    \label{fig:runtime_comparison_burger}
\end{figure}

\newpage

\subsection{Additional results for 3D unsteady Burgers' equation}

Figure~\ref{fig:runtime_comparison_3d_burgers} summarizes the per-epoch training cost for the 3D unsteady Burgers' equation under different input--output temporal settings. Across all configurations, GSNO requires the lowest runtime, indicating that its spectral space--time formulation remains computationally efficient for this high-dimensional unsteady benchmark.

Table~\ref{tab:swe_metrics} provides the corresponding quantitative comparison using relative \(L_2\), RMSE, and MAE. GSNO achieves the best accuracy in every temporal configuration, with especially strong gains over the closest competing methods as the prediction horizon becomes longer.

\begin{figure}[h]
    \centering
    \includegraphics[width=\textwidth]{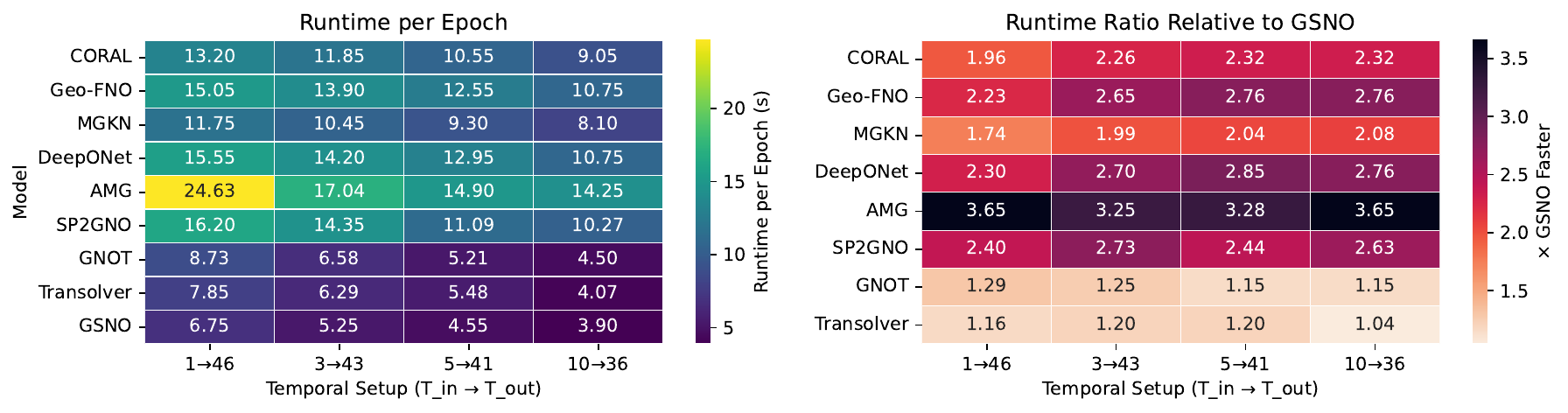}
    \captionsetup{font=footnotesize}
    \caption{
    \textbf{Training efficiency of neural operator models on the 3D unsteady Burgers’ equation at resolution \( N_s = 1867 \).}  
    \textbf{Left:} Average runtime per epoch (in seconds) for different temporal configurations (\( T_{\text{in}} \rightarrow T_{\text{out}} \)).  
    \textbf{Right:} Relative runtime overhead, computed as the ratio of each model’s runtime to GSNO’s.  
    GSNO consistently demonstrates the lowest computational cost across all temporal settings, highlighting its efficiency for high-dimensional time-dependent PDEs.
    }
    \label{fig:runtime_comparison_3d_burgers}
\end{figure}

\begin{table}[h]
\centering
\captionsetup{font=footnotesize}
\caption{Comparison of neural operator models on 3D unsteady Burgers' equation (\( N_s = 1867 \)).}
\label{tab:swe_metrics}
\scriptsize
\resizebox{0.98\textwidth}{!}{%
\begin{tabular}{l|ccc|ccc|ccc|ccc}
\toprule
\multirow{2}{*}{\textbf{Model}} 
& \multicolumn{3}{c|}{Temporal Config: 1$\rightarrow$50} 
& \multicolumn{3}{c|}{Temporal Config: 3$\rightarrow$48} 
& \multicolumn{3}{c|}{Temporal Config: 5$\rightarrow$46} 
& \multicolumn{3}{c}{Temporal Config: 10$\rightarrow$41} \\
\cmidrule(lr){2-4} \cmidrule(lr){5-7} \cmidrule(lr){8-10} \cmidrule(lr){11-13}
& Rel $L_2$ & RMSE & MAE 
& Rel $L_2$ & RMSE & MAE 
& Rel $L_2$ & RMSE & MAE 
& Rel $L_2$ & RMSE & MAE \\
\midrule

CORAL
& 0.1975 & \(1.15 \times 10^{-1}\) & \(9.20 \times 10^{-2}\)
& 0.1688 & \(9.80 \times 10^{-2}\) & \(7.84 \times 10^{-2}\)
& 0.1340 & \(7.77 \times 10^{-2}\) & \(6.22 \times 10^{-2}\)
& 0.1255 & \(7.28 \times 10^{-2}\) & \(5.82 \times 10^{-2}\) \\

Geo-FNO
& 0.2380 & \(1.38 \times 10^{-1}\) & \(1.10 \times 10^{-1}\)
& 0.2070 & \(1.20 \times 10^{-1}\) & \(9.60 \times 10^{-2}\)
& 0.1820 & \(1.06 \times 10^{-1}\) & \(8.48 \times 10^{-2}\)
& 0.1695 & \(9.83 \times 10^{-2}\) & \(7.86 \times 10^{-2}\) \\

MGKN
& \underline{0.1245} & \(7.22 \times 10^{-2}\) & \(5.78 \times 10^{-2}\)
& \underline{0.0965} & \(5.60 \times 10^{-2}\) & \(4.48 \times 10^{-2}\)
& 0.0798 & \(4.63 \times 10^{-2}\) & \(3.70 \times 10^{-2}\)
& 0.0735 & \(4.26 \times 10^{-2}\) & \(3.41 \times 10^{-2}\) \\

DeepONet
& 0.1415 & \(8.21 \times 10^{-2}\) & \(6.57 \times 10^{-2}\)
& 0.1235 & \(7.16 \times 10^{-2}\) & \(5.73 \times 10^{-2}\)
& 0.1080 & \(6.26 \times 10^{-2}\) & \(5.01 \times 10^{-2}\)
& 0.0995 & \(5.77 \times 10^{-2}\) & \(4.62 \times 10^{-2}\) \\

AMG
& 0.1325 & \(7.69 \times 10^{-2}\) & \(6.15 \times 10^{-2}\)
& 0.0975 & \(5.66 \times 10^{-2}\) & \(4.53 \times 10^{-2}\)
& 0.0755 & \(4.38 \times 10^{-2}\) & \(3.50 \times 10^{-2}\)
& 0.0685 & \(3.97 \times 10^{-2}\) & \(3.18 \times 10^{-2}\) \\

GNOT
& 0.2425 & \(1.41 \times 10^{-1}\) & \(1.13 \times 10^{-1}\)
& 0.1710 & \(9.92 \times 10^{-2}\) & \(7.94 \times 10^{-2}\)
& 0.1215 & \(7.05 \times 10^{-2}\) & \(5.64 \times 10^{-2}\)
& 0.1105 & \(6.41 \times 10^{-2}\) & \(5.13 \times 10^{-2}\) \\

SP$^2$GNO
& 0.1965 & \(1.14 \times 10^{-1}\) & \(9.12 \times 10^{-2}\)
& 0.1400 & \(8.12 \times 10^{-2}\) & \(6.50 \times 10^{-2}\)
& 0.1005 & \(5.83 \times 10^{-2}\) & \(4.66 \times 10^{-2}\)
& 0.0915 & \(5.31 \times 10^{-2}\) & \(4.25 \times 10^{-2}\) \\

Transolver
& 0.1500 & \(8.70 \times 10^{-2}\) & \(6.96 \times 10^{-2}\)
& 0.1065 & \(6.18 \times 10^{-2}\) & \(4.94 \times 10^{-2}\)
& \underline{0.0765} & \(4.44 \times 10^{-2}\) & \(3.55 \times 10^{-2}\)
& \underline{0.0695} & \(4.03 \times 10^{-2}\) & \(3.22 \times 10^{-2}\) \\

\textbf{GSNO}
& \textbf{0.0425} & \(\mathbf{2.47 \times 10^{-2}}\) & \(\mathbf{1.98 \times 10^{-2}}\)
& \textbf{0.0305} & \(\mathbf{1.77 \times 10^{-2}}\) & \(\mathbf{1.42 \times 10^{-2}}\)
& \textbf{0.0220} & \(\mathbf{1.28 \times 10^{-2}}\) & \(\mathbf{1.02 \times 10^{-2}}\)
& \textbf{0.0198} & \(\mathbf{1.15 \times 10^{-2}}\) & \(\mathbf{9.20 \times 10^{-3}}\) \\

\bottomrule
\end{tabular}
}
\end{table}

\newpage

\subsection{Additional results for 2D Navier–Stokes Equation}

Figure~\ref{fig:sample_NSE} demonstrates GSNO’s ability to perform zero-shot super-resolution on the 2D Navier–Stokes equation benchmark. Figures~\ref{fig:gsno-accuracy-gap_nse} and~\ref{fig:runtime_comparison_nse} further present generalization accuracy across resolutions and training efficiency under varying temporal input-output settings.

\begin{table}[h]
\centering
\captionsetup{font=footnotesize}
\caption{Comparison of neural operator models on Navier--Stokes Equation (\( N_s = 1244 \)).}
\label{tab:nse_metrics_mixed}
\scriptsize
\resizebox{0.95\textwidth}{!}{%
\begin{tabular}{l|ccc|ccc|ccc|ccc}
\toprule
\multirow{2}{*}{\textbf{Model}} 
& \multicolumn{3}{c|}{Temporal Config: 1$\rightarrow$50} 
& \multicolumn{3}{c|}{Temporal Config: 3$\rightarrow$48} 
& \multicolumn{3}{c|}{Temporal Config: 5$\rightarrow$46} 
& \multicolumn{3}{c}{Temporal Config: 10$\rightarrow$41} \\
\cmidrule(lr){2-4} \cmidrule(lr){5-7} \cmidrule(lr){8-10} \cmidrule(lr){11-13}
& Rel $L_2$ & RMSE & MAE 
& Rel $L_2$ & RMSE & MAE 
& Rel $L_2$ & RMSE & MAE 
& Rel $L_2$ & RMSE & MAE \\
\midrule
CORAL
& 0.1654 & \(8.91 \times 10^{-1}\) & \(6.59 \times 10^{-1}\) 
& 0.1412 & \(7.61 \times 10^{-1}\) & \(5.79 \times 10^{-1}\) 
& 0.1148 & \(6.19 \times 10^{-1}\) & \(4.86 \times 10^{-1}\) 
& 0.1070 & \(5.77 \times 10^{-1}\) & \(4.57 \times 10^{-1}\) \\
Geo-FNO
& 0.2056 & \(1.11 \times 10^{+0}\) & \(7.81 \times 10^{-1}\) 
& 0.1790 & \(9.65 \times 10^{-1}\) & \(7.02 \times 10^{-1}\) 
& 0.1614 & \(8.70 \times 10^{-1}\) & \(6.47 \times 10^{-1}\) 
& 0.1512 & \(8.15 \times 10^{-1}\) & \(6.13 \times 10^{-1}\) \\
MGKN
& \underline{0.0964} & \(5.20 \times 10^{-1}\) & \(4.16 \times 10^{-1}\) 
& \underline{0.0770} & \(4.15 \times 10^{-1}\) & \(3.39 \times 10^{-1}\) 
& 0.0654 & \(3.52 \times 10^{-1}\) & \(2.92 \times 10^{-1}\) 
& 0.0606 & \(3.27 \times 10^{-1}\) & \(2.72 \times 10^{-1}\) \\
DeepONet
& 0.1250 & \(6.74 \times 10^{-1}\) & \(5.23 \times 10^{-1}\) 
& 0.1096 & \(5.91 \times 10^{-1}\) & \(4.66 \times 10^{-1}\) 
& 0.0958 & \(5.16 \times 10^{-1}\) & \(4.14 \times 10^{-1}\) 
& 0.0916 & \(4.94 \times 10^{-1}\) & \(3.98 \times 10^{-1}\) \\
AMG
& 0.1060 & \(5.71 \times 10^{-1}\) & \(4.61 \times 10^{-1}\)
& 0.0780 & \(4.20 \times 10^{-1}\) & \(3.39 \times 10^{-1}\)
& 0.0580 & \(3.13 \times 10^{-1}\) & \(2.52 \times 10^{-1}\)
& 0.0540 & \(2.91 \times 10^{-1}\) & \(2.35 \times 10^{-1}\) \\
GNOT
& 0.1876 & \(1.01 \times 10^{+0}\) & \(8.16 \times 10^{-1}\)
& 0.1326 & \(7.15 \times 10^{-1}\) & \(5.77 \times 10^{-1}\)
& 0.0912 & \(4.91 \times 10^{-1}\) & \(3.97 \times 10^{-1}\)
& 0.0844 & \(4.55 \times 10^{-1}\) & \(3.67 \times 10^{-1}\) \\
SP$^2$GNO
& 0.1533 & \(7.35 \times 10^{-1}\) & \(5.90 \times 10^{-1}\)
& 0.1081 & \(5.80 \times 10^{-1}\) & \(4.65 \times 10^{-1}\)
& 0.0744 & \(3.65 \times 10^{-1}\) & \(2.95 \times 10^{-1}\)
& 0.0689 & \(3.35 \times 10^{-1}\) & \(2.70 \times 10^{-1}\) \\
Transolver
& 0.1189 & \(6.41 \times 10^{-1}\) & \(5.17 \times 10^{-1}\)
& 0.0836 & \(4.51 \times 10^{-1}\) & \(3.64 \times 10^{-1}\)
& \underline{0.0575} & \(3.10 \times 10^{-1}\) & \(2.50 \times 10^{-1}\)
& \underline{0.0534} & \(2.88 \times 10^{-1}\) & \(2.32 \times 10^{-1}\) \\
\textbf{GSNO}
& \textbf{0.0336} & \(\mathbf{1.81 \times 10^{-1}}\) & \(\mathbf{1.55 \times 10^{-1}}\) 
& \textbf{0.0237} & \(\mathbf{1.28 \times 10^{-1}}\) & \(\mathbf{1.11 \times 10^{-1}}\) 
& \textbf{0.0164} & \(\mathbf{8.84 \times 10^{-2}}\) & \(\mathbf{7.71 \times 10^{-2}}\) 
& \textbf{0.0152} & \(\mathbf{8.19 \times 10^{-2}}\) & \(\mathbf{7.15 \times 10^{-2}}\) \\
\bottomrule
\end{tabular}
}
\end{table}

\begin{figure}[h]
    \centering
    \includegraphics[width=0.8\textwidth]{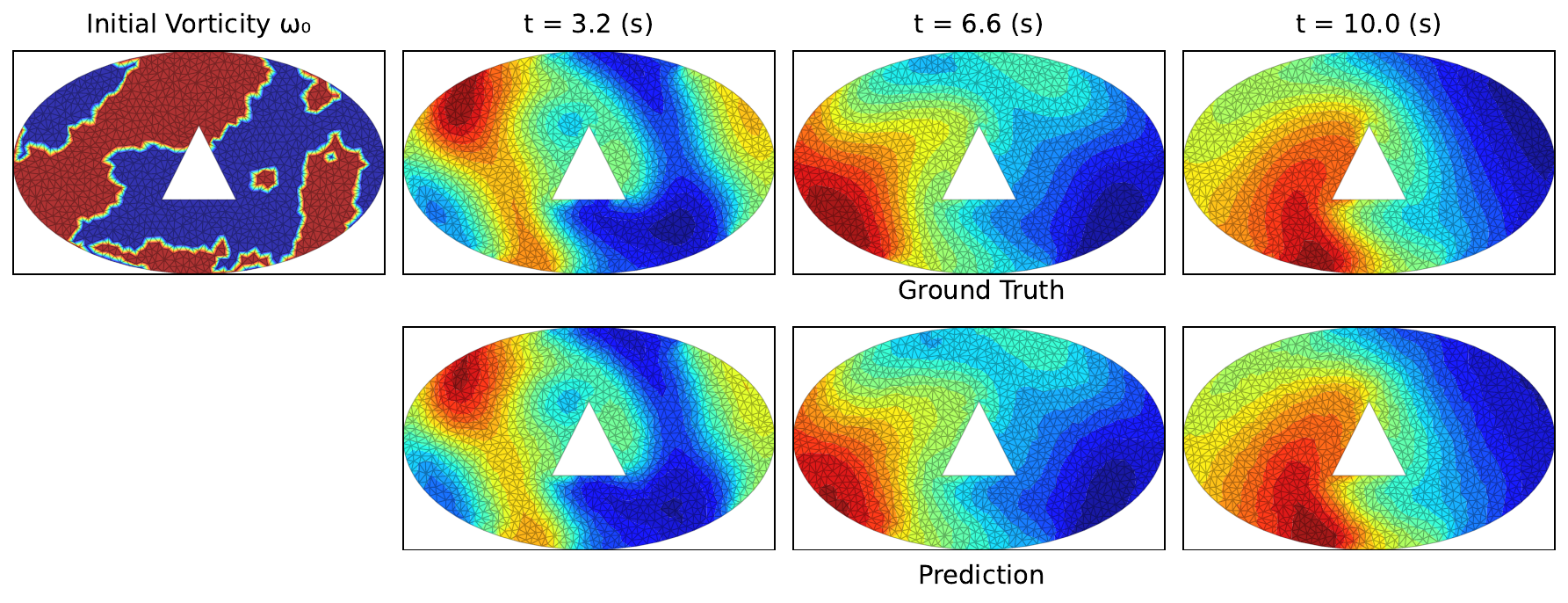}
    \captionsetup{font=footnotesize}
    \caption{Navier–Stokes simulation with viscosity \( \nu = 10^{-3} \), demonstrating the model’s ability to perform zero-shot super-resolution. The GSNO is trained on a coarse point cloud with \( N_s = 972 \) nodes and evaluated on a finer mesh with \( N_s = 1903 \) nodes without retraining. Ground truth results are shown on the top row, and GSNO predictions are shown on the bottom row. (See Section \ref{sec:zero_shot} for further details.)}
    \label{fig:sample_NSE}
\end{figure}

\begin{figure}[h]
    \centering
    \includegraphics[width=\textwidth]{Figuers/gsno_accuracy_comparison_NSE.pdf}
    \captionsetup{font=footnotesize}
    \caption{
    Resolution-based generalization comparison for the 2D Navier–Stokes equation in vorticity form.  
    \textbf{Left:} Relative $L_2$ error across increasing mesh resolutions for all models.  
    \textbf{Right:} Accuracy gap with respect to GSNO, computed as the ratio of each model’s error to GSNO’s error at each resolution.  
    All models are trained and tested on matching unstructured meshes using \( T_{\text{in}} = 5 \rightarrow T_{\text{out}} = 46 \).  
    GSNO consistently achieves the lowest error across all resolutions, outperforming others by margins of up to $10\times$.
    }
    \label{fig:gsno-accuracy-gap_nse}
\end{figure}

\begin{figure}[!h]
    \centering
    \includegraphics[width=\textwidth]{Figuers/gsno_cputime_comparison_NSE.pdf}
    \captionsetup{font=footnotesize}
    \caption{
    \textbf{Training efficiency of neural operator models on the 2D Navier–Stokes equation under varying temporal configurations at resolution \( N_s = 1244 \).}  
    \textbf{Left:} Average runtime per epoch (in seconds) for different input-output lengths (\( T_{\text{in}} \rightarrow T_{\text{out}} \)).  
    \textbf{Right:} Runtime overhead relative to GSNO, computed as the ratio of each model's epoch runtime to GSNO's at the same setting.  
    GSNO remains the most computationally efficient, outperforming all baselines in training speed across temporal configurations.
    }
    \label{fig:runtime_comparison_nse}
\end{figure}

\newpage

\subsection{Additional results for Hyper-Elasticity Equation}

Figure~\ref{fig:sample_hyperelasticity} shows representative GSNO predictions for the hyper-elasticity benchmark with different void configurations. These examples illustrate that the model can recover the stress field across geometry-dependent solid domains with varying internal structures.

Table~\ref{tab:elasticity_relL2_rmse_mae} summarizes the quantitative results for the stress prediction task, including per-epoch runtime, relative \(L_2\), RMSE, and MAE. GSNO obtains the lowest error values while also maintaining the fastest training time among the compared methods.

\begin{figure}[!h]
    \centering
    \includegraphics[width=0.8\textwidth]{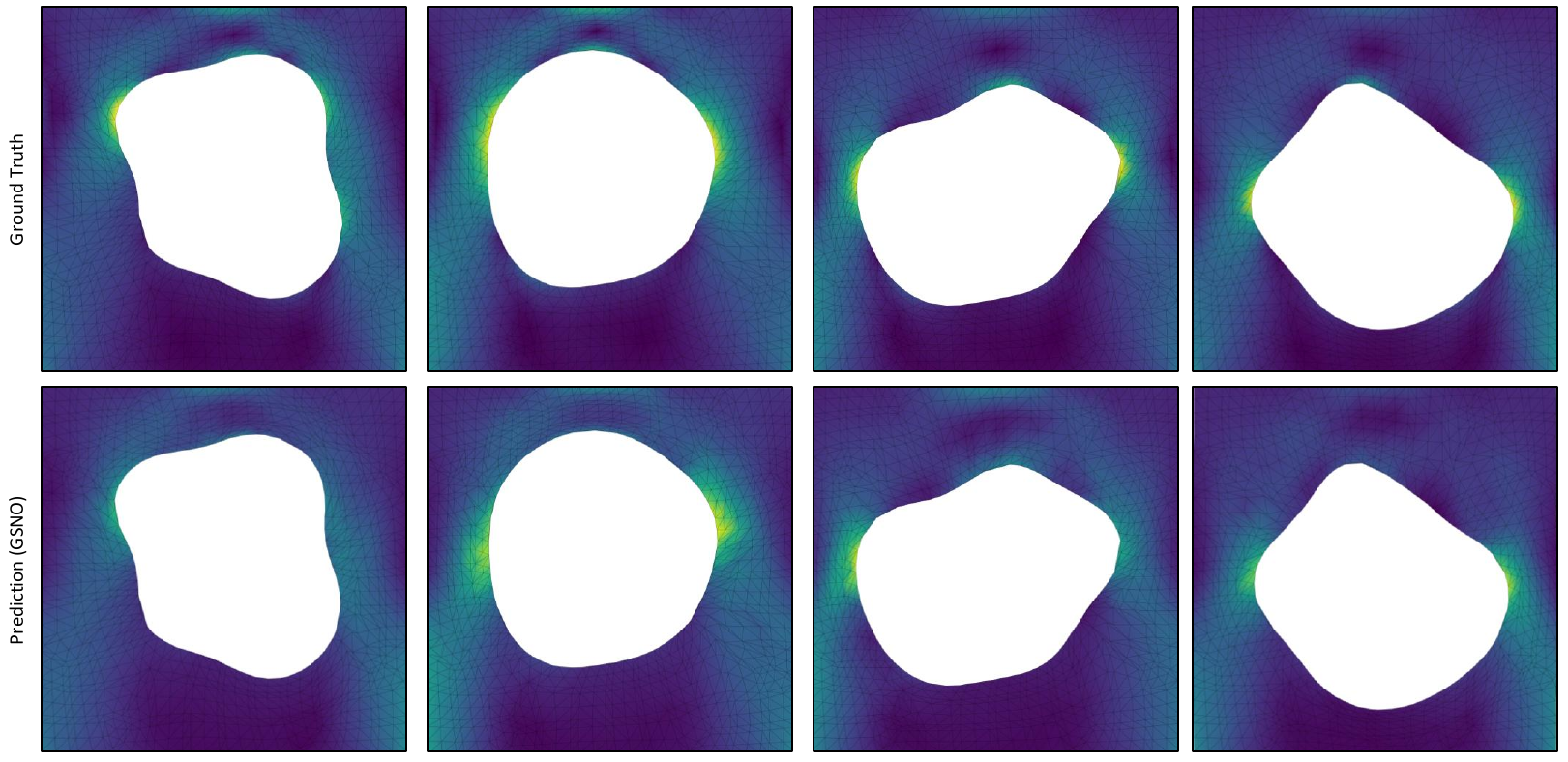}
    \captionsetup{font=footnotesize}
    \caption{Representative GSNO predictions for the hyper-elasticity benchmark with geometry-dependent voids. Each row corresponds to a distinct solid configuration, showing that the model can capture stress responses across different internal void patterns.}
    \label{fig:sample_hyperelasticity}
\end{figure}

\begin{table}[h]
\centering
\captionsetup{font=footnotesize}
\caption{Comparison of neural operator models on \textbf{Hyper-Elasticity} (\(N_s=972\))}
\label{tab:elasticity_relL2_rmse_mae}
\scriptsize
\resizebox{0.42\textwidth}{!}{%
{\setlength{\tabcolsep}{3pt}\renewcommand{\arraystretch}{1.1}%
\begin{tabular}{@{}l|c|ccc@{}}
\toprule
\multirow{2}{*}{\textbf{Model}}
& \multirow{2}{*}{\textbf{CPU (s/epoch)}}
& \multicolumn{3}{c}{\textbf{Stress field}} \\
& & \textbf{Rel.\(L_2\)} & \textbf{RMSE} & \textbf{MAE} \\
\midrule

CORAL      & 5.58 & 0.1091 & \(3.18 \times 10^{-1}\) & \(2.54 \times 10^{-1}\) \\
Geo-FNO    & 6.75 & 0.0937 & \(1.51 \times 10^{-1}\) & \(1.21 \times 10^{-1}\) \\
MGKN       & 6.00 & 0.0795 & \(1.06 \times 10^{-1}\) & \(8.47 \times 10^{-2}\) \\
DeepONet   & 7.17 & 0.1335 & \(3.11 \times 10^{-1}\) & \(2.49 \times 10^{-1}\) \\
AMG        & 14.78 & 0.0770 & \(1.02 \times 10^{-1}\) & \(8.20 \times 10^{-2}\) \\
SP$^2$GNO  & 4.27 & 0.1115 & \(2.81 \times 10^{-1}\) & \(2.25 \times 10^{-1}\) \\
GNOT       & 2.81 & 0.1097 & \(3.06 \times 10^{-1}\) & \(2.44 \times 10^{-1}\) \\
Transolver & 2.54 & 0.0827 & \(1.10 \times 10^{-1}\) & \(8.80 \times 10^{-2}\) \\
\textbf{GSNO} & \textbf{2.15} & \textbf{0.0271} & \(\mathbf{3.61 \times 10^{-2}}\) & \(\mathbf{2.88 \times 10^{-2}}\) \\

\bottomrule
\end{tabular}%
}%
}
\end{table}

\newpage

\subsection{Additional results for Pipe Flow}

Figure~\ref{fig:sample_pipe} presents representative GSNO predictions for the pipe flow benchmark across several curved pipe geometries. The examples show that GSNO can capture the velocity response over different geometric configurations, indicating its ability to handle shape variability within the same problem class.

Table~\ref{tab:pipe_relL2_rmse_mae} reports the quantitative comparison for the pipe flow case in terms of training time, relative \(L_2\), RMSE, and MAE. GSNO achieves the lowest prediction errors while also requiring the smallest per-epoch runtime among the compared models.

\begin{figure}[!h]
    \centering
    \includegraphics[width=0.8\textwidth]{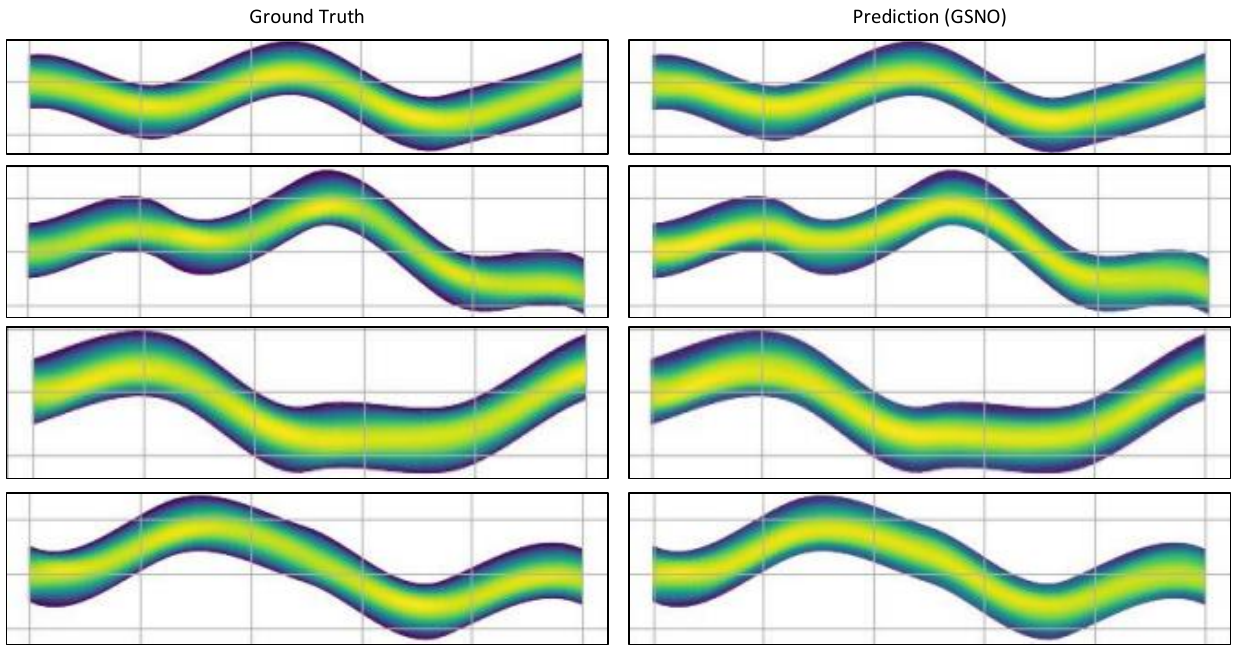}
    \captionsetup{font=footnotesize}
    \caption{Sample predictions of GSNO on pipe flow geometries with varying shapes. Each row shows a different geometry, illustrating the model’s ability to generalize across a variety of curved pipe configurations.}
    \label{fig:sample_pipe}
\end{figure}

\begin{table}
\centering
\captionsetup{font=footnotesize}
\caption{Comparison of neural operator models on \textbf{Pipe Flow} (\(N_s=4225\))}
\label{tab:pipe_relL2_rmse_mae}
\scriptsize
\resizebox{0.4\textwidth}{!}{%
{\setlength{\tabcolsep}{3pt}\renewcommand{\arraystretch}{1.1}%
\begin{tabular}{@{}l|c|ccc@{}}
\toprule
\multirow{2}{*}{\textbf{Model}}
& \multirow{2}{*}{\textbf{CPU (s/epoch)}}
& \multicolumn{3}{c}{\textbf{Velocity}} \\
& & \textbf{Rel.\(L_2\)} & \textbf{RMSE} & \textbf{MAE} \\
\midrule
CORAL      & 18.7 & 0.0461 & \(7.54 \times 10^{-2}\) & \(6.03 \times 10^{-2}\) \\
Geo-FNO    & 22.6 & 0.0501 & \(8.19 \times 10^{-2}\) & \(6.55 \times 10^{-2}\) \\
MGKN       & 20.1 & 0.0300 & \(4.91 \times 10^{-2}\) & \(3.93 \times 10^{-2}\) \\
DeepONet   & 24.0 & 0.0371 & \(6.06 \times 10^{-2}\) & \(4.85 \times 10^{-2}\) \\
AMG        & 49.5 & 0.0293 & \(4.79 \times 10^{-2}\) & \(3.83 \times 10^{-2}\) \\
SP$^2$GNO  & 14.3 & 0.0394 & \(6.43 \times 10^{-2}\) & \(5.14 \times 10^{-2}\) \\
GNOT       & 9.4  & 0.0436 & \(7.13 \times 10^{-2}\) & \(5.70 \times 10^{-2}\) \\
Transolver & 8.5  & 0.0310 & \(5.08 \times 10^{-2}\) & \(4.06 \times 10^{-2}\) \\
\textbf{GSNO} & \textbf{7.2} & \textbf{0.0142} & \(\mathbf{2.32 \times 10^{-2}}\) & \(\mathbf{1.86 \times 10^{-2}}\) \\
\bottomrule
\end{tabular}%
}%
}
\end{table}

\clearpage
\subsection{Additional results for 2D Shallow Water Equations Equation}

Figure~\ref{fig:swe_zero_shot} highlights GSNO’s zero-shot super-resolution capability on the shallow water equations, accurately transferring predictions from a coarse to a finer mesh.

\begin{figure}
  \centering
  \includegraphics[width=0.95\linewidth]{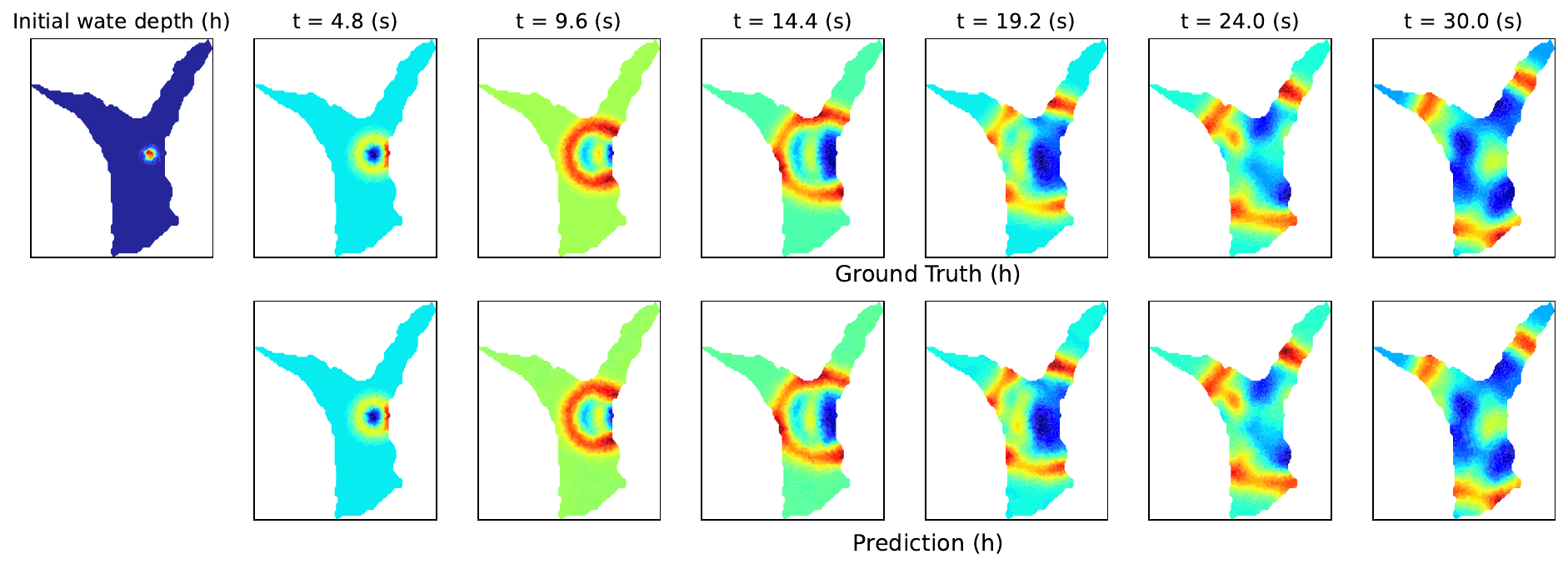}
  \captionsetup{font=footnotesize}
  \caption{Shallow water simulation over a realistic lake-basin geometry, showing the evolution of the water-height field \(h(x,y,t)\) over 30 seconds. GSNO is trained on a coarse mesh with \(N_s=1832\) nodes and evaluated zero-shot on a finer mesh with \(N_s=3663\) nodes without retraining. The top row shows the ground truth water-height fields, while the bottom row shows the corresponding GSNO predictions.}
  \label{fig:swe_zero_shot}
\end{figure}

Figure~\ref{fig:gsno-accuracy-gap_swe} reports the generalization accuracy of GSNO on the 2D Shallow Water Equations benchmark across different mesh resolutions. Figure~\ref{fig:runtime_comparison_swe} further compares the training efficiency of all models under varying temporal input-output settings, highlighting GSNO’s computational advantages.

\begin{table}
\centering
\captionsetup{font=footnotesize}
\caption{Comparison of neural operator models on Shallow Water Equation (\( N_s = 1830 \)).}
\label{tab:swe_metrics}
\scriptsize
\resizebox{0.98\textwidth}{!}{%
\begin{tabular}{l|ccc|ccc|ccc|ccc}
\toprule
\multirow{2}{*}{\textbf{Model}} 
& \multicolumn{3}{c|}{Temporal Config: 1$\rightarrow$50} 
& \multicolumn{3}{c|}{Temporal Config: 3$\rightarrow$48} 
& \multicolumn{3}{c|}{Temporal Config: 5$\rightarrow$46} 
& \multicolumn{3}{c}{Temporal Config: 10$\rightarrow$41} \\
\cmidrule(lr){2-4} \cmidrule(lr){5-7} \cmidrule(lr){8-10} \cmidrule(lr){11-13}
& Rel $L_2$ & RMSE & MAE 
& Rel $L_2$ & RMSE & MAE 
& Rel $L_2$ & RMSE & MAE 
& Rel $L_2$ & RMSE & MAE \\
\midrule
CORAL
& 0.1784 & \(1.78 \times 10^{-1}\) & \(1.52 \times 10^{-1}\)
& 0.1534 & \(1.53 \times 10^{-1}\) & \(1.31 \times 10^{-1}\)
& 0.1238 & \(1.24 \times 10^{-1}\) & \(1.07 \times 10^{-1}\)
& 0.1162 & \(1.16 \times 10^{-1}\) & \(1.00 \times 10^{-1}\) \\
Geo-FNO
& 0.2112 & \(2.11 \times 10^{-1}\) & \(1.79 \times 10^{-1}\)
& 0.1848 & \(1.85 \times 10^{-1}\) & \(1.58 \times 10^{-1}\)
& 0.1650 & \(1.65 \times 10^{-1}\) & \(1.41 \times 10^{-1}\)
& 0.1526 & \(1.53 \times 10^{-1}\) & \(1.31 \times 10^{-1}\) \\
MGKN
& \underline{0.1128} & \(1.13 \times 10^{-1}\) & \(9.79 \times 10^{-2}\)
& \underline{0.0876} & \(8.76 \times 10^{-2}\) & \(7.63 \times 10^{-2}\)
& 0.0724 & \(7.24 \times 10^{-2}\) & \(6.32 \times 10^{-2}\)
& 0.0668 & \(6.68 \times 10^{-2}\) & \(5.84 \times 10^{-2}\) \\
DeepONet
& 0.1284 & \(1.28 \times 10^{-1}\) & \(1.11 \times 10^{-1}\)
& 0.1128 & \(1.13 \times 10^{-1}\) & \(9.79 \times 10^{-2}\)
& 0.0982 & \(9.82 \times 10^{-2}\) & \(8.53 \times 10^{-2}\)
& 0.0904 & \(9.04 \times 10^{-2}\) & \(7.87 \times 10^{-2}\) \\
AMG
& 0.1210 & \(1.21 \times 10^{-1}\) & \(1.04 \times 10^{-1}\)
& 0.0890 & \(8.90 \times 10^{-2}\) & \(7.65 \times 10^{-2}\)
& 0.0692 & \(6.92 \times 10^{-2}\) & \(5.95 \times 10^{-2}\)
& 0.0630 & \(6.30 \times 10^{-2}\) & \(5.42 \times 10^{-2}\) \\
GNOT
& 0.2150 & \(2.15 \times 10^{-1}\) & \(1.85 \times 10^{-1}\)
& 0.1534 & \(1.53 \times 10^{-1}\) & \(1.32 \times 10^{-1}\)
& 0.1093 & \(1.09 \times 10^{-1}\) & \(9.40 \times 10^{-2}\)
& 0.0995 & \(9.95 \times 10^{-2}\) & \(8.56 \times 10^{-2}\) \\
SP$^2$GNO
& 0.1752 & \(1.75 \times 10^{-1}\) & \(1.51 \times 10^{-1}\)
& 0.1250 & \(1.25 \times 10^{-1}\) & \(1.08 \times 10^{-1}\)
& 0.0891 & \(8.91 \times 10^{-2}\) & \(7.66 \times 10^{-2}\)
& 0.0810 & \(8.10 \times 10^{-2}\) & \(6.97 \times 10^{-2}\) \\
Transolver
& 0.1354 & \(1.35 \times 10^{-1}\) & \(1.16 \times 10^{-1}\)
& 0.0965 & \(9.65 \times 10^{-2}\) & \(8.40 \times 10^{-2}\)
& \underline{0.0689} & \(6.89 \times 10^{-2}\) & \(5.93 \times 10^{-2}\)
& \underline{0.0625} & \(6.25 \times 10^{-2}\) & \(5.38 \times 10^{-2}\) \\
\textbf{GSNO}
& \textbf{0.0375} & \(\mathbf{3.75 \times 10^{-2}}\) & \(\mathbf{3.30 \times 10^{-2}}\)
& \textbf{0.0268} & \(\mathbf{2.68 \times 10^{-2}}\) & \(\mathbf{2.36 \times 10^{-2}}\)
& \textbf{0.0193} & \(\mathbf{1.93 \times 10^{-2}}\) & \(\mathbf{1.70 \times 10^{-2}}\)
& \textbf{0.0174} & \(\mathbf{1.74 \times 10^{-2}}\) & \(\mathbf{1.54 \times 10^{-2}}\) \\
\bottomrule
\end{tabular}
}
\end{table}

\begin{figure}
    \centering
    \includegraphics[width=0.95\textwidth]{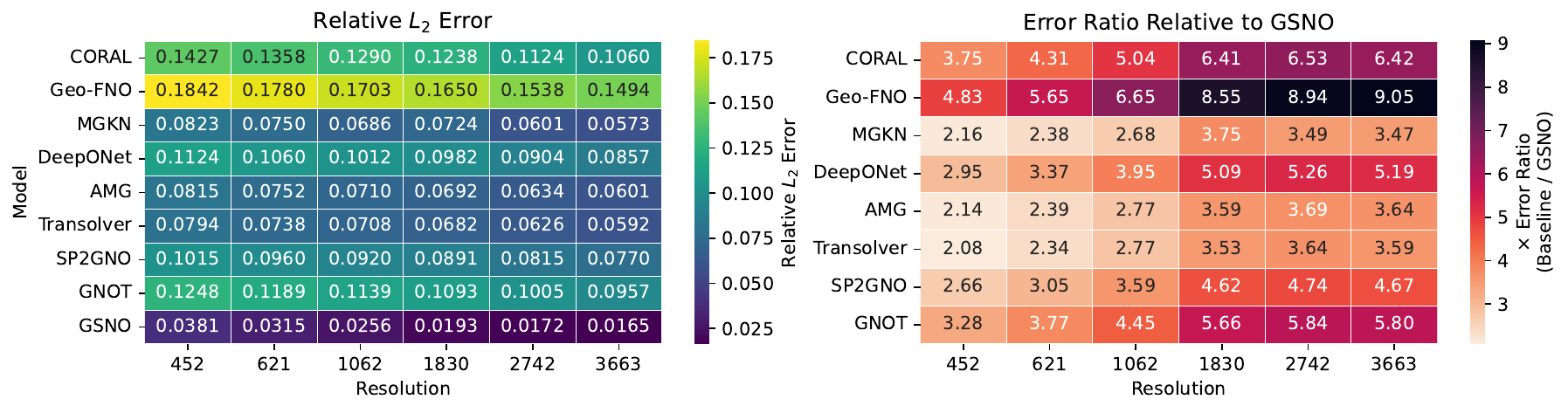}
    \captionsetup{font=footnotesize}
    \caption{
    Resolution-based generalization comparison for the 2D Shallow Water Equation (SWE).  
    \textbf{Left:} Relative $L_2$ error across increasing mesh resolutions for all models.  
    \textbf{Right:} Accuracy gap with respect to GSNO, computed as the ratio of each model’s error to GSNO’s error at each resolution.  
    All models are trained and tested on matching unstructured meshes using \( T_{\text{in}} = 5 \rightarrow T_{\text{out}} = 46 \).  
    GSNO consistently delivers the highest predictive accuracy across spatial scales, with improvements of up to $7\times$ over baseline methods.
    }
    \label{fig:gsno-accuracy-gap_swe}
\end{figure}

\begin{figure}[!t]
    \centering
    \includegraphics[width=0.95\textwidth]{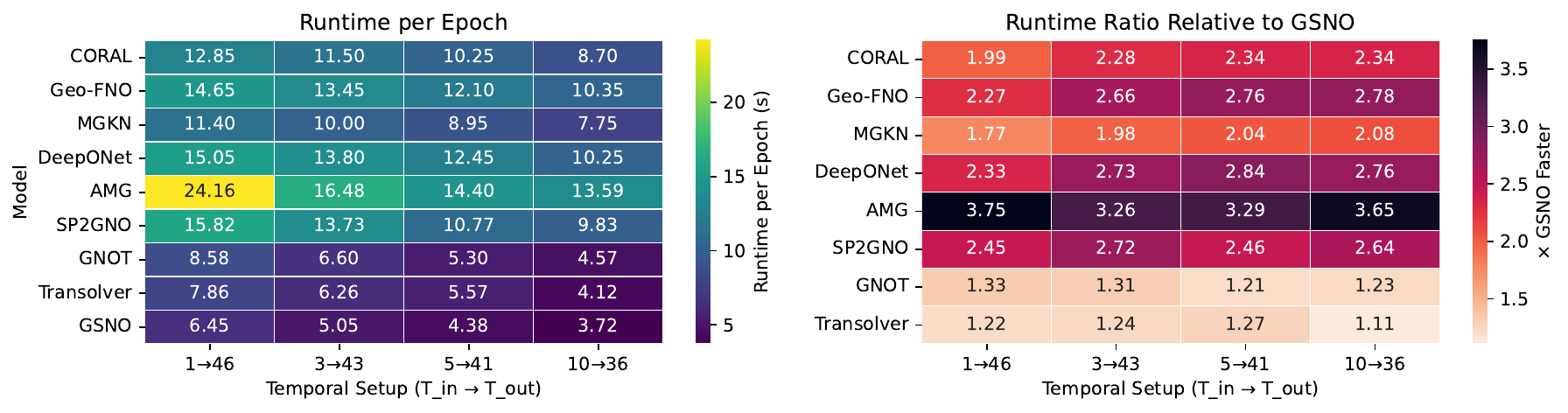}
    \captionsetup{font=footnotesize}
    \caption{
    \textbf{Training efficiency of neural operator models on the 2D Shallow Water Equations at resolution \( N_s = 3663 \).}  
    \textbf{Left:} Average runtime per epoch (in seconds) for various temporal configurations (\( T_{\text{in}} \rightarrow T_{\text{out}} \)).  
    \textbf{Right:} Relative runtime overhead, calculated as the ratio of each model’s epoch runtime to GSNO’s.  
    GSNO achieves the best training efficiency across all settings, offering significant speedups over baseline models in time-dependent simulations.
    }
    \label{fig:runtime_comparison_swe}
\end{figure}

\newpage
\section{GSNO Hyperparameters}
\label{appendix:hyperparams}
\renewcommand{\thefigure}{H.\arabic{figure}}  
\renewcommand{\thetable}{H.\arabic{table}}  
\setcounter{figure}{0}
\setcounter{table}{0}

\subsection{GSNO Architecture, Training Setup, and Sensitivity to the Number of Spatial and Temporal Modes}
\label{app:mode-sensitivity}

Two important hyperparameters in GSNO are the number of spatial modes $k_s$ and the number of temporal modes $k_t$. Together they control the trade-off between accuracy, efficiency, and generalization. Larger values capture finer details but increase computation and can lead to overfitting, while smaller values act as a regularizer but may lose important information. The parameter $k_s$ determines how many graph Laplacian eigenvectors are used to form the spatial spectral basis. We fix $k_t = 8$ and evaluate $k_s \in \{4,6,8,16,32\}$ on the 2D Navier–Stokes case ($N_s=1244$, Temporal Config: $5 \!\rightarrow\! 46$). We report Relative $L^2$ error (lower is better) and CPU time per epoch.

\begin{table}[h]
\centering
\caption{\textbf{Sensitivity of GSNO to spatial modes $k_s$} on 2D Navier--Stokes ($N_s{=}1244$, $k_t{=}8$).}
\label{tab:ks_sensitivity}
\resizebox{0.48\textwidth}{!}{%
\begin{tabular}{lcc}
\toprule
\textbf{Spatial Modes $k_s$} & \textbf{Relative $L^2$ Error} & \textbf{CPU Time / epoch (s)} \\
\midrule
4  & 0.0412 & 2.11 \\
6  & 0.0351 & 3.85 \\
\textbf{8}  & \textbf{0.0164} & \textbf{4.17} \\
16 & 0.0218 & 7.12 \\
32 & 0.0275 & 10.98 \\
\bottomrule
\end{tabular}
}
\end{table}

As shown in Table~\ref{tab:ks_sensitivity}, performance follows a U–shaped curve: very small $k_s$ underfits, very large $k_s$ increases cost and slightly hurts generalization. The sweet spot is $k_s=8$, which minimizes error with moderate runtime. In practice, values between $6$ and $10$ work well. Next, we fix $k_s = 8$ (the optimal setting above) and vary $k_t$ to see its impact. The parameter $k_t$ controls how many Fourier modes are used along the temporal dimension. Larger $k_t$ can improve long-term dynamics but at the expense of slower training.

\begin{table}[h]
\centering
\scriptsize
\caption{\textbf{Sensitivity of GSNO to temporal modes $k_t$} on 2D Navier--Stokes ($N_s{=}1244$, $k_s{=}8$).}
\label{tab:kt_sensitivity}
\resizebox{0.48\textwidth}{!}{%
\begin{tabular}{lcc}
\toprule
\textbf{Temporal Modes $k_t$} & \textbf{Relative $L^2$ Error} & \textbf{CPU Time / epoch (s)} \\
\midrule
4  & 0.0287 & 3.02 \\
6  & 0.0219 & 3.65 \\
\textbf{8}  & \textbf{0.0164} & \textbf{4.17} \\
12 & 0.0182 & 5.46 \\
16 & 0.0235 & 6.88 \\
\bottomrule
\end{tabular}
}
\end{table}

Table~\ref{tab:kt_sensitivity} shows a similar trend: too few temporal modes limit accuracy, while too many slow training and slightly degrade generalization. The best balance occurs at $k_t=8$.

Overall, GSNO achieves the best trade-off when both $k_s$ and $k_t$ are chosen in the mid-range. Too few modes limit expressivity, while too many increase cost and risk overfitting. Based on our experiments, $k_s=8$ and $k_t=8$ provide a strong default setting for 2D Navier–Stokes and related PDE benchmarks. We therefore used this configuration in our main experiments.

Table~\ref{tab:gsno_hyperparameters} summarizes the architecture and training hyperparameters used for GSNO across all benchmark PDEs. The columns "$k_s$" and "$k_t$" represent the number of retained spectral modes in the spatial (graph Laplacian) and temporal (FFT) domains, respectively. The "Width" column denotes the latent feature dimensionality throughout the GSNO blocks. Each block includes a learnable 4D spectral kernel and a residual 1×1 convolution branch, followed by a GELU nonlinearity. Inputs and outputs are min-max normalized per dataset. All GSNO models are trained using the Adam optimizer with a batch size of 32.

\begin{table}[h]
\centering
\captionsetup{font=footnotesize}
\caption{GSNO architecture and training hyperparameters.}
\label{tab:gsno_hyperparameters}
\scriptsize
\resizebox{0.95\textwidth}{!}{%
\begin{tabular}{lccccccccccc}
\toprule
\textbf{PDE Case} & $k_s$ & $k_t$ & Width & \makecell{Lifting \\ MLP} & GSNO Layers & \makecell{Projection \\ MLP} & \makecell{Spatial \\ Branch} & Nonlinearity & \#Params & LR & Epochs \\
\midrule
Darcy Flow        
& 8  & -- & 20 
& 1-layer 
& 4 
& 2-layer: (20 $\times$ 128), (128 $\times$ 1) 
& Conv 1$\times$1 
& GELU 
& 20,817 
& 0.001 
& 1000 \\
 2D Airfoil& 8& -- & 20& 1-layer & 4& 2-layer: (20 $\times$ 128), (128 $\times$ 1) & Conv 1$\times$1 & GELU & 20,817 & 0.001 &1000 \\
 Shape Net 3d Car& 8& -& 40& 1-layer & 4& 2-layer: (20 $\times$ 128), (128 $\times$ 1) & Conv 1$\times$1 & GELU & 42,516& 0.001&1000\\
Burgers’ Equation 
& 8  & 8  & 40 
& 1-layer & 4 
& 2-layer: (20 $\times$ 128), (128 $\times$ 2)& Conv 1$\times$1 & GELU & 431,108& 0.001& 1000 \\
 Navier–Stokes     & 8& 8& 40& 1-layer & 4& 2-layer: (20 $\times$ 128), (128 $\times$ 1) & Conv 1$\times$1 & GELU & 430,857 & 0.001&1000
\\
Shallow water& 8  & 8  & 40 
& 1-layer & 4 
& 2-layer: (20 $\times$ 128), (128 $\times$ 1) & Conv 1$\times$1 
& GELU 
& 430,857 & 0.001 
& 1000 \\
\bottomrule
\end{tabular}
}
\end{table}

\clearpage
\section{Baseline Models Overview, Key Differences, and Hyperparameters}
\label{appendix:baseline_comparison}
\renewcommand{\thefigure}{I.\arabic{figure}}  
\renewcommand{\thetable}{I.\arabic{table}}  
\setcounter{figure}{0}
\setcounter{table}{0}

We compare GSNO against a diverse set of state-of-the-art neural operator models: \textbf{DeepONet}, \textbf{MGKN}, \textbf{CORAL}, \textbf{Geo-FNO}, \textbf{GNOT}, \textbf{Transolver}, and \textbf{AMG}. Each baseline embodies a distinct philosophy for operator learning on irregular or multi-scale domains, ranging from dual-network architectures and kernel-based graph operators to mesh-free latent encodings, Fourier-based domain warping, transformer-driven attention mechanisms, and multi-graph constructions. We briefly summarize their architectures below before presenting a detailed comparison.

\textbf{DeepONet}~\cite{lu2021learning} employs a dual-network structure: a branch network encodes the input function (e.g., coefficients or initial conditions), while a trunk network processes spatial or spatiotemporal coordinates. The outputs of both networks are combined via an inner product to yield the final prediction. This design enables flexible, pointwise evaluation of the solution operator but does not incorporate mesh structure or explicit spectral modeling. As a result, DeepONet's generalization can be sensitive to the distribution and quality of sampled input points, especially on highly irregular domains.

\textbf{MGKN}~\cite{li2020multipole} extends the kernel integral operator framework using learned multipole kernels over graphs built from unstructured meshes. It models spatial interactions by encoding inputs and applying graph convolutions based on Delaunay connectivity. MGKN effectively captures spatial dependencies but does not exploit temporal structure spectrally. Temporal dynamics are typically modeled through standard sequence processing methods, limiting their capability to globally capture long-range temporal dependencies.

\textbf{CORAL}~\cite{serrano2023operator} proposes a mesh-free, coordinate-based neural operator framework. It encodes input data into a latent space using MLPs and reconstructs outputs through coordinate queries. While this design allows CORAL to flexibly generalize across different geometries, it lacks structured spatial priors such as Laplacian smoothness or graph connectivity, which can limit its ability to capture long-range or multi-scale spatial correlations efficiently.

\textbf{Geo-FNO}~\cite{li2023fourier} adapts Fourier Neural Operators to irregular domains by learning a mapping from the physical domain to a latent uniform grid using a transformer encoder. Standard Fourier convolutions are then applied in the latent space. Although Geo-FNO preserves the advantages of global receptive fields inherent to Fourier methods, its performance depends critically on the smoothness and quality of the learned domain warping. In highly complex domains with topological irregularities or sharp features, this warping may introduce distortions, reducing model accuracy and stability.

\textbf{GNOT}~\citep{hao2023gnot} introduces a transformer-based neural operator designed to jointly address three core challenges in operator learning: irregular meshes, multiple input functions, and multi-scale physical dynamics. Its architecture is built around a \emph{heterogeneous normalized attention (HNA)} mechanism, which encodes arbitrary types of inputs (e.g., boundary shapes, parameters, or distributed functions) into a unified representation and applies efficient cross- and self-attention with linear complexity. This enables flexible handling of irregular discretizations and diverse inputs. In addition, GNOT incorporates a \emph{geometric gating mechanism}, inspired by domain decomposition, which adaptively assigns different expert subnetworks to regions of the domain. This soft domain decomposition allows the model to capture multi-scale phenomena more effectively.

\textbf{SP$^2$GNO} \citep{sarkar2025spatio} adopts a hybrid design that couples truncated spectral graph convolutions with a spatial message-passing branch gated by Lipschitz positional embeddings. While this dual-path strategy balances local and global modeling, it introduces architectural complexity and runtime overhead due to dynamic gating and stacked GNN layers. More critically, SP$^2$GNO treats the temporal dimension implicitly through autoregressive rollout of spatial layers, which limits its ability to capture long-range correlations and global frequency structure. In contrast, \textbf{GSNO} follows a simpler and more principled approach: it directly leverages the graph Laplacian eigenbasis for spatial spectral learning and augments it with real Fourier transforms along the temporal dimension, forming a joint space–time spectral kernel without auxiliary gating. This unified treatment eliminates error accumulation from autoregression, avoids over-smoothing, and reduces computation. As a result, GSNO achieves higher efficiency and scalability, with faster runtimes, lower memory footprints, and stronger mesh-invariant generalization, while SP$^2$GNO remains sensitive to graph construction choices. Empirical results confirm that GSNO consistently surpasses SP$^2$GNO in both accuracy and efficiency across steady-state and time-dependent PDE benchmarks.

\textbf{Transolver}~\citep{wu2024transolver} is a transformer-based neural operator specifically designed for solving PDEs across diverse geometries and boundary conditions. Unlike models that rely on fixed grids or handcrafted kernels, Transolver treats operator learning as a sequence-to-sequence problem. It encodes input functions and geometric features through a transformer encoder, then reconstructs solution fields using a decoder equipped with spectral attention. A key design is its ability to incorporate positional and geometric encodings that allow it to directly handle irregular domains without requiring domain warping. By leveraging long-range self-attention, Transolver captures global dependencies in both space and time, which improves its robustness on PDE benchmarks with complex dynamics.

\textbf{AMG}~\citep{li2025harnessing} introduces a \textit{multi-graph neural operator} framework designed to solve PDEs on arbitrary geometries. Its key innovation is the use of three complementary graphs: a \textit{local graph} that captures fine-scale, high-frequency interactions, a \textit{global graph} that encodes broad spatial dependencies, and a \textit{physics graph} that incorporates physical priors into the representation. These graphs are processed through a novel \textbf{GraphFormer} block with dynamic graph attention, which generalizes attention as a learnable integral operator over irregular domains. This design allows AMG to balance local detail and global coherence, while explicitly grounding predictions in physical attributes. Unlike purely spectral or kernel-based models, AMG can adapt to highly complex geometries and dynamically changing meshes.

\begin{table}
\centering
\captionsetup{font=footnotesize}
\caption{Comparison of GSNO with representative neural operator methods for irregular domains. 
\(\checkmark\) indicates explicit support, \(\times\) indicates absence, and \(\sim\) indicates partial or indirect support.}
\label{tab:comparison_prior}
\vspace{0.3em}

\resizebox{\textwidth}{!}{%
\small
\setlength{\tabcolsep}{4pt}
\begin{tabular}{@{}l*{9}{c}@{}}
\toprule
\textbf{Feature}
& \textbf{DeepONet}
& \textbf{MGKN}
& \textbf{CORAL}
& \textbf{Geo-FNO}
& \textbf{GNOT}
& \textbf{Transolver}
& \textbf{AMG}
& \textbf{SP$^2$GNO}
& \textbf{GSNO} \\
\midrule
\multicolumn{10}{@{}l}{\textit{Spectral \& Invariance Properties}} \\
\addlinespace[0.1em]
Space spectral learning
& \(\times\)
& \(\sim\)
& \(\times\)
& \(\sim\)
& \(\times\)
& \(\times\)
& \(\sim\)
& \(\checkmark\)
& \(\checkmark\) \\
Time spectral learning
& \(\times\)
& \(\times\)
& \(\times\)
& \(\times\)
& \(\times\)
& \(\sim\)
& \(\times\)
& \(\times\)
& \(\checkmark\) \\
Mesh-invariant inference
& \(\sim\)
& \(\sim\)
& \(\checkmark\)
& \(\sim\)
& \(\checkmark\)
& \(\checkmark\)
& \(\checkmark\)
& \(\sim\)
& \(\checkmark\) \\
\midrule
\multicolumn{10}{@{}l}{\textit{Architecture Mechanisms}} \\
\addlinespace[0.1em]
Requires GNN stacks
& \(\times\)
& \(\checkmark\)
& \(\times\)
& \(\times\)
& \(\times\)
& \(\times\)
& \(\checkmark\)
& \(\checkmark\)
& \(\times\) \\
Attention mechanism
& \(\times\)
& \(\times\)
& \(\times\)
& \(\times\)
& \(\checkmark\)
& \(\checkmark\)
& \(\checkmark\)
& \(\times\)
& \(\times\) \\
Multi-graph / gating
& \(\times\)
& \(\times\)
& \(\times\)
& \(\times\)
& \(\checkmark\)
& \(\times\)
& \(\checkmark\)
& \(\checkmark\)
& \(\times\) \\
\midrule
\multicolumn{10}{@{}l}{\textit{Global Space--Time Modeling}} \\
\addlinespace[0.1em]
Joint spectral kernel
& \(\times\)
& \(\times\)
& \(\times\)
& \(\times\)
& \(\times\)
& \(\sim\)
& \(\sim\)
& \(\times\)
& \(\checkmark\) \\
\bottomrule
\end{tabular}
}

\vspace{0.4em}

\begin{minipage}{\textwidth}
\footnotesize
\textit{Note.} Partial entries indicate indirect or limited support, such as spectral operations after domain warping, attention-based global mixing, multipole kernels, or graph constructions that depend on learned or approximate connectivity rather than a fixed graph Laplacian spectral basis.
\end{minipage}
\end{table}

\subsection{Key Differences Compared to GSNO.}
\label{appendix:baseline_diff}

Unlike \textbf{Geo-FNO}, GSNO does not rely on learned domain warping to a latent grid. Instead, it operates directly on the physical mesh using Delaunay-based graphs and fixed Laplacian eigenvectors, preserving native geometry without distortion. Compared to \textbf{DeepONet} and \textbf{CORAL}, which lack explicit spectral structure, GSNO projects features into a spatial spectral basis, capturing global correlations across irregular domains. Relative to \textbf{MGKN}, which learns multipole graph kernels but does not address temporal dynamics spectrally, GSNO introduces a real-valued Fourier transform in the temporal dimension, enabling a joint space–time spectral kernel for coherent dynamical modeling. Compared to transformer-based operators, GSNO follows a lighter but more structured approach. Unlike \textbf{GNOT} and \textbf{Transolver}, which rely on heavy multi-head attention, GSNO avoids quadratic attention costs by restricting spectral learning to graph Laplacians and Fourier modes, while still capturing global dependencies. Unlike the \textbf{AMG} multi-graph strategy that aggregates local, global, and physics graphs, GSNO emphasizes a single spectral basis with lightweight $1{\times}1$ convolutional residual paths, reducing computational complexity while retaining generalization. Finally, while the \textbf{SP$^2$GNO}~\citep{sarkar2025spatio} framework combines truncated Laplacian eigenbasis filtering with gated spatial GNN layers, this hybrid design introduces architectural complexity and depends on k-NN graph construction and message passing. More importantly, SP$^2$GNO lacks an explicit temporal spectral module, instead modeling time implicitly through stacked GNN updates, which limits its ability to capture long-range spatiotemporal correlations. In contrast, GSNO integrates both graph-based spatial spectra and Fourier temporal spectra into a unified space–time kernel, achieving superior accuracy and efficiency without relying on recurrent or autoregressive iterations. These design choices allow GSNO to maintain mesh-invariant generalization (via Laplacian recomputation), minimize overhead, and deliver robust accuracy across steady-state and time-dependent PDEs. As our experiments demonstrate, GSNO achieves higher predictive accuracy and efficiency compared to all baselines, while requiring fewer architectural components. The architectural and functional differences between GSNO and the baselines are summarized in Table~\ref{tab:comparison_prior}.

\subsection{Hyperparameters for Baseline Models}
\label{appendix:baseline_hyper}

All baseline models are retrained under identical data splits and training settings as GSNO to ensure a fair and consistent comparison. Detailed hyperparameter configurations for each model are provided in Tables~\ref{tab:hparams_deeponet}–\ref{tab:hparams_geofno}.

\begin{table}[h!]
\centering
\captionsetup{font=footnotesize}
\begin{minipage}[t]{0.48\textwidth}
\centering
\caption{Hyperparameters for \textbf{DeepONet}.}
\label{tab:hparams_deeponet}
\tiny
\begin{tabular}{lc}
\toprule
\textbf{Component} & \textbf{Configuration} \\
\midrule
Trunk Network      & 3-layer MLP, 100 hidden units, ReLU \\
Branch Network     & 2-layer MLP, 100 hidden units, ReLU \\
Input              & Coordinates on a grid \\
Output             & Pointwise function values \\
\bottomrule
\end{tabular}
\end{minipage}%
\hfill
\begin{minipage}[t]{0.48\textwidth}
\centering
\caption{\footnotesize Hyperparameters for \textbf{MGKN}.}
\label{tab:hparams_mgkn}
\tiny
\begin{tabular}{lc}
\toprule
\textbf{Component} & \textbf{Configuration} \\
\midrule
Input Encoder      & 3-layer MLP, 64 units, GELU \\
Decoder            & 2-layer MLP, 64 units, GELU \\
Graph Kernel       & Multipole (Gaussian RBF) \\
RBF Width ($\gamma$) & 1.0 \\
\bottomrule
\end{tabular}
\end{minipage}
\end{table}

\begin{table}[h!]
\centering
\captionsetup{font=footnotesize}
\begin{minipage}[t]{0.48\textwidth}
\centering
\caption{\footnotesize Hyperparameters for \textbf{CORAL}.}
\label{tab:hparams_coral}
\tiny
\begin{tabular}{lc}
\toprule
\textbf{Component} & \textbf{Configuration} \\
\midrule
Encoder              & 4-layer SIREN, width 128, $\omega_0 = 10$ \\
Latent Code          & 128-dimensional vector \\
Decoder              & 3-layer MLP, width 64 \\
Training Strategy    & Meta-learning (outer/inner loops) \\
Input Representation & Coordinate-based (mesh-free) \\
\bottomrule
\end{tabular}
\end{minipage}%
\hfill
\begin{minipage}[t]{0.48\textwidth}
\centering
\captionof{table}{\footnotesize Hyperparameters for \textbf{Geo-FNO}.}
\label{tab:hparams_geofno}
\tiny
\begin{tabular}{lc}
\toprule
\textbf{Component} & \textbf{Configuration} \\
\midrule
Input Encoder        & 3-layer MLP, width 32, sinusoidal encoding \\
Latent Mapping       & Learned warp to regular grid \\
Latent Grid          & Uniform FFT grid (2D for static, 3D for temporal PDEs) \\
Fourier Layers       & 4 layers, 8 retained modes, width 32 \\
Spectral Operation   & Complex-valued FFT on latent grid \\
\bottomrule
\end{tabular}
\end{minipage}
\end{table}

\begin{table}[h!]
\centering
\captionsetup{font=footnotesize}
\label{tab:hparams_sp2gno}
\tiny
\begin{minipage}[t]{0.48\textwidth}
\centering
\caption{\footnotesize SP$^2$GNO (Steady-state)}
\label{tab:hparams_sp2gno_static}
\begin{tabular}{lc}
\toprule
\textbf{Component} & \textbf{Configuration} \\
\midrule
Blocks ($L$)         & 6 \\
Hidden Width ($d$)   & 32 \\
Graph Construction   & k-NN ($k{=}16$) \\
Laplacian Basis      & First $m{=}32$ eigenvectors (LOBPCG) \\
Spectral Kernel      & $K\in\mathbb{R}^{m\times d\times d}$ (learnable) \\
Spatial Branch       & Gated GCN-style conv \\
Positional Encoding  & Lipschitz anchor embeddings \\
\bottomrule
\end{tabular}
\end{minipage}\hfill
\begin{minipage}[t]{0.48\textwidth}
\centering
\caption{\footnotesize SP$^2$GNO (Time-dependent)}
\label{tab:hparams_sp2gno_temporal}
\begin{tabular}{lc}
\toprule
\textbf{Component} & \textbf{Configuration} \\
\midrule
Blocks ($L$)           & 6 \\
Hidden Width ($d$)     & 32 \\
Graph Construction     & k-NN ($k{=}16$), fixed per frame \\
Laplacian Basis        & Reuse first $m{=}32$ eigenvectors \\
Temporal Handling      & Train $1{\to}1$; autoregressive rollout for multi-step \\
Rollout Settings       & Eval: $1{\to}K$ via iterative $1{\to}1$ predictions \\
\bottomrule
\end{tabular}
\end{minipage}
\end{table}

\begin{table}[h!]
\centering
\scriptsize
\captionsetup{font=footnotesize}
\begin{minipage}[t]{0.48\textwidth}
\centering
\caption{Hyperparameters for \textbf{GNOT}.}
\label{tab:hparams_gnot}
\tiny
\begin{tabular}{lc}
\toprule
\textbf{Component} & \textbf{Configuration} \\
\midrule
Attention Layers        & 4 \\
Hidden Size (Attention) & 256 \\
Embedding Dimension     & 256 \\
MLP Depth / Width       & 4 layers, 256 units \\
Attention Heads         & 8 \\
Experts (Geometric Gating) & 3 \\
\bottomrule
\end{tabular}
\end{minipage}%
\hfill
\begin{minipage}[t]{0.48\textwidth}
\centering
\scriptsize
\caption{\footnotesize Hyperparameters for \textbf{Transolver}.}
\label{tab:hparams_transolver}
\tiny
\begin{tabular}{lc}
\toprule
\textbf{Component} & \textbf{Configuration} \\
\midrule
Transformer Layers      & 6 \\
Embedding Dimension     & 256 \\
MLP Depth / Width       & 2 layers, 256 units \\
Attention Heads         & 8 \\
Spectral Attention Modes & 16 \\
Positional Encoding     & Sinusoidal + geometric features \\
\bottomrule
\end{tabular}
\end{minipage}
\end{table}

\begin{table}[h!]
\centering
\captionsetup{font=footnotesize}
\centering
\caption{Hyperparameters for \textbf{AMG}.}
\label{tab:hparams_amg}
\tiny
\begin{tabular}{lc}
\toprule
\textbf{Component} & \textbf{Configuration} \\
\midrule
Graph Types          & Local, Global, Physics \\
Processor Depth      & 3 GraphFormer layers \\
Local Node Number    & 1024 \\
Global Sample Ratio  & 75\% of nodes \\
Physics Nodes        & 32 \\
Attention Heads      & 8 \\
Hidden Size          & 256 \\
\bottomrule
\end{tabular}
\end{table}

\clearpage
\section{Memory footprints}
\label{appendix:foot}
\renewcommand{\thefigure}{J.\arabic{figure}}  
\renewcommand{\thetable}{J.\arabic{table}}  
\setcounter{figure}{0}
\setcounter{table}{0}

\noindent This appendix summarizes the computational footprint of all models. For the NSE and Darcy Flow setups, we report \emph{inference time per batch} and \emph{peak GPU memory} during training and inference at fixed $N_s$ and batch size, as summarized in Tables~\ref{tab:nse_runtime_mem_compact} and \ref{tab:darcy_runtime_mem_compact}.

\begin{table}[h!]
\centering
\captionsetup{font=footnotesize}
\caption{Inference time and memory footprint of models for the Darcy Flow case ($N_s{=}3421$, batch size $32$).}
\label{tab:darcy_runtime_mem_compact}
\tiny
\resizebox{0.75\textwidth}{!}{%
\begin{tabular}{lccc}
\toprule
\textbf{Model} & \textbf{Inference Time (s/batch)} & \textbf{Peak Training GPU Mem} & \textbf{Inference GPU Mem} \\
\midrule
CORAL       & $\sim$0.025 & $\sim$1.5 GB & $\sim$320 MB \\
MGKN        & $\sim$0.023 & $\sim$1.8 GB & $\sim$360 MB \\
Geo-FNO     & $\sim$0.030 & $\sim$2.1 GB & $\sim$370 MB \\
GNOT        & $\sim$0.027 & $\sim$2.0 GB & $\sim$380 MB \\
Transolver  & $\sim$0.025 & $\sim$1.9 GB & $\sim$380 MB \\
AMG         & $\sim$0.104 & $\sim$2.6 GB & $\sim$420 MB \\
SP$^2$GNO   & $\sim$0.023& $\sim$2.1 GB& $\sim$400 MB\\
GSNO        & $\sim$0.022 & $\sim$1.8 GB & $\sim$360 MB \\
\bottomrule
\end{tabular}%
}
\end{table}

\begin{table}[h!]
\centering
\captionsetup{font=footnotesize}
\caption{Inference time and memory footprint of models for the NSE case ($N_s{=}1244$, $5{\to}46$, batch size $32$).}
\label{tab:nse_runtime_mem_compact}
\tiny
\resizebox{0.75\textwidth}{!}{%
\begin{tabular}{lccc}
\toprule
\textbf{Model} & \textbf{Inference Time (s/batch)} & \textbf{Peak Training GPU Mem} & \textbf{Inference GPU Mem} \\
\midrule
CORAL      & $\sim$0.31  & $\sim$5.1 GB & $\sim$1.0 GB \\
MGKN       & $\sim$0.27  & $\sim$7.3 GB & $\sim$1.4 GB \\
Geo-FNO    & $\sim$0.32  & $\sim$6.4 GB & $\sim$1.1 GB \\
GNOT       & $\sim$0.261 & $\sim$6.2 GB & $\sim$1.1 GB \\
Transolver & $\sim$0.239 & $\sim$6.0 GB & $\sim$1.1 GB \\
AMG        & $\sim$0.992 & $\sim$8.5 GB & $\sim$1.2 GB \\
SP$^2$GNO  & $\sim$0.285 & $\sim$6.8 GB & $\sim$1.2 GB \\
GSNO       & $\sim$0.21  & $\sim$5.8 GB & $\sim$1.0 GB \\
\bottomrule
\end{tabular}%
}
\end{table}